\let\citet\cite
\title{Conformalized Adaptive Forecasting of Heterogeneous Trajectories}
\newtheorem{theorem}{Theorem}
\newcommand*\bigcdot{\mathpalette\bigcdot@{.5}}
\newcommand*\bigcdot@[2]{\mathbin{\vcenter{\hbox{\scalebox{#2}{$\m@th#1\bullet$}}}}}
\DeclareMathOperator*{\argmin}{arg\,min}
\author{Yanfei Zhou\thanks{Department of Data Sciences and Operations, University of Southern California, Los Angeles, CA, USA.}, Lars Lindemann\thanks{Department of Computer Science, University of Southern California, Los Angeles, CA, USA.},
Matteo Sesia\footnotemark[1]}
\begin{document}

\maketitle

\begin{abstract}
This paper presents a new conformal method for generating {\em simultaneous} forecasting bands guaranteed to cover the {\em entire path} of a new random trajectory with sufficiently high probability. Prompted by the need for dependable uncertainty estimates in motion planning applications where the behavior of diverse objects may be more or less unpredictable, we blend different techniques from online conformal prediction of single and multiple time series, as well as ideas for addressing heteroscedasticity in regression. This solution is both principled, providing precise finite-sample guarantees, and effective, often leading to more informative predictions than prior methods. \end{abstract}

\section{Introduction}\label{introduction}

Time series forecasting is a crucial problem with numerous applications in science and engineering.
Many machine learning algorithms, including deep neural networks, have been developed to address this task, but they are typically designed to produce point predictions and struggle to quantify uncertainty.
This limitation is especially problematic in domains involving intrinsic unpredictability, such as human behavior, and in high-stakes situations like autonomous driving~\citep{lindemann2023safe,lekeufack2023conformal} or wildfire forecasting~\citep{xu2022wildfire, xu2023spatio}.

A popular framework for endowing any model with reliable uncertainty estimates is that of {\em conformal prediction} \citep{vovk2005algorithmic,lei2016RegressionPS}. 
The idea is to observe and quantify the model's predictive performance on a {\em calibration} data set, independent of the training sample.
If those data are sampled from the test population, the calibration performance is representative of the performance at test time. Thus, it becomes possible, with suitable algorithms, to convert any model's point predictions into {\em intervals} (or sets) with guaranteed coverage properties for future observations.

Conformal prediction typically hinges on {\em exchangeability}---an assumption less stringent than the requirement for calibration and test data to be independent and identically distributed.
Under data exchangeability, conformal prediction can provide reliable statistical safeguards for any predictive model.
Its flexibility enables applications across many tasks, including regression \citep{lei2014distribution,romano2019conformalized,sesia2021conformal}, classification \citep{lei2013distribution,sadinle2019least,podkopaev2021distribution}, outlier detection \citep{bates2021testing,marandon2022machine,liang2022integrative}, and time series forecasting \citep{xu2021conformal,stankeviciute2021conformal,xu2023sequential,ajroldi2023conformal}.
This paper focuses on the last topic.

Conformal methods for time series tend to fall into one of two categories: {\em multi-series} and {\em single-series}.
Methods in the former category aim to predict a new trajectory by leveraging other {\em jointly exchangeable} trajectories from the same population \citep{stankeviciute2021conformal,lindemann2023safe,lekeufack2023conformal}. 
In the single-series setting, the aim shifts to forecasting future values based on historical observations from a fixed series, typically avoiding strict exchangeability assumptions \citep{gibbs2021adaptive,gibbs2022conformal, angelopoulos2023conformal}. 
This paper draws inspiration from both areas and addresses a remaining limitation of current methods for multi-series forecasting.

The challenge addressed in this paper is that of {\em data heterogeneity}---distinct time series with different levels of unpredictability.
For instance, in motion planning, forecasting the paths of pedestrians may be complicated by the relatively erratic behavior of some individuals, such as small children or intoxicated adults. 
This variability aligns with the classical issue of {\em heteroscedasticity}.
The latter has recently gained some recognition within the conformal prediction literature, particularly for regression \citep{romano2019conformalized} and classification \citep{romano2020classification,einbinder2022training}. 
In this paper, we address heteroscedasticity within the more complex setting of trajectory forecasting.

\subsection*{Related Work}

The challenge of conformal inference for non-exchangeable data is receiving significant attention, both from more general perspectives \citep{tibshirani2019conformal, barber2022conformal, Qiu2023prediction} and in the context of time-series forecasting.
An important line of research has focused on forecasting a single series, including recent works inspired by \citet{gibbs2021adaptive} such as \citet{gibbs2022conformal, bastani2022practical, Zaffran2022AdaptiveCP, Feldman2022AchievingRC, Dixit2023adaptive, angelopoulos2023conformal,bhatnagar2023improved}.
Further, other approaches that combine conformal prediction with single-series forecasting include those of \citet{Chernozhukov2018exact, xu2021conformal, Xu2022ConformalPS, xu2023sequential, sousa2023general, auer2023conformal, xu2023uncertainty}.
The present paper builds on this extensive body of work, drawing particular inspiration from \citet{gibbs2021adaptive}. However, our approach is distinct in its pursuit of stronger simultaneous coverage guarantees, a goal justified by motion planning applications, for example, but not achievable within the constraints of single-series forecasting.

Conformal prediction in multi-series forecasting has so far received relatively less attention.
\citet{lin2022conformal} explored a somewhat related yet distinct problem. Their work focused on ensuring different types of ``longitudinal" and ``cross-sectional'' coverage, which is a different goal compared to our objective of simultaneously forecasting an {\em entire} new trajectory. 
We conduct direct comparisons between our method and those of \citet{stankeviciute2021conformal} and \citet{yu2023signal,cleaveland2023conformal}. 
These address problems akin to ours but adopt different approaches and do not focus on heteroscedasticity. Specifically, \citet{stankeviciute2021conformal} implemented a Bonferroni correction, which is often very conservative, while \citet{yu2023signal}, \citet{cleaveland2023conformal}, and \citet{sun2023copula} used a technique more aligned with ours but lacking in adaptability to heteroscedastic conditions.

\section{Background and Motivation}

\subsection{Problem Statement and Notation}\label{sec:notations}

We consider a data set comprising $n$ observations of arrays of length $(T+1)$, namely $\mathcal{D}:= \{\bm{Y}^{(1)},\hdots,\bm{Y}^{(n)}\}$. 
For $i \in [n] := \{1,\dots,n\}$, the array $\bm{Y}^{(i)} = (Y_0^{(i)}, Y_1^{(i)}, \dots,Y_T^{(i)})$ represents $T+1$ observations of some $d$-dimensional vector $Y_{t}^{(i)} = (Y_{t,1}^{(i)}, \ldots, Y_{t,d}^{(i)})\in \mathbb{R}^d$, measured at distinct time steps $t \in \{0,\dots, T\}$.
We will assume throughout the paper that the $n$ trajectories are sampled exchangeably from some arbitrary and unknown distribution $P$.
However, it is worth emphasizing that we make no assumptions about the potentially complex time dependence with each series $(Y_{0}^{(i)}$, $Y_{1}^{(i)}$, \ldots,$Y_{T}^{(i)}$).
Intuitively, our goal is to leverage the data in $\mathcal{D}$ to construct an informative {\em prediction band} for the trajectory of a new series $\bm{Y}^{(n+1)}$, which is assumed to be also sampled exchangeably from the same distribution.

For simplicity, we focus on {\em one-step-ahead} forecasting, which means that we want to construct a prediction band for $\bm{Y}^{(n+1)}$ one step at a time. That is, we imagine that the initial position $Y_{0}^{(n+1)}$ is given and then wait to observe $Y_{t-1}^{(n+1)}$ before predicting $Y_{t}^{(n+1)}$, for each $t \in [T]$.
This perspective is often useful, for example in motion planning applications, but it is of course not the only possible one. Fortunately, though, our solution for the one-step-ahead problem can easily be extended to {\em multiple-step-ahead} forecasting, as explained in Appendix~\ref{app:multi_step}, or even {\em one-shot} forecasting of an entire trajectory.

Let $\hat{C}(\bm{Y}^{(n+1)}) := (\hat{C}_1(\bm{Y}^{(n+1)}), \ldots, \hat{C}_T(\bm{Y}^{(n+1)}))$ represent the output prediction band, where each $\hat{C}_t(\bm{Y}^{(n+1)}) \subseteq \mathbb{R}^d$ is a prediction region for the vector $Y_{t}^{(n+1)}$ that may depend on past observations $Y_{s}^{(n+1)}$ for $s < t$, as well as on the data in $\mathcal{D}$.
As we develop a method to construct $\hat{C}(\bm{Y}^{(n+1)})$, one goal is to ensure the following notion of {\em simultaneous marginal coverage}:
\begin{equation}\label{eq:simu_coverage}
    \mathbb{P}\left[ Y_{t}^{(n+1)} \in \hat{C}_{t}(\bm{Y}^{(n+1)}), \; \forall t \in [T] \right] \geq 1-\alpha.
\end{equation}
Simply put, the entire trajectory should lie within the band with probability at least $1-\alpha$, for some chosen level $\alpha \in (0,1)$. 
This property is called {\em marginal} because it treats both $\bm{Y}^{(n+1)}$ and the data in $\mathcal{D}$ as random samples from $P$.

\subsection{Benefits and Limitations of Marginal Coverage}

Marginal coverage is not only convenient, since it is achievable under quite realistic assumptions, but also useful. 
For example, in motion planning, prediction bands with simultaneous marginal coverage can help autonomous vehicles decide on a path that is unlikely to collide with another vehicle or pedestrian at any point in time. 
However, the marginal nature of Equation~\eqref{eq:simu_coverage} is not always fully satisfactory, particularly because it may obscure the adverse impacts of heterogeneity across trajectories, as explained next.

Imagine forecasting the movement of pedestrians crossing a street at night.
Suppose that 90\% of them are sober, walking in highly predictable patterns, while the remaining 10\% are intoxicated. 
See Figure~\ref{fig:individual_path} for a visualization of this scenario.
It is clear that uncertainty estimation is of paramount concern while forecasting the harder-to-predict drunk trajectories.
Addressing this issue is crucial, for example, to ensure that autonomous vehicles navigate such environments with the necessary level of caution.
However, not all prediction bands with marginal coverage are equally useful in this context.
For example, 90\% marginal coverage could be easily attained even by a trivial algorithm that provides valid prediction bands only for trajectories of the ``easy'' type.
This thought experiment shows that despite their general theoretical guarantees, conformal prediction methods still require careful design to provide informative uncertainty estimates, particularly in the case of heterogeneous data.

\begin{figure}[!htb]
    \centering
    \makeatletter%
    \if@twocolumn%
    \includegraphics[width=\linewidth]{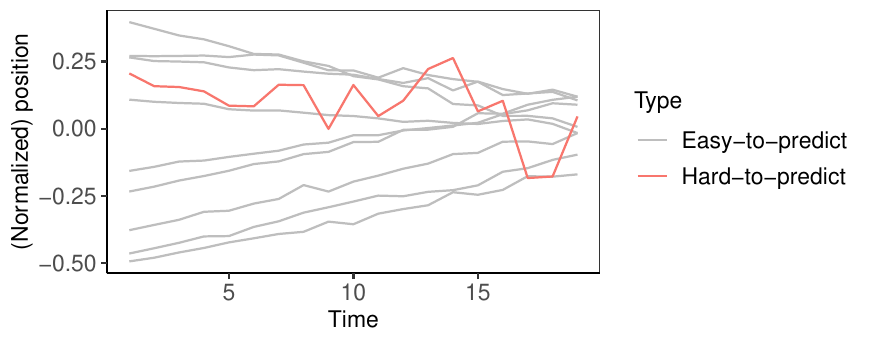}\vspace{-0.5cm}    
    \else%
    \includegraphics[width=0.7\linewidth]{figures/individual_path.pdf}\vspace{-0.5cm}    
    \fi
    \caption{One-dimensional representations of 10 pedestrian trajectories, one of which is intrinsically less predictable.} 
    \label{fig:individual_path}
\end{figure}

The aforementioned limitations of marginal coverage have been acknowledged before.
While achieving stronger theoretical guarantees in finite samples is generally unfeasible \citep{vovk2012conditional,barber2019limits}, some approaches practically tend to work better in this regard than others.
In particular, methods have been developed for regression \citep{romano2019conformalized,izbicki2019flexible}, classification \citep{romano2020classification,cauchois2021knowing,einbinder2022training}, and sketching \citep{sesia2022conformal} to seek {\em approximate conditional coverage} guarantees stronger than~\eqref{eq:simu_coverage}.

\subsection{Towards Approximate Conditional Coverage}

The goal in this paper is to construct prediction bands that are valid not only for a large fraction of all trajectories but also with high probability for distinct ``types" of trajectory.
In our street crossing example, this means we would like to have valid coverage not only marginally but also {\em conditional} on some relevant features of the pedestrian. For example, one may want $\hat{C}^{(n+1)}$ to approximately satisfy
\begin{equation}\label{eq:simu_coverage_cond}
    \mathbb{P}\left[ Y_{t}^{(n+1)} \in \hat{C}_{t}(\bm{Y}^{(n+1)}), \; \forall t  \mid \phi(\bm{Y}^{(n+1)}) \right] \geq 1-\alpha,
\end{equation}
where $\phi$ could represent the indicator of whether $\bm{Y}^{(n+1)}$ corresponds to an intoxicated pedestrian.

While there exist algorithms providing coverage conditional on a limited set of discrete features \citep{Romano2020With}, our challenge exceeds the capabilities of available approaches.
One issue is that the relevant features might not be directly observable.
For example, an autonomous vehicle might only detect a pedestrian's movements in real time, lacking broader contextual information about that person, such as knowing whether they are intoxicated or sober.
Therefore, our problem requires an innovative approach.

\subsection{Preview of Main Contributions}

We introduce a novel approach for constructing prediction bands for (multi-dimensional) trajectories, called Conformalized Adaptive Forecaster for Heterogeneous Trajectories (CAFHT). This method guarantees simultaneous marginal coverage as defined in \eqref{eq:simu_coverage} and is shown to achieve superior conditional coverage in practice compared to existing methods, as indicated by \eqref{eq:simu_coverage_cond}. 
A key feature of CAFHT is that it does not require pre-specified labels of intrinsic difficulty but rather it automatically adjusts the width of its prediction bands to each new trajectory in an online manner. 
This adaptability is derived from the capabilities of Adaptive Conformal Inference (ACI) \citep{gibbs2021adaptive},
which dynamically adjusts the prediction intervals to reflect the ease or challenge of predicting subsequent steps in a given trajectory.
Additionally, our method inherits from ACI the ability to produce prediction bands that are generally valid even for worst-case trajectories, provided these trajectories are of sufficient length \citep{gibbs2021adaptive}.

Figure~\ref{fig:individual_region} offers a glimpse into the effectiveness of CAFHT applied to the pedestrian trajectories from Figure~\ref{fig:individual_path}. Our method's advantage over state-of-the-art techniques \citep{stankeviciute2021conformal,yu2023signal} lies in its ability to automatically generate narrower bands for easier trajectories and wider ones for harder paths.
As shown through extensive experiments, this leads to more useful uncertainty estimates with higher conditional coverage.
In contrast, existing methods struggle to accommodate heterogeneity, often resulting in uniform prediction bands for all trajectories.

\begin{figure}[!htb]
    \centering
    \makeatletter%
    \if@twocolumn%
     \includegraphics[width=\linewidth]{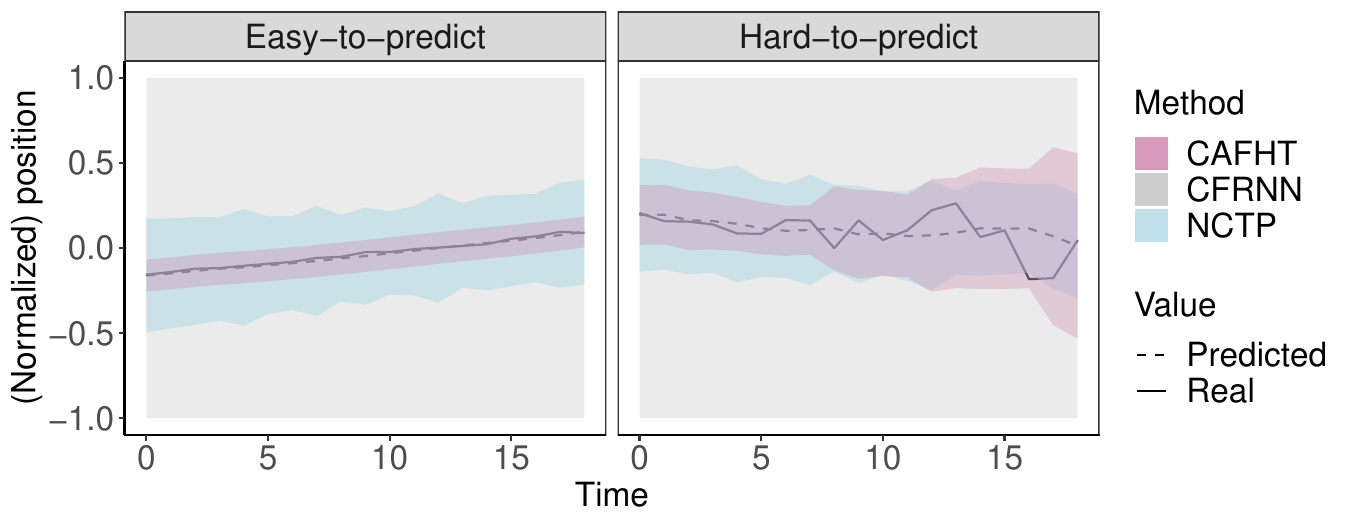}\vspace{-0.5cm}
    \else%
     \includegraphics[width=0.7\linewidth]{figures/individual_regions_simple_V2.pdf}\vspace{-0.5cm}
     \fi
    \caption{Conformal forecasting bands constructed using different methods, for the heterogeneous pedestrian trajectories from Figure~\ref{fig:individual_path}. All methods guarantee simultaneous marginal coverage at the 90\% level.
    Our method (CAFHT) can automatically adapt to the unpredictability of each trajectory. Here, the CFRNN bands so wide as to be uninformative, spanning from -1 to +1.}
    \label{fig:individual_region}
\end{figure}

In the next section, we explain how our approach integrates traditional split-conformal inference with online conformal prediction \citep{gibbs2021adaptive,gibbs2022conformal, angelopoulos2023conformal}.
Originally designed for single-series forecasting, these methods are adapted in our setting  to construct flexible prediction bands that automatically adjust to the varying unpredictability of each trajectory. 
For clarity, we begin by describing an implementation of our method based on ACI \citep{gibbs2021adaptive}, though other methods could also be accommodated, including the conformal PID approach from \citet{angelopoulos2023conformal} (discussed further in Section~\ref{sec:PID}). It is  crucial to note that all implementations of CAFHT are designed to provide the same guarantee of simultaneous marginal coverage and the same capability to accommodate heteroscedasticity.


\section{Methodology}

\subsection{Training a Black-Box Forecasting Model}

The preliminary step in our CAFHT method consists of randomly partitioning the data set $\mathcal{D}$ into two distinct subsets of trajectories, $\mathcal{D}_{\text{train}}$ and $\mathcal{D}_{\text{cal}}$.
The subset $\mathcal{D}_{\text{train}}$ is used to train a forecasting model $\hat{g}$.
This model could be almost anything, including a long short-term memory network (LSTM) \citep{hochreiter1997long,alahi2016social}, a transformer network \citep{nayakanti2022wayformer,zhou2023query}, or a traditional autoregressive moving average model \citep{wei2022}. 
Our only assumption regarding $\hat{g}$ is that it is able to generate point predictions for future steps based on partial observations from a new time series.

In this paper, we choose an LSTM model for demonstration and focus on one-step-ahead predictions.
While the ability of CAFHT to guarantee simultaneous marginal coverage does not depend on the forecasting accuracy of $\hat{g}$, more accurate models generally tend to yield more informative conformal predictions \citep{lei2018distribution}.

\subsection{Initializing the Adaptive Prediction Bands}

After training the forecaster $\hat{g}$ on the data in $\mathcal{D}_{\text{train}}$, our method will convert its one-step-ahead point predictions for any new trajectory $\bm{Y}$ into suitable {\em prediction bands}.
This is achieved by applying the ACI algorithm of \citet{gibbs2021adaptive}. 
For simplicity, we begin by focusing on the special case of one-dimensional trajectories ($d=1$). An extension of our solution to higher-dimensional trajectories is deferred to Section~\ref{sec:method-hd}.

ACI was designed to generate one-step-ahead forecasts for a single one-dimensional time series, without requiring a pre-trained forecaster $\hat{g}$.
In the single-series framework, \citet{gibbs2021adaptive} suggested training $\hat{g}$ in an online manner. 
In our setting, where we have access to multiple trajectories from the same population, it is logical to pre-train it.
In any case, pre-training does not exclude the potential for further online updates of $\hat{g}$ with each subsequent one-step-ahead prediction. 
However, to simplify the notation, our discussion now focuses on a static model.

A review of ACI \citep{gibbs2021adaptive} can be found in Appendix~\ref{app:review_ACI}. Here, we briefly highlight two critical aspects of that algorithm.
Note that the main ideas of our method can also be straightforwardly applied in combinations with other variations of the ACI method, as shown in Section~\ref{sec:PID}.

Firstly, the ACI algorithm involves a ``learning rate" parameter $\gamma >0$, controlling the adaptability of the prediction bands to the evolving time series. 
The adjustment mechanism operates as follows: at each time $t$, ACI modifies the width of the upcoming prediction interval for $Y^{(t+1)}$.
If the previous interval failed to encompass $Y^{(t)}$, the next interval is expanded; conversely, if it was sufficient, the next interval is narrowed.
Thus, larger values of $\gamma$ result in more substantial adjustments at each time step.
In contrast, lower values of $\gamma$ generally lead to ``smoother" prediction bands.

Secondly, the width of the ACI prediction band is also influenced by a parameter $\alpha \in (0,1)$, which represents the nominal level of the method. 
The design of the ACI algorithm aims to ensure that, over an extended period, the generated prediction bands will accurately contain the true value of $Y_t$ approximately a $1-\alpha$ fraction of the time. Consequently, a smaller $\alpha$ leads to broader bands.

Within our context, ACI is useful to transform the point predictions of $\hat{g}$ into {\em uncertainty-aware} prediction bands, but it is not satisfactory on its own.
Firstly, it is not always clear how to choose a good learning rate.
Secondly, the ACI prediction bands lack finite-sample guarantees. Specifically, they do not guarantee simultaneous marginal coverage~\eqref{eq:simu_coverage}.
Our method overcomes these limitations as follows. 

\subsection{Calibrating the Adaptive Prediction Bands} \label{sec:method-calibration}

We now discuss how to calibrate the ACI prediction bands discussed in the previous section to achieve simultaneous marginal coverage~\eqref{eq:simu_coverage}. For simplicity, we begin by taking the learning rate parameter $\gamma$ as fixed. We will then discuss later how to optimize the choice of $\gamma$ in a data-driven way.

Let $\hat{C}^{\text{ACI}}(\bm{Y}^{(i)}, \gamma) = [\hat{\ell}^{\text{ACI}}(\bm{Y}^{(i)}, \gamma) ,\hat{u}^{\text{ACI}}(\bm{Y}^{(i)}, \gamma)] $ denote the prediction band constructed by ACI, with learning rate $\gamma$ and level $\alpha_{\mathrm{ACI}} \in (0,1)$, for each {\em calibration} trajectory $i \in \mathcal{D}_{\text{cal}}$.
Note that this band is constructed one step at a time, based on the point predictions of $\hat{g}$ at each step $t \in [T]$ and past observations of $Y_s^{(i)}$ for all $s < t$; see Appendix~\ref{app:review_ACI} for further details on ACI.
We will refer to the cross-sectional prediction interval identified by this band at time $t \in [T]$ as $\hat{C}^{\text{ACI}}_t(\bm{Y}^{(i)}, \gamma)=[\hat{\ell}^{\text{ACI}}_t(\bm{Y}^{(i)}, \gamma), \hat{u}^{\text{ACI}}_t(\bm{Y}^{(i)}, \gamma)]$.

Our method will transform these ACI bands, which can only achieve a weaker notion of {\em asymptotic average coverage} because they do not leverage any exchangeability, into simultaneous prediction bands satisfying~\eqref {eq:simu_coverage_cond}.
For each $i \in \mathcal{D}_{\text{cal}}$, CAFHT evaluates a {\em conformity score} $\hat{\epsilon}_i (\gamma)$:
\makeatletter%
\if@twocolumn%
\begin{align}\label{eq:nonconf_scores}
\begin{split}
      \hat{\epsilon}_i (\gamma) := \max_{t \in [T]} \Biggr\{ \max \Biggr\{  
      & \left[\hat{\ell}^{\text{ACI}}_t(\bm{Y}^{(i)}, \gamma) - Y_t^{(i)}\right]_{+}  , \\
      & \quad \left[  Y_t^{(i)} - \hat{u}^{\text{ACI}}_t(\bm{Y}^{(i)}, \gamma) \right]_{+} \Biggr\} \Biggr\},
\end{split}
\end{align}
\else%
\begin{align}\label{eq:nonconf_scores}
\begin{split}
      \hat{\epsilon}_i (\gamma) := \max_{t \in [T]} \Biggr\{ \max \Biggr\{  
      & \left[\hat{\ell}^{\text{ACI}}_t(\bm{Y}^{(i)}, \gamma) - Y_t^{(i)}\right]_{+}  ,  \left[  Y_t^{(i)} - \hat{u}^{\text{ACI}}_t(\bm{Y}^{(i)}, \gamma) \right]_{+} \Biggr\} \Biggr\},
\end{split}
\end{align}
\fi
where $[x]_+ := \max(0,x)$ for any $x \in \mathbb{R}$. 
Intuitively, $\hat{\epsilon}_i (\gamma)$ measures the largest margin by which $\hat{C}^{\text{ACI}}(\bm{Y}^{(i)}, \gamma)$ should be expanded in both directions to simultaneously cover the entire trajectory $\bm{Y}^{(i)}$ from $t=1$ to $t=T$.
This is inspired by the method of \citet{romano2019conformalized} for quantile regression, although one difference is that their scores may be negative.
Other choices of conformity scores are also possible in our context, however, as discussed in Section~\ref{sec:method-adaptive-scores}.

Let $\hat{Q}(1-\alpha, \gamma)$ denote the $\lceil (1-\alpha)(1+|\mathcal{D}_{\text{cal}}|) \rceil$-th smallest value of $\hat{\epsilon}_i (\gamma)$ among $i \in \mathcal{D}_{\text{cal}}$.
CAFHT constructs a prediction band $\hat{C}(\bm{Y}^{(n+1)} ,\gamma )$ for $\bm{Y}^{(n+1)}$, one step at a time, as follows.
Let $\hat{C}^{\text{ACI}}_t(\bm{Y}^{(n+1)}, \gamma)$ denote the ACI prediction interval for $Y_t^{(n+1)}$ at time $t \in [T]$.
(Recall this depends on $\hat{g}$ and $\bm{Y}_{s}^{(n+1)}$ for all $s < t$.) 
Then, define the interval 
\makeatletter%
\if@twocolumn%
\begin{equation}\label{eq:predict_bands}
\begin{split}
    \hat{C}_t(\bm{Y}^{(n+1)} ,\gamma ) 
        & = \bigg[ \hat{\ell}^{\text{ACI}}_t(\bm{Y}^{(n+1)}, \gamma) - \hat{Q}(1-\alpha, \gamma), \\
        & \qquad \hat{u}^{\text{ACI}}_t(\bm{Y}^{(n+1)}, \gamma) + \hat{Q}(1-\alpha, \gamma) \bigg].
\end{split}
\end{equation}
    \else%
\begin{equation}\label{eq:predict_bands}
\begin{split}
    \hat{C}_t(\bm{Y}^{(n+1)} ,\gamma ) 
        & = \bigg[ \hat{\ell}^{\text{ACI}}_t(\bm{Y}^{(n+1)}, \gamma) - \hat{Q}(1-\alpha, \gamma), \hspace{0.5em}
        \hat{u}^{\text{ACI}}_t(\bm{Y}^{(n+1)}, \gamma) + \hat{Q}(1-\alpha, \gamma) \bigg].
\end{split}
\end{equation}
\fi
Our prediction band $\hat{C}(\bm{Y}^{(n+1)} ,\gamma)$ for one-step-ahead forecasting is then obtained by concatenating the intervals in~\eqref{eq:predict_bands} for all $t \in [T]$. 
More compactly, we can write $\hat{C}(\bm{Y}^{(n+1)} ,\gamma ) = \hat{C}^{\text{ACI}}(\bm{Y}^{(n+1)}, \gamma) \pm \hat{Q}(1-\alpha, \gamma)$.

The next result establishes finite-sample simultaneous coverage guarantees for this method.

\begin{theorem} \label{theorem:coverage}
Assume that the calibration trajectories in $\mathcal{D}_{\text{cal}}$ are exchangeable with $\bm{Y}^{(n+1)}$.
Then, for any $\alpha \in (0,1)$, the prediction band output by CAFHT, applied with fixed parameters $\alpha$, $\alpha_{\mathrm{ACI}}$, and $\gamma$, satisfies~\eqref{eq:simu_coverage}.
\end{theorem}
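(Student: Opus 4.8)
The plan is to reduce the simultaneous-coverage claim to the classical split-conformal quantile inequality applied to the \emph{scalar} conformity scores $\hat{\epsilon}_i(\gamma)$. First I would condition on $\mathcal{D}_{\text{train}}$ (equivalently, on the trained forecaster $\hat g$) and on the realized train/calibration partition of $\mathcal{D}$. Because the $n+1$ trajectories $\bm{Y}^{(1)},\dots,\bm{Y}^{(n)},\bm{Y}^{(n+1)}$ are exchangeable and the partition is drawn independently of the data, the conditional law of the trajectories indexed by $\mathcal{D}_{\text{cal}}$ together with $\bm{Y}^{(n+1)}$, given the trajectories in $\mathcal{D}_{\text{train}}$, is still exchangeable. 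With $\gamma$, $\alpha_{\mathrm{ACI}}$, and $\hat g$ all frozen, the map $\bm{Y}\mapsto\hat{\epsilon}(\bm{Y},\gamma)$ that runs ACI on the single trajectory $\bm{Y}$ and then evaluates the right-hand side of \eqref{eq:nonconf_scores} is a fixed, deterministic, measurable function acting on each trajectory \emph{separately} (ACI processes one trajectory in isolation, with no information shared across calibration trajectories). Applying the same function coordinatewise to an exchangeable collection preserves exchangeability, so $\{\hat{\epsilon}_i(\gamma)\}_{i\in\mathcal{D}_{\text{cal}}\cup\{n+1\}}$ is an exchangeable family of real numbers.

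Second, I would invoke the standard conformal lemma: for exchangeable real-valued random variables $Z_1,\dots,Z_m,Z_{m+1}$ and any $\beta\in(0,1)$, one has $\mathbb{P}[Z_{m+1}\le\hat Q]\ge\beta$, where $\hat Q$ is the $\lceil\beta(m+1)\rceil$-th smallest of $Z_1,\dots,Z_m$ (with the convention $\hat Q=+\infty$ whenever $\lceil\beta(m+1)\rceil>m$, in which case the bound is trivial). Taking $m=|\mathcal{D}_{\text{cal}}|$, $\beta=1-\alpha$, and $Z_i=\hat{\epsilon}_i(\gamma)$ gives $\mathbb{P}[\hat{\epsilon}_{n+1}(\gamma)\le\hat Q(1-\alpha,\gamma)]\ge1-\alpha$, first conditionally on $\mathcal{D}_{\text{train}}$ and then unconditionally by taking expectations over the split and the training data; ties among the scores are harmless since the conclusion is a one-sided inequality.

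Third, I would establish the deterministic set inclusion $\{\hat{\epsilon}_{n+1}(\gamma)\le\hat Q(1-\alpha,\gamma)\}\subseteq\{\,Y_t^{(n+1)}\in\hat C_t(\bm{Y}^{(n+1)},\gamma)\ \text{for all }t\in[T]\,\}$. By the definition \eqref{eq:nonconf_scores}, $\hat{\epsilon}_{n+1}(\gamma)\le\hat Q$ means that for every $t\in[T]$ both $[\hat{\ell}^{\text{ACI}}_t(\bm{Y}^{(n+1)},\gamma)-Y_t^{(n+1)}]_+$ and $[Y_t^{(n+1)}-\hat{u}^{\text{ACI}}_t(\bm{Y}^{(n+1)},\gamma)]_+$ are at most $\hat Q$, which is equivalent to $\hat{\ell}^{\text{ACI}}_t(\bm{Y}^{(n+1)},\gamma)-\hat Q\le Y_t^{(n+1)}\le\hat{u}^{\text{ACI}}_t(\bm{Y}^{(n+1)},\gamma)+\hat Q$, i.e.\ to $Y_t^{(n+1)}\in\hat C_t(\bm{Y}^{(n+1)},\gamma)$ as defined in \eqref{eq:predict_bands}. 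Chaining this inclusion with the probability bound from the second step yields exactly \eqref{eq:simu_coverage}.

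I do not expect a deep obstacle: the argument is essentially careful bookkeeping on top of the familiar quantile lemma. The one point that genuinely needs care is the exchangeability reduction in the first step — verifying that conditioning on $\mathcal{D}_{\text{train}}$ leaves $\mathcal{D}_{\text{cal}}\cup\{\bm{Y}^{(n+1)}\}$ exchangeable (using independence of the split from the data, and that $\hat g$ is held static so ACI never couples distinct calibration trajectories), and that the score is a fixed measurable function of a single trajectory once $\hat g$ is frozen. Everything else then follows from the standard conformal calibration inequality and the explicit form of $\hat C_t$.
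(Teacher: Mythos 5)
Your proposal is correct and follows essentially the same route as the paper's proof: reduce simultaneous coverage to the scalar score event $\hat{\epsilon}_{n+1}(\gamma)\le\hat Q(1-\alpha,\gamma)$ via the explicit form of \eqref{eq:nonconf_scores} and \eqref{eq:predict_bands}, then apply the standard exchangeability-based quantile lemma (Lemma~1 of Romano et al.). Your first step merely spells out more carefully than the paper why the scores inherit exchangeability (the score map is a fixed per-trajectory function once $\hat g$ and $\gamma$ are frozen), which is a welcome but not substantively different addition.
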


The proof of Theorem~\ref{theorem:coverage} is relatively simple and can be found in Appendix~\ref{app:proof}.
We remark that this guarantee holds at the desired level $\alpha$ regardless of the value of the ACI parameter $\alpha_{\mathrm{ACI}}$. However, it is typically intuitive to set $\alpha_{\mathrm{ACI}} = \alpha$. A notable advantage of this choice is that it leaves us with the challenge of tuning only one ACI parameter, $\gamma$.

Further, it is important to note that CAFHT can only expand the ACI prediction bands, since its conformity scores are non-negative.
Thus, our method retains the ACI guarantee of asymptotic average coverage at level $1-\alpha$ \citep{gibbs2021adaptive}, almost surely for {\em any} trajectory $\bm{Y}^{(n+1)}$: 
\begin{align} \label{eq:worst-case-coverage}
\underset{T\rightarrow \infty}{\lim} \frac{1}{T} \sum_{t=1}^T \mathbb{I}[Y_t \notin \hat{C}_t(\bm{Y}^{(n+1)} ,\gamma )] \overset{\mathrm{a.s.}}{ = }\alpha.
\end{align}
See Appendix~\ref{app:review_ACI} for details about how ACI achieves~\eqref{eq:worst-case-coverage}.

\subsection{Data-Driven Parameter Selection} \label{sec:method-selection}

The ability of the ACI algorithm to produce informative prediction bands can sometimes be sensitive to the choice of the learning rate $\gamma$ \citep{gibbs2021adaptive,angelopoulos2023conformal}.
This leads to a question: how can we select $\gamma$ in a data-driven manner? In our scenario, which involves multiple relevant trajectories from the same population, addressing this tuning challenge is somewhat simpler than in the original single-series context for which the ACI algorithm was designed.
Nonetheless, careful consideration is still required in the tuning process of $\gamma$, as we discuss next.

As a naive approach, one may feel tempted to apply the CAFHT method described above using different learning rates, with the idea of then cherry-picking the value of  $\gamma$ leading to the most appealing prediction bands. Unsurprisingly, however, such an unprincipled approach would invalidate the coverage guarantee because it breaks the exchangeability between the test trajectory and the calibration data.
This issue is closely related to problems of conformal prediction after model selection previously studied by \citet{yang2021finite} and \citet{liang2023ces}.
Therefore, we propose two alternative solutions inspired by their works.

The simplest approach to explain involves an additional data split. 
Let us randomly partition $\mathcal{D}_{\text{cal}}$ into two subsets of trajectories, $\mathcal{D}_{\text{cal}}^1$ and $\mathcal{D}_{\text{cal}}^2$.
The trajectories in $\mathcal{D}_{\text{cal}}^1$ can be utilized to select a good choice of $\gamma$ in a data-driven way.
In particular, we seek the value of $\gamma$ leading to the most informative prediction bands---a goal that can be quantified by minimizing the average width of our prediction bands produced for the trajectories in $\mathcal{D}_{\text{cal}}^1$.
Then, the calibration procedure described in Section~\ref{sec:method-calibration} will be applied using only the data in $\mathcal{D}_{\text{cal}}^2$ instead of the full $\mathcal{D}_{\text{cal}}$. 
The fact that the selection of $\gamma$ does not depend on the calibration trajectories in $\mathcal{D}_{\text{cal}}^2$ means that $\gamma$ can be essentially regarded as fixed, and therefore our output bands enjoy the marginal simultaneous coverage guarantee of Theorem~\ref{theorem:coverage}.
This version of our CAFHT method is outlined in Algorithm~\ref{alg:fixed_CAFHT_ds}.
The parameter tuning module of this procedure is summarized by Algorithm~\ref{alg:fixed_CAFHT_ds-model-selection} in Appendix~\ref{app:algorithms}.

\begin{algorithm}[!htb]
    \caption{CAFHT}
    \label{alg:fixed_CAFHT_ds}
    \begin{algorithmic} [1]
        \STATE \textbf{Input}: A pre-trained forecaster $\hat{g}$ producing one-step-ahead predictions; 
        calibration trajectories $\mathcal{D}_{\text{cal}}$; the initial position $Y_0^{(n+1)}$ of a test trajectory $\bm{Y}^{(n+1)}$;
        the desired nominal level $\alpha \in (0,1)$;
        a grid of candidate learning rates $\{\gamma_1, \dots, \gamma_L\}$.
        \STATE Randomly split $\mathcal{D}_{\text{cal}}$ into $\mathcal{D}_{\text{cal}}^1$ and $\mathcal{D}_{\text{cal}}^2$. 
        \STATE Select a learning rate $\hat{\gamma} \in \{\gamma_1, \ldots, \gamma_L\}$, applying Algorithm~\ref{alg:fixed_CAFHT_ds-model-selection} using the trajectory data in $\mathcal{D}_{\text{cal}}^1$.
        \STATE Construct $\hat{C}^{\text{ACI}}(\bm{Y}^{(i)}, \hat{\gamma})$ using ACI, for $i \in \mathcal{D}_{\text{cal}}^2$.
        \STATE Evaluate $\hat{\epsilon}_i(\hat{\gamma})$ using~\eqref{eq:nonconf_scores}, for $i \in \mathcal{D}_{\text{cal}}^2$. 
        \STATE Compute the empirical quantile $\hat{Q}(1-\alpha, \hat{\gamma})$.
        \FOR{$t \in [T]$}
        \STATE Compute $\hat{C}^{\text{ACI}}_t(\bm{Y}^{(n+1)}, \hat{\gamma})$ with ACI, using the past of the trajectory $(Y_{0}^{(n+1)},Y_{1}^{(n+1)},\ldots,Y_{t-1}^{(n+1)})$.
        \STATE Compute a prediction interval $\hat{C}_{t}(\bm{Y}^{(n+1)}, \hat{\gamma} )$ for the next step, using~\eqref{eq:predict_bands}.
        \STATE Observe the next step of the trajectory, $Y_t^{(n+1)}$.
        \ENDFOR
        \STATE \textbf{Output}: An online prediction band $\hat{C}(\bm{Y}^{(n+1)})$.
\end{algorithmic}
\end{algorithm}

Alternatively, it is also possible to carry out the selection of $\hat{\gamma}$ in a rigorous way without splitting $\mathcal{D}_{\text{cal}}$.
However, this would require replacing the empirical quantile $\hat{Q}(1-\alpha, \hat{\gamma})$ in the CAFHT method with a more conservative quantity $\hat{Q}(1-\alpha', \hat{\gamma})$, where the value of $\alpha' < \alpha$ depends on the number $L$ of candidate parameter values considered. We refer to Appendix~\ref{app:theory} for further details.

Our method employs a grid search to optimize the ACI hyper-parameters, a standard practice for hyper-parameter tuning. It is important to note that the more computationally demanding components of CAFHT, such as training the models and selecting $\gamma$ via grid search, are conducted offline and require completion only once. After these preliminary steps, the real-time component of CAFHT, which constructs prediction bands for new test trajectories, is fast and efficient.

\subsection{CAFHT with Multiplicative Scores} \label{sec:method-adaptive-scores}

A potential shortcoming of Algorithm~\ref{alg:fixed_CAFHT_ds} is that it can only add a constant margin of error to the prediction band constructed by the ACI algorithm.
While straightforward, this approach may not be always optimal.
In many cases, it would seem more natural to utilize a multiplicative error.
The rationale behind this is intuitive: trajectories that are inherently more unpredictable, resulting in wider ACI prediction bands, may necessitate larger margins of error to ensure valid simultaneous coverage. 
This concept can be seamlessly integrated into the CAFHT method by replacing the conformity scores initially outlined in~\eqref{eq:nonconf_scores} with these:
\makeatletter%
\if@twocolumn%
\begin{align}\label{eq:nonconf_scores_adap}
\begin{split}
      \tilde{\epsilon}_i (\gamma) = \max_{t \in T} \Biggr\{ \max \Biggr\{  &  \frac{\left[ \hat{\ell}^{\text{ACI}}_t(\bm{Y}^{(i)}, \gamma) - Y_t^{(i)}\right]_{+}}{|\hat{C}^{\text{ACI}}_t(\bm{Y}^{(i)}, \gamma)|}  ,  \\
      & \quad \frac{\left[  Y_t^{(i)} - \hat{u}^{\text{ACI}}_t(\bm{Y}^{(i)}, \gamma) \right]_{+}  }{|\hat{C}^{\text{ACI}}_t(\bm{Y}^{(i)}, \gamma)|}        \Biggr\}     \Biggr\}.
\end{split}
\end{align}
\else
\begin{align}\label{eq:nonconf_scores_adap}
\begin{split}
      \tilde{\epsilon}_i (\gamma) = \max_{t \in T} \Biggr\{ \max \Biggr\{  &  \frac{\left[ \hat{\ell}^{\text{ACI}}_t(\bm{Y}^{(i)}, \gamma) - Y_t^{(i)}\right]_{+}}{|\hat{C}^{\text{ACI}}_t(\bm{Y}^{(i)}, \gamma)|}  ,  \frac{\left[  Y_t^{(i)} - \hat{u}^{\text{ACI}}_t(\bm{Y}^{(i)}, \gamma) \right]_{+}  }{|\hat{C}^{\text{ACI}}_t(\bm{Y}^{(i)}, \gamma)|}        \Biggr\}     \Biggr\}.
\end{split}
\end{align}
\fi
Then, the counterpart of Equation~\eqref{eq:predict_bands} becomes 
\makeatletter%
\if@twocolumn%
\begin{align*}
\begin{split}
    \hat{C}(\bm{Y}^{(n+1)} ,\gamma ) & = \hat{C}^{\text{ACI}}(\bm{Y}^{(n+1)}, \gamma) \\
    & \qquad \pm \tilde{Q}(1-\alpha, \gamma)\cdot | \hat{C}^{\text{ACI}}(\bm{Y}^{(n+1)}, \gamma) |,
\end{split}
\end{align*}
\else
\begin{align*}
\begin{split}
    \hat{C}(\bm{Y}^{(n+1)} ,\gamma ) & = \hat{C}^{\text{ACI}}(\bm{Y}^{(n+1)}, \gamma) \pm \tilde{Q}(1-\alpha, \gamma)\cdot | \hat{C}^{\text{ACI}}(\bm{Y}^{(n+1)}, \gamma) |,
\end{split}
\end{align*}
\fi
where $\tilde{Q}(1-\alpha, \gamma)$ is the $\lceil (1-\alpha)(1+|\mathcal{D}_{\text{cal}}|) \rceil$-th smallest value in $\{ \tilde{\epsilon}_i (\gamma), i \in \mathcal{D}_{\text{cal}}\}$.

At this point, it is easy to prove that the prediction bands obtained produced by CAFHT with these multiplicative conformity scores still enjoy the same marginal simultaneous coverage guarantee established by Theorem~\ref{theorem:coverage}.

We refer to Figures~\ref{fig:multi_vs_fixed}--\ref{fig:multi_vs_fixed_full} in Appendix~\ref{app:more_experiments_multi_vs_add} for empirical illustrations and comparisons of prediction bands generated with multiplicative and additive scores; see also Table~\ref{tab:multi_vs_fixed_full} for a summary of their corresponding empirical quantiles $\hat{Q}(1-\alpha, \hat{\gamma})$.

\subsection{Extension to Multi-Dimensional Trajectories} \label{sec:method-hd}

The problem of forecasting trajectories with $d > 1$ (e.g., a two-dimensional walk), can be addressed with an intuitive extension of CAFHT.
In fact, ACI extends naturally to the multidimensional case and the first component of our method that requires some special care is the computation of the empirical quantile $\hat{Q}(1-\alpha, \hat{\gamma})$.
Yet, even this obstacle can be overcome quite easily. 
Consider evaluating a vector-valued version of the additive scores from~\eqref{eq:nonconf_scores}:
\makeatletter%
\if@twocolumn%
\begin{align}\label{eq:nonconf_scores_multid}
\begin{split}
      \hat{\epsilon}_{ij} (\gamma) := \max_{t \in [T]} \Biggr\{ \max \Biggr\{  
      & \left[\hat{\ell}^{\text{ACI}}_{t,j}(\bm{Y}^{(i)}, \gamma) - Y_{t,j}^{(i)}\right]_{+}  ,  \\
      & \quad \left[  Y_{t,j}^{(i)} - \hat{u}^{\text{ACI}}_{t,j}(\bm{Y}^{(i)}, \gamma) \right]_{+} \Biggr\} \Biggr\},
\end{split}
\end{align}
    \else%
\begin{align}\label{eq:nonconf_scores_multid}
\begin{split}
      \hat{\epsilon}_{ij} (\gamma) := \max_{t \in [T]} \Biggr\{ \max \Biggr\{  
      & \left[\hat{\ell}^{\text{ACI}}_{t,j}(\bm{Y}^{(i)}, \gamma) - Y_{t,j}^{(i)}\right]_{+}  ,  
      \left[  Y_{t,j}^{(i)} - \hat{u}^{\text{ACI}}_{t,j}(\bm{Y}^{(i)}, \gamma) \right]_{+} \Biggr\} \Biggr\},
\end{split}
\end{align}
\fi
for each dimension $j \in [d]$. 
Then, we can recover a one-dimensional problem prior to computing $\hat{Q}(1-\alpha, \hat{\gamma})$ by taking (for example) the maximum value of $\hat{\epsilon}_{ij} (\gamma)$; i.e., $\hat{\epsilon}_i^\infty(\gamma) = \max_{j \in [d]} \hat{\epsilon}_{ij} (\gamma)$.
Ultimately, each $\hat{C}_t(\bm{Y}^{(n+1)} ,\gamma )$ is obtained by applying~\eqref{eq:predict_bands} with $\hat{Q}(1-\alpha, \hat{\gamma})$ defined as the $\lceil (1-\alpha)(1+|\mathcal{D}_{\text{cal}}|) \rceil$-th smallest value of $\hat{\epsilon}_i^\infty(\gamma)$.

We conclude this section by remarking that this general idea could also be implemented using the multiplicative conformity scores described in Section~\ref{sec:method-adaptive-scores}, as well as by using different dimension reduction functions in~\eqref{eq:nonconf_scores}.
For example, one may consider replacing the infinity-norm in~\eqref{eq:nonconf_scores} with an $\ell^2$ norm, leading to a ``spherical" margin of error around the ACI prediction bands instead of a ``square" one.

\subsection{Leveraging Conformal PID Prediction Bands} \label{sec:PID}

CAFHT is not heavily reliant on the specific mechanics of ACI.
The crucial aspect of ACI is its capability to transform black-box point forecasts into prediction bands that approximately mirror the unpredictability of each trajectory.
Thus, our method can integrate with any variation of ACI.

Some of our demonstrations in Appendix~\ref{app:more_experiment} include an alternative implementation of CAFHT that employs the conformal PID algorithm of \citet{angelopoulos2023conformal} instead of ACI. 
To minimize computational demands, our demonstrations will primarily utilize the quantile tracking feature of the original conformal PID method. This simplified version of conformal PID is influenced only by a single hyper-parameter---a learning rate $\gamma$, similar to ACI.

\subsection{Direct Comparison to ACI}

CAFHT utilizes ACI as an internal component and is designed to leverage several exchangeable trajectories to construct prediction bands for a new trajectory sampled from the same population, while guaranteeing simultaneous marginal coverage as defined in~\eqref{eq:simu_coverage}. 
In contrast, ACI handles a single (arbitrary) trajectory and focuses on a different notion of asymptotic average coverage, which allows for temporary deviations of the true trajectory from the output prediction band.
This crucial distinction between CAFHT and ACI is highlighted by the numerical experiments detailed in Figures~\ref{fig:aci_vs_cafht}--\ref{fig:main_aci_vs_cafht} in Appendix~\ref{app:more_experiments_aci_vs_cafht}.

\section{Numerical Experiments}\label{sec:experiment}

\subsection{Setup and Benchmarks}

This section demonstrates the empirical performance of our method.
We focus on applying CAFHT with multiplicative scores, based on the ACI algorithm, and tuning the learning rate through data splitting.
Additional results pertaining to different implementations of CAFHT are in Appendix~\ref{app:more_experiment}. 
In all experiments, the candidate values for the ACI learning rate parameter $\gamma$ range from $0.001$ to $0.1$ at increments of $0.01$, and from $0.2$ to $0.9$ at increments of $0.1$. 

The CAFHT method is compared with two benchmark approaches that also provide simultaneous marginal coverage~\eqref{eq:simu_coverage}.
The first one is the Conformal Forecasting Recurrent Neural Network (CFRNN) approach of \citet{stankeviciute2021conformal}, which relies on a Bonferroni correction for multiple testing.
In particular, the CFRNN method produces a prediction band satisfying~\eqref{eq:simu_coverage} for a trajectory of length $T$ by separately computing $T$ conformal prediction intervals at level $\alpha/T$, one for each time step, each obtained using regression techniques typical to the regression setting.
An advantage of this approach is that it is conceptually intuitive, but it can become quite conservative if $T$ is large.

The second benchmark is the Normalized Conformal Trajectory Predictor (NCTP) of \citet{yu2023signal}.
This method is closer to ours but utilizes different scores and does not leverage ACI to adapt to heterogeneity. 
In short, NCTP directly takes as input a forecaster $\hat{g}$ providing one-step-ahead point predictions $\hat{Y}^{(i)}_t$ and evaluates the scores $\hat{\epsilon}_i = \max_{t\in [T]} \{(|\hat{Y}^{(i)}_t - Y^{(i)}_t|) / \sigma_t \}$ for each $i \in \mathcal{D}_{\mathrm{cal}}$, where $\sigma_t$ are suitable data-driven normalization constants. 
This approach is similar to that of \citet{cleaveland2023conformal}, which deviates only in the computation of the $\sigma_t$ constants, and it tends to work quite well if the trajectories are homogeneous.

While there exist other methods, such as CopulaCPTS \citep{sun2023copula}, which can achieve simultaneous marginal coverage as defined in \eqref{eq:simu_coverage}, they, like NCTP, lack adaptability to heteroscedastic conditions, and are thus expected to perform similarly under such conditions. 
For clarity and conciseness, we focus on CFRNN and NCTP as the benchmarks in our primary experiments. 
Additional experiments involving CopulaCPTS, detailed in Appendix~\ref{app:more_experiments_add_benchmark}, demonstrate performance comparable to NCTP, as anticipated.

For all methods, the underlying forecasting model is a recurrent neural network with 4 stacked LSTM layers followed by a linear layer. The learning rate is set equal to 0.001, for an AdamW optimizer with weight decay 1e-6. The models are trained for a total of 50 epochs, so that the mean squared error loss loss approximately converges. 

Prior to the beginning of our analyses, all trajectories will be pre-processed with a batch normalization step based on $\mathcal{D}_{\mathrm{train}}$, so that all values lie within the interval $[-1, 1]$.
This is useful to ensure a numerically stable learning process and more easily interpretable performance measures.

In all experiments, we evaluate the performance of the prediction bands in terms of their simultaneous marginal coverage (i.e., the proportion of test trajectories entirely contained within the prediction bands), the average width (over all times $t \in [T]$ and all test trajectories, which have a maximum value of $2$ after standardizing our data to fall within the range $[-1,1]$), and the simultaneous coverage conditional on a trajectory being ``hard-to-predict", as made more precise in the next subsection.

It is crucial to note that while we, as experiment designers, are aware of the ``difficulty label'' for each trajectory, the methods used in this study do not have access to this information. Therefore, achieving high simultaneous conditional coverage is inherently challenging. Although not theoretically guaranteed to exceed any specific threshold, higher values of this measure are preferable for practical purposes.

\subsection{Synthetic Trajectories}

We begin by considering univariate ($d=1)$ synthetic trajectories generated from an autoregressive (AR) model, $X_t = 0.9 X_{t-1} + 0.1 X_{t-2} - 0.2 X_{t-3} + \epsilon_t$, where $\epsilon_t \sim N(0, \sigma_t^2)$, for all $t \in [T]$ with $T=100$. 
Similar to \citet{stankeviciute2021conformal}, we consider two noise profiles: a dynamic profile in which $\sigma_t^2$ is increasing with time, and a static profile in which $\sigma_t^2$ is constant. 
The results based on the dynamic profile are presented here, while the others are discussed in Appendix~\ref{app:more_experiment}. 
To make the problem more interesting, we ensure that some trajectories are intrinsically more unpredictable than the others.
Specifically, in the dynamic noise setting, we set $\sigma_t^2 = t \cdot k$, with $k=10$, for a fraction $\delta = 0.1$ of the trajectories, while $\sigma_t^2 = t$ for the remaining ones.

\begin{figure*}[!t]
    \centering
    \includegraphics[width=\linewidth]{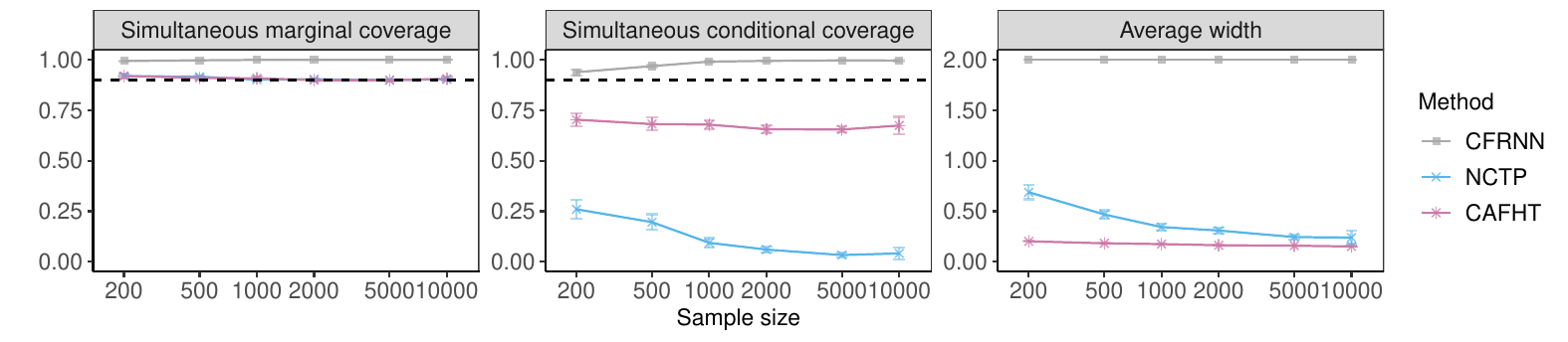}\vspace{-0.5cm}
    \caption{Performance on simulated heterogeneous trajectories of prediction bands constructed by different methods, as a function of the total number of training and calibration trajectories (of which 25\% are utilized for calibration). All methods achieve 90\% simultaneous marginal coverage. Our method (CAFHT) leads to more informative bands with lower average width and higher conditional coverage. The error bars indicate 2 standard errors. Note that the CFRNN bands here are so wide as to be uninformative.} 
    \label{fig:main_exp_sim_ndata}
\end{figure*}

Figure~\ref{fig:main_exp_sim_ndata} summarizes the performance of the three methods as a function of the number of trajectories in $\mathcal{D}$, which is varied between 200 and 10,000.
The results are averaged over 500 test trajectories and 100 independent experiments. 
See Table~\ref{tab:main_exp_sim_ndata} in Appendix~\ref{app:more_experiment} for standard errors.
In each case, $75\%$ of the trajectories are used for training and the remaining $25\%$ for calibration.
Our method utilizes 50\% of the calibration trajectories to select the ACI learning rate $\gamma$. 
All experiments target 90\% simultaneous marginal coverage, with additional results for higher coverage levels presented in Appendix~\ref{app:more_experiments_higher_cov}.

All methods attain 90\% simultaneous marginal coverage, aligning with theoretical predictions.
Notably, CAFHT yields the most informative bands, characterized by the narrowest average width and higher conditional coverage compared to NCTP. This can be explained by the fact that NCTP is not designed to account for the varying noise levels inherent in different trajectories. Consequently, NCTP generates less adaptive bands,  too wide for the easier trajectories and too narrow for the harder ones.
CAFHT also surpasses CFRNN; while CFRNN seems to attain the highest conditional coverage, it generates very wide bands that are practically uninformative for all trajectories. This is due to its rigid approach to handling time dependencies via a Bonferroni correction.

\begin{figure*}[!htb]
    \centering
    \includegraphics[width=\linewidth]{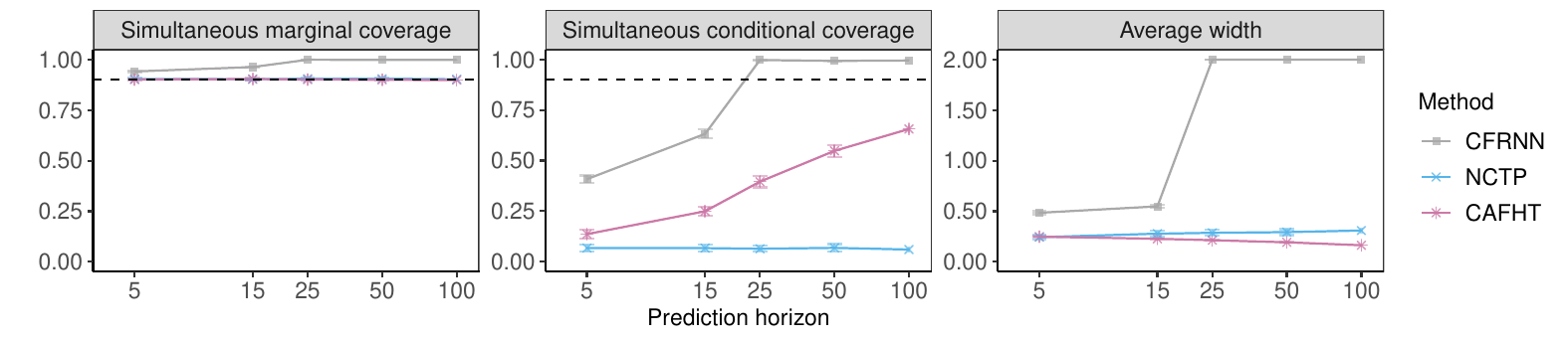}\vspace{-0.5cm}
    \caption{Performance on simulated heterogeneous trajectories of prediction bands constructed by different methods, as a function of the prediction horizon. Other details are as in Figure~\ref{fig:main_exp_sim_ndata}. For large prediction horizon, the CFRNN bands so wide as to be uninformative.} 
    \label{fig:main_exp_sim_horizon}
\end{figure*}

Figure~\ref{fig:main_exp_sim_horizon} summarizes the results of similar experiments investigating the performances of different methods as a function of the prediction horizon $T$, which is varied between 5 and 100; see Table~\ref{tab:main_exp_sim_horizon} in Appendix~\ref{app:more_experiment} for the corresponding standard errors. Here, the number of trajectories in $\mathcal{D}$ is fixed equal to 2000.
The results highlight how CFRNN becomes more conservative as $T$ increases. 
By contrast, NCTP produces relatively narrower bands but also achieves the lowest conditional coverage.
Meanwhile, our CAFHT method again yields the most informative prediction bands, with low average width and high conditional coverage. 

Appendix~\ref{app:more_experiment} describes additional experimental results that are qualitatively consistent with the main findings.
These experiments investigate the effects of the data dimensions (Figure~\ref{fig:main_exp_sim_ndim} and Table~\ref{tab:main_exp_sim_ndim}), of the proportion of hard trajectories (Figure~\ref{fig:main_exp_sim_delta} and Table~\ref{tab:main_exp_sim_delta}), and evaluate the robustness of different methods against distribution shifts (Figure~\ref{fig:main_exp_sim_delta_test} and Table~\ref{tab:main_exp_sim_delta_test}).
Additionally, these experiments are replicated using synthetic data from an AR model with a static noise profile; see Figures~\ref{fig:main_exp_sim_static_ndata}--\ref{fig:main_exp_sim_static_delta_test} and Tables~\ref{tab:main_exp_sim_static_ndata}--\ref{tab:main_exp_sim_static_delta_test}.

Furthermore, we conducted several experiments to investigate the performance of various implementations of our method.
Figures~\ref{fig:supp_exp_sim_static_ndata}--\ref{fig:supp_exp_sim_static_delta_test} and Tables~\ref{tab:supp_exp_sim_static_ndata}--\ref{tab:supp_exp_sim_static_delta_test} focus on comparing alternative model selection approaches while applying the multiplicative conformity scores defined in~\eqref{eq:nonconf_scores_adap}.
Figures~\ref{fig:supp_exp_sim_static_ndata_fixed}--\ref{fig:supp_exp_sim_static_delta_test_fixed} and Tables~\ref{tab:supp_exp_sim_static_ndata_fixed}--\ref{tab:supp_exp_sim_static_delta_test_fixed} summarize similar experiments based on the additive scores defined in~\eqref{eq:nonconf_scores}.

\subsection{Pedestrian Trajectories}

We now apply the three methods to forecast pedestrian trajectories generated from the ORCA simulator \citep{van2008reciprocal}, which follow nonlinear dynamics and are intrinsically harder to predict than the synthetic trajectories discussed before.
The data include 2-dimensional position measurements for 1,291 pedestrians, tracked over $T=20$ time steps.
To make the problem more challenging, we introduce dynamic noise to the trajectories of 10\% of randomly selected pedestrians, making their paths more unpredictable. Figure~\ref{fig:individual_path} plots ten representative trajectories.

All trajectories are normalized as in the previous section, and we train the same LSTM for 50 epochs.
In each experiment, the training and calibration sets use 1000 randomly chosen trajectories, and the test set consists of the remaining 291 trajectories. All results are averaged over 100 repetitions.

\begin{figure*}[!htb]
    \centering
    \includegraphics[width=\linewidth]{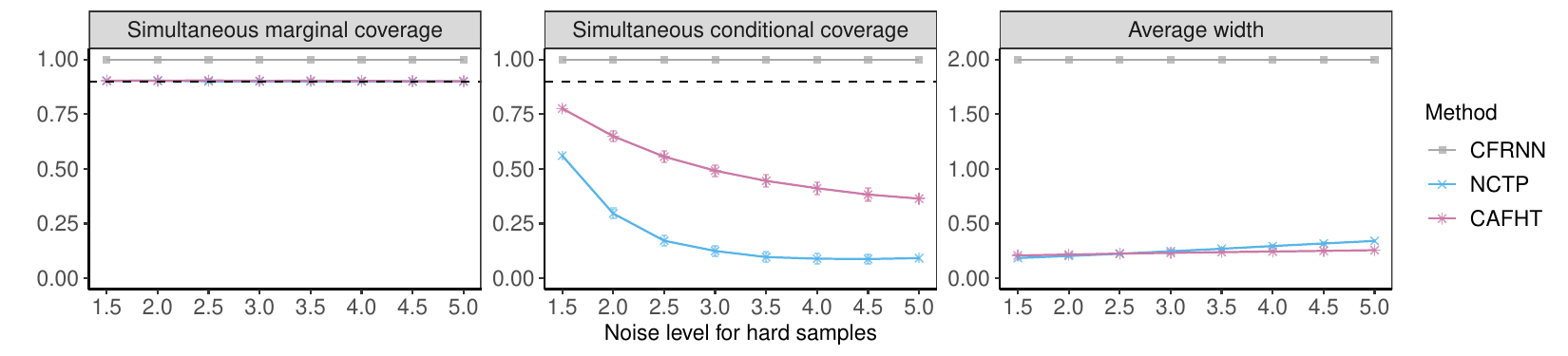}\vspace{-0.5cm}
    \caption{Performance on heterogeneous pedestrian trajectories of conformal prediction bands constructed by different methods, as a function of the noise level controlling the intrinsic unpredictability of the more difficult trajectories. Note that the CFRNN bands so wide as to be uninformative.} 
    \label{fig:main_exp_real_noiselevel_dt01}
\end{figure*}

Figure~\ref{fig:main_exp_real_noiselevel_dt01} investigates the effect of varying the noise level, setting $\sigma_t^2 \propto t \cdot \text{noise level}$ (varied from 1.5 to 5) for the hard trajectories and $\sigma_t^2 \propto t$ for the easy ones.
Again, all methods attain $90\%$ simultaneous marginal coverage, but CAFHT produces the most informative bands, with relatively narrow width and higher conditional coverage compared to NCTP.
Meanwhile, CFRNN leads to very conservative bands, as in the previous section.
See Table~\ref{tab:main_exp_real_noiselevel_dt01} in Appendix~\ref{app:more_experiment} for further details.

Additional numerical experiments are summarized in Appendix~\ref{app:more_experiment}.
Figure~\ref{fig:main_exp_real_noise_level} and Table~\ref{tab:main_exp_real_noise_level} investigate the effect of having a larger fraction of hard trajectories. 
Figure~\ref{fig:main_exp_real_dynamic_multi_ndata} and Table~\ref{tab:main_exp_real_dynamic_multi_ndata} compare the performances of different methods as a function of the sample size used for training and calibration. 
Figures~\ref{fig:supp_exp_real_ndata}--\ref{fig:supp_exp_real_noise_level_fixed} and Tables~\ref{tab:supp_exp_real_ndata}--\ref{tab:supp_exp_real_noise_level_fixed} perform a comparative analysis of different implementations of our methods under varying noise levels, using both multiplicative and additive conformity scores.


\section{Discussion}

This work opens several directions for future research.
On the theoretical side, one may want to understand the conditions under which our method can asymptotically achieve {\em optimal} prediction bands in the limit of large sample sizes, potentially drawing inspiration from \citet{lei2018distribution} and \citet{sesia2020comparison}.
Moreover, there are several potential ways to further enhance our method and address some of its remaining limitations. For example, it could be adapted to provide even stronger types of coverage guarantees beyond those considered in this paper by conditioning on the calibration data or on some other observable features. Another possible direction is to study how to best reduce the algorithmic randomness caused by data splitting \citep{vovk2015cross}, possibly using cross-conformal methods \citep{barber2019predictive} or E-value approaches \citep{bashari2023derandomized}.
Additionally, our method could be further improved by incorporating time dependency into the ACI learning rate or by relaxing the exchangeability assumption by leveraging weighted conformal inference ideas \citep{tibshirani2019conformal}.
Lastly, it would be especially interesting to apply this method in real-world motion planning scenarios.

Software implementing the algorithms and data experiments are available online at \url{https://github.com/FionaZ3696/CAFHT.git}.

\subsection*{Acknowledgements}

The authors thank anonymous referees for helpful comments, and the Center for Advanced Research Computing at the University of Southern California for providing computing resources.
M.~S.~and Y.~Z.~were partly supported by NSF grant DMS 2210637.
M.~S.~was also partly supported by an Amazon Research Award. 

\makeatletter%
\if@twocolumn%
\subsection*{Impact Statement}
This paper presents work whose goal is to advance the field of Machine Learning. There are many potential societal consequences of our work, none which we feel must be specifically highlighted here.
\fi

\microtypesetup{protrusion=false}       
\printbibliography                                                                                                
\newpage
\appendix

\renewcommand{\thesection}{A\arabic{section}}
\renewcommand{\theequation}{A\arabic{equation}}
\renewcommand{\thetheorem}{A\arabic{theorem}}
\renewcommand{\thecorollary}{A\arabic{corollary}}
\renewcommand{\theproposition}{A\arabic{proposition}}
\renewcommand{\thelemma}{A\arabic{lemma}}
\renewcommand{\thetable}{A\arabic{table}}
\renewcommand{\thefigure}{A\arabic{figure}}
\renewcommand{\thealgorithm}{A\arabic{algorithm}}
\setcounter{figure}{0}
\setcounter{table}{0}
\setcounter{proposition}{0}
\setcounter{theorem}{0}
\setcounter{lemma}{0}
\setcounter{algorithm}{0}

\section{Further Details on the ACI Algorithm}\label{app:review_ACI}

\subsection{Background on ACI}

In this section, we briefly review some relevant components of the \emph{adaptive conformal inference} (ACI) method introduced by \citet{gibbs2021adaptive} in the context of forecasting a single time series.
The goal of ACI is to construct prediction bands in an online setting, while accounting for possible changes in the data distribution across different times. Specifically, ACI is designed to create prediction bands with a long-term average coverage guarantee. Intuitively, this guarantee means that, for an indefinitely long time series, a sufficiently large proportion of the series should be contained within the output band.
This objective is notably distinct from the one investigated in our paper. However, since our method builds upon ACI, it can be useful to recall some relevant technical details of the latter method.

In the online learning setting considered by \citet{gibbs2021adaptive}, one observes covariate-response pairs $\{ (X_t,Y_t)\}_{t\in \mathbb{N}} \subset \mathbb{R}^d \times \mathbb{R}$ in a sequential fashion. At each time step $t \in \mathbb{N}$, the goal is to form a prediction set $\hat{C}_t$ for $Y_t$ using the previously observed data $\{ (X_r,Y_r)\}_{1\leq r \leq t-1}$ as well as the new covariates $X_t$. Given a target coverage level $\alpha \in (0,1)$, the constructed prediction set should guarantee that, over long time, at least $100(1-\alpha)\%$ of the time $Y_t$ lies within the set.

Recall that standard split-conformal prediction methods require a calibration dataset $\mathcal{D}_{\text{cal}}\subseteq \{ (X_r,Y_r)\}_{1\leq r \leq t-1}$ that is independent of the data used to fit the regression model.
The standard approach involves constructing a prediction set as $\hat{C}_t(\alpha) = \{y : S(X_t, y) \leq \hat{Q}(1-\alpha) \}$, where $S(X_t, y)$ is a score that measures how well $y$ conforms with the prediction of the fitted model. For example, if we denote the fitted model as $\hat{g}$, a classical example of scoring function would be $S(X_t, y) = |\hat{g}(X_t) - y|$. Then, in general, the score $S(X_t, y)$ is compared to a suitable empirical quantile, $\hat{Q}(1-\alpha)$, of the analogous scores evaluated on the calibration data: $\hat{Q}(1-\alpha) = \inf \{ s: (|\mathcal{D}_{\text{cal}}|^{-1} \sum_{(X_r, Y_r)\in \mathcal{D}_{\text{cal}}}\mathbbm{1}_{\{ S(X_r, Y_r)\leq s\} } ) \geq 1-\alpha \}$.
If the observations taken at different times are not exchangeable with one another, however, standard conformal prediction algorithms cannot achieve valid coverage. This is where ACI comes into play.

The core concept of ACI involves dynamically updating the functions $\hat{g}$, $S(\cdot)$, and $\hat{Q}(\cdot)$ at each time step, utilizing newly acquired data. Concurrently, ACI modifies the nominal miscoverage target level $\alpha_t$ of its conformal predictor for each time increment. The purpose of adjusting the $\alpha$ level at each time step is to calibrate future predictions to be more or less conservative depending on their empirical performance in covering past values of the time series. For instance, if a prediction band is found to be excessively broad, it will be narrowed in subsequent steps, and the opposite applies if it's too narrow. This strategy enables ACI to continuously adapt to potential dependencies and distribution changes within the time series, maintaining relevance and accuracy in an online context.
Specifically, ACI employs the following $\alpha$-update rule:
$$
    \alpha_{t+1} = \alpha_t + \gamma(\alpha - \text{err}_t),
$$
where
$$\text{err}_t = \begin{cases}
    1, & \text{ if } Y_t \notin \hat{C}^{\text{ACI}}_t(\alpha_t), \\
    0, & \text{ otherwise}.
\end{cases},$$
and $\hat{C}^{\text{ACI}}_t(\alpha_t) = \{y : S_t(X_t, y) \leq \hat{Q}_t(1-\alpha_t)\}$.
Equivalently,
$$\hat{C}^{\text{ACI}}_t(\alpha_t) = [\hat{\ell}^{\text{ACI}}_t, \hat{u}^{\text{ACI}}_t] = [\hat{g}(X_t) - \hat{Q}_t(1-\alpha_t), \hat{g}(X_t)+\hat{Q}_t(1-\alpha_t)].$$
The hyperparameter $\gamma > 0$ controls the magnitude of each update step. Intuitively, a larger $\gamma$ means that ACI can rapidly adjust to observed changes in the data distribution. However, this may come at the expense of increased instability in the prediction bands. Consequently, the ideal value of $\gamma$ tends to be specific to the application at hand, requiring careful consideration to balance responsiveness and stability. This is why our CAFHT method involves a data-driven parameter tuning component.

The main theoretical finding established by \citet{gibbs2021adaptive} is that ACI always attains valid long-term average coverage. Notably, this result is achieved without the necessity for any assumptions regarding the distribution of the unique time series in question.
More precisely, with probability one,
\begin{small}
\begin{align*}
    \left| \frac{1}{T}\sum_{t=1}^T \text{err}_t - \alpha \right| \leq \frac{\max(\alpha_1, 1-\alpha_1)+\gamma}{T\gamma},
\end{align*}
\end{small}
which implies
$$\underset{T\rightarrow \infty}{\lim}T^{-1}\sum_{t=1}^T \text{err}_t \overset{\mathrm{a.s.}}{ = }\alpha.$$
This result is not essential for proving our simultaneous marginal coverage guarantee, but it offers an intuitive rationale for our methodology. Indeed, the capacity of ACI to adaptively encompass the inherent variability in each time series is key to our method's enhanced conditional coverage compared to other conformal prediction approaches for multi-series forecasting.
Further, our method inherits the same long-term average coverage property of ACI because it can only expand the prediction bands of the latter.

In this paper, we implement ACI without re-training the forecasting model $\hat{g}$ at each step. This approach is viable due to our access to additional ``training" time series data from the same population, and it aids in diminishing the computational cost of our numerical experiments.
Nonetheless, our methodology is flexible enough to incorporate ACI with periodic re-training, aligning with the practices suggested by \citet{gibbs2021adaptive} and the very recent related conformal PID method of \citet{angelopoulos2023conformal}.

\subsection{Warm Starts} \label{sec:app-warmstart}

As originally designed, ACI primarily aimed at achieving asymptotic coverage in the limit of a very long trajectory, sometimes tolerating very narrow prediction intervals in the initial time steps.
However, we have observed that this behavior can negatively impact the performance of our method in finite-horizon scenarios.
To address this issue, we introduce in this paper a simple warm-start approach for ACI.
This involves incorporating artificial conformity scores at the start of each trajectory. These scores are generated as uniform random noise, with values falling within the range of observed residuals in the training dataset. Consequently, ACI typically begins with a wider interval for its first forecast.
Importantly, this modification does not affect the long-term asymptotic properties of ACI when applied to a single trajectory, nor does it impact our guarantee of finite-sample simultaneous marginal coverage. However, it often results in more informative (narrower) prediction bands.

The solution described above is applied in our experiments using 5 warm-start scores, denoted as $\hat{\epsilon}_{-4},\dots, \hat{\epsilon}_{0}$, and setting the initial value of $\alpha_{-4}$ equal to 0.1.
A similar warm-start approach is also utilized when we apply the conformal PID algorithm of \citet{angelopoulos2023conformal} instead of ACI.
However, for the algorithm the warm start simply consists of setting the initial quantile $q_0$ equal to the $(1-\alpha)$-th quantile evaluated on the empirical distribution of scores computed using the training set.

\section{Proof of Theorem~\ref{theorem:coverage}} \label{app:proof}
\begin{proof}[Proof of Theorem~\ref{theorem:coverage}]
    The proof follows directly from the exchangeability of the conformity scores, as it is often the case for split-conformal prediction methods.
    Denote $\hat{\epsilon}_{n+1}(\gamma)$ the conformity score of the test trajectory $\bm{Y}^{(t+1)}$ evaluated using the ACI prediction band constructed with step size $\gamma$. For any fixed $\alpha$ and $\gamma>0$, we have that  $Y_{t}^{(n+1)} \in \hat{C}^{(n+1)}_{t} \forall t \in [T]$
    if and only if $\hat{\epsilon}_{n+1}(\gamma) \leq \hat{Q}(1-\alpha, \gamma)$,
    where $\hat{Q}(1-\alpha, \gamma)$ is the $\lceil (1-\alpha)(1+|\mathcal{D}_{\text{cal}}|)\rceil$-th smallest value of $\hat{\epsilon}_i (\gamma)$ for all $i \in \mathcal{D}_{\text{cal}}$. Since the test trajectory is exchangeable with $\mathcal{D}_{\text{cal}}$, its score $\hat{\epsilon}_{n+1}(\gamma)$ is also exchangeable with $\{\hat{\epsilon}_{i}(\gamma), i\in\mathcal{D}_{\text{cal}}\}$. Then by Lemma 1 in \citet{romano2019conformalized}, it follows that $\mathbb{P}(Y_{t}^{(n+1)} \in \hat{C}^{(n+1)}_{t} \forall t \in [T] ) = \mathbb{P}(  \hat{\epsilon}_{n+1}(\gamma) \leq \hat{Q}(1-\alpha, \gamma)  )\geq 1-\alpha$.
\end{proof}

\clearpage

\section{Algorithms}\label{app:algorithms}

\begin{algorithm}[!htb]
    \caption{Model selection component of CAFHT}
    \label{alg:fixed_CAFHT_ds-model-selection}
    \begin{algorithmic} [1]
        \STATE \textbf{Input}: A pre-trained forecaster $\hat{g}$ producing one-step-ahead predictions;
        calibration trajectories $\mathcal{D}_{\text{cal}}^1$;
        a grid of candidate learning rates $\{\gamma_1, \dots, \gamma_L\}$.
        \FOR{$\ell \in [L]$}
            \STATE Construct $\hat{C}^{\text{ACI}}(\bm{Y}^{(i)}, \gamma_\ell)$ using ACI, for $i \in \mathcal{D}_{\text{cal}}^1$.
            \STATE Evaluate $\hat{\epsilon}_i (\gamma_\ell)$ using~\eqref{eq:nonconf_scores}, for $i \in \mathcal{D}_{\text{cal}}^1$.
            \STATE Compute $\hat{Q}(1 - \alpha, \gamma_\ell)$, the $(1-\alpha)(1+1/|\mathcal{D}_{\text{cal}}^1|)$-th quantile of $\{ \hat{\epsilon}_i (\gamma_\ell), i \in \mathcal{D}_{\text{cal}}^1\}$.
            \STATE Construct $\hat{C}(\bm{Y}^{(i)}, \gamma_\ell) = ( \hat{C}_1(\bm{Y}^{(i)}, \gamma_\ell), \dots, \hat{C}_T(\bm{Y}^{(i)}, \gamma_\ell) )$ using~\eqref{eq:predict_bands} for $i \in \mathcal{D}_{\text{cal}}^1$.
        \ENDFOR
        \STATE Pick $\hat{\gamma}$ such that,
        \begin{equation}
            \hat{\gamma} := \argmin_{\ell \in [L]} \text{AvgWidth}(C(\bm{Y}^{(i)}, \gamma_\ell)).
        \end{equation}
        \STATE \textbf{Output}: Selected learning rate parameter $\hat{\gamma}$.
\end{algorithmic}
\end{algorithm}

\begin{algorithm}[!htb]
    \caption{CAFHT - multiplicative scores}
    \label{alg:adaptive_CAFHT_ds}
    \begin{algorithmic} [1]
        \STATE \textbf{Input}: A pre-trained forecaster $\hat{g}$ producing one-step-ahead predictions;
        calibration trajectories $\mathcal{D}_{\text{cal}}$; the initial position $Y_0^{(n+1)}$ of a test trajectory $\bm{Y}^{(n+1)}$;
        the desired nominal level $\alpha \in (0,1)$;
        a grid of candidate learning rates $\{\gamma_1, \dots, \gamma_L\}$.
        \STATE Randomly split $\mathcal{D}_{\text{cal}}$ into $\mathcal{D}_{\text{cal}}^1$ and $\mathcal{D}_{\text{cal}}^2$.
        \STATE Select a learning rate $\hat{\gamma} \in \{\gamma_1, \ldots, \gamma_L\}$, applying Algorithm~\ref{alg:adaptive_CAFHT_ds-model-selection} using the trajectory data in $\mathcal{D}_{\text{cal}}^1$.
        \STATE Construct $\hat{C}^{\text{ACI}}(\bm{Y}^{(i)}, \hat{\gamma})$ using ACI, for $i \in \mathcal{D}_{\text{cal}}^2$.
        \STATE Evaluate $\hat{\epsilon}_i(\hat{\gamma})$ using~\eqref{eq:nonconf_scores_adap}, for $i \in \mathcal{D}_{\text{cal}}^2$.
        \STATE Compute the empirical quantile $\hat{Q}(1-\alpha, \hat{\gamma})$.
        \FOR{$t \in [T]$}
        \STATE Compute $\hat{C}^{\text{ACI}}_t(\bm{Y}^{(n+1)}, \hat{\gamma})$ with ACI, using the past of the test trajectory $(Y_{0}^{(n+1)},Y_{1}^{(n+1)},\ldots,Y_{t-1}^{(n+1)})$.
        \STATE Compute a prediction interval $\hat{C}_{t}(\bm{Y}^{(n+1)}, \hat{\gamma} )$ for the next step, using the multiplicative version of~\eqref{eq:predict_bands}.
        \STATE Observe the next step of the trajectory, $Y_t^{(n+1)}$.
        \ENDFOR
        \STATE \textbf{Output}: An online prediction band $\hat{C}(\bm{Y}^{(n+1)})$.
\end{algorithmic}
\end{algorithm}

\begin{algorithm}[!htb]
    \caption{Model selection component of CAFHT - multiplicative scores}
    \label{alg:adaptive_CAFHT_ds-model-selection}
    \begin{algorithmic} [1]
        \STATE \textbf{Input}: A pre-trained forecaster $\hat{g}$ producing one-step-ahead predictions;
        calibration trajectories $\mathcal{D}_{\text{cal}}^1$;
        a grid of candidate learning rates $\{\gamma_1, \dots, \gamma_L\}$.
        \FOR{$\ell \in [L]$}
            \STATE Construct $\hat{C}^{\text{ACI}}(\bm{Y}^{(i)}, \gamma_\ell)$ using ACI, for $i \in \mathcal{D}_{\text{cal}}^1$.
            \STATE Evaluate $\hat{\epsilon}_i (\gamma_\ell)$ using~\eqref{eq:nonconf_scores_adap}, for $i \in \mathcal{D}_{\text{cal}}^1$.
            \STATE Compute $\hat{Q}(1 - \alpha, \gamma_\ell)$, the $(1-\alpha)(1+1/|\mathcal{D}_{\text{cal}}^1|)$-th quantile of $\{ \hat{\epsilon}_i (\gamma_\ell), i \in \mathcal{D}_{\text{cal}}^1\}$.
            \STATE Construct $\hat{C}(\bm{Y}^{(i)}, \gamma_\ell) = ( \hat{C}_1(\bm{Y}^{(i)}, \gamma_\ell), \dots, \hat{C}_T(\bm{Y}^{(i)}, \gamma_\ell) )$ for $i \in \mathcal{D}_{\text{cal}}^1$,  using the multiplicative version of~\eqref{eq:predict_bands}.
        \ENDFOR
        \STATE Pick $\hat{\gamma}$ such that,
        \begin{equation}
            \hat{\gamma} := \argmin_{\ell \in [L]} \text{AvgWidth}(C(\bm{Y}^{(i)}, \gamma_\ell)).
        \end{equation}
        \STATE \textbf{Output}: Selected learning rate parameter $\hat{\gamma}$.
\end{algorithmic}
\end{algorithm}

\FloatBarrier

\section{Parameter Tuning for CAFHT Without Data Splitting}\label{app:theory}

Here, we outline an alternate implementation of CAFHT which, in contrast to the primary method described in Section~\ref{sec:method-selection}, obviates the need for additional subdivision of the calibration data in $\mathcal{D}_{\text{cal}}$ for selecting an optimal value of the ACI learning rate parameter $\gamma$.
In essence, this version of CAFHT employs the same calibration dataset $\mathcal{D}_{\text{cal}}$ for both choosing $\hat{\gamma}$ and calibrating the conformal margin of error via $\hat{Q}(1-\alpha', \hat{\gamma})$. It does so by using a judiciously selected $\alpha' < \alpha$ to compensate for the selection step. Enabled by the theoretical results of \citet{yang2021finite} and \citet{liang2023ces}, this method is outlined below by Algorithms~\ref{alg:fixed_CAFHT_theory}--\ref{alg:fixed_CAFHT_theory-model-selection} using additive conformity scores, and by Algorithms~\ref{alg:adaptive_CAFHT_theory}--\ref{alg:adaptive_CAFHT_theory-model-selection} using multiplicative conformity scores.

In the following, we will assume that the goal is for CAFHT to select a good $\hat{\gamma}$ from a list of $L$ candidate parameter values, $\gamma_1$, \ldots, $\gamma_L$, for some fixed integer $L \geq 1$.


Using the DKW inequality, \citet{yang2021finite} proves that, when calibrating at the nominal level $\alpha$, a conformal prediction set $\hat{C}^{(n+1)}$ constructed after using the same calibration set $\mathcal{D}_{\text{cal}}$ to select the best model among $L$ candidates may have an inflated coverage rate in the following form:
\begin{equation}\label{eq:dkw_bound}
    \mathbb{P}(Y^{(n+1)} \in \hat{C}^{(n+1)}) \geq \left( 1+\frac{1}{|\mathcal{D}_{\text{cal}}|}\right)(1-\alpha) - \frac{\sqrt{\log(2L)/2}+ c(L)}{\sqrt{|\mathcal{D}_{\text{cal}}|}},
\end{equation}
where $c(L)$ is a constant that is generally smaller than $1/3$ and can be computed explicitly,
$$c(L) = \frac{\sqrt{2}Le^{-\log(2L)}}{\sqrt{\log(2L)}+\sqrt{\log(2L)+4/\pi}}.$$
This justifies applying CAFHT, without data splitting, using $\hat{Q}(1-\alpha'_{\text{DKW}}, \hat{\gamma})$ instead of $\hat{Q}(1-\alpha, \hat{\gamma})$, where
\begin{align*}
    & \alpha'_{\text{DKW}} = 1- \frac{1-\alpha+ \text{err}}{1+1/|\mathcal{D}_{\text{cal}}|},
    & \text{err} = \frac{\sqrt{\log(2L) /2}+c(L)}{\sqrt{|\mathcal{D}_{\text{cal}}|}}.
\end{align*}

A further refinement of this approach was proposed by \citet{liang2023ces}, which suggested instead using
\begin{equation}\label{eq:theoretical_correction}
    \alpha' = \max\{ \alpha'_{\text{Markov}}, \alpha'_{\text{DKW}}  \},
\end{equation}
where $\alpha'_{\text{Markov}}$ is computed as follows.
By combining the results of \citet{vovk2012conditional} with Markov's inequality, \citet{liang2023ces} proved the following inequality in the same context of~\eqref{eq:dkw_bound}:
\begin{equation}\label{eq:markov_bound}
\mathbb{P}(Y^{(n+1)} \in \hat{C}^{(n+1)}) \geq I^{-1}\left(\frac{1}{bL}; |\mathcal{D}_{\text{cal}}|+1-l,l\right) \cdot (1-1/b),
\end{equation}
where $I^{-1}(x; |\mathcal{D}_{\text{cal}}|+1-l,l)$ is the inverse Beta cumulative distribution function with $l=\lfloor \alpha(|\mathcal{D}_{\text{cal}}|+1) \rfloor$, and $b>1$ is any fixed constant.
Therefore, the desired value of $\alpha'_{\text{Markov}}$ can be calculated by inverting~\eqref{eq:markov_bound} numerically, with the choice of $b=100$ recommended by \citet{liang2023ces}.
In particular, we generate a grid of $\hat{\alpha}$ candidates, evaluate the Markov lower bounds associated with each $\hat{\alpha}$, and then return the largest possible $\hat{\alpha}$ such that its Markov bound is greater than $1-\alpha$.

A potential advantage of the bound in~\eqref{eq:markov_bound} relative to~\eqref{eq:dkw_bound} is that the $[\sqrt{\log(2L)/2}+c(L)]/ \sqrt{|\mathcal{D}_{\text{cal}}|}$ term in the latter does not depend on $\alpha$. That makes~\eqref{eq:dkw_bound} sometimes too conservative when $\alpha$ is small; see Appendix A1.2 in \citet{liang2023ces}. However, neither bound always dominates the other, hence why we adaptively follow the tighter one using \eqref{eq:theoretical_correction}.

The performance of CAFHT applied without data splitting, relying instead on the theoretical correction for parameter tuning described above, is investigated empirically in Appendix \ref{app:more_experiment}.


\begin{algorithm}[!htb]
    \caption{CAFHT (theory)}
    \label{alg:fixed_CAFHT_theory}
    \begin{algorithmic} [1]
        \STATE \textbf{Input}: A pre-trained forecaster $\hat{g}$ producing one-step-ahead predictions;
        calibration trajectories $\mathcal{D}_{\text{cal}}$; the initial position $Y_0^{(n+1)}$ of a test trajectory $\bm{Y}^{(n+1)}$;
        the desired nominal level $\alpha \in (0,1)$;
        a grid of candidate learning rates $\{\gamma_1, \dots, \gamma_L\}$.
        \STATE Select a learning rate $\hat{\gamma} \in \{\gamma_1, \ldots, \gamma_L\}$, applying Algorithm~\ref{alg:fixed_CAFHT_theory-model-selection} using the trajectory data in $\mathcal{D}_{\text{cal}}$.
        \STATE Construct $\hat{C}^{\text{ACI}}(\bm{Y}^{(i)}, \hat{\gamma})$ using ACI, for $i \in \mathcal{D}_{\text{cal}}$.
        \STATE Evaluate $\hat{\epsilon}_i(\hat{\gamma})$ using~\eqref{eq:nonconf_scores_adap}, for $i \in \mathcal{D}_{\text{cal}}$.
        \STATE Compute the empirical quantile $\hat{Q}(1-\alpha', \hat{\gamma})$, where $\alpha'$ is defined in~\eqref{eq:theoretical_correction}.
        \FOR{$t \in [T]$}
        \STATE Compute $\hat{C}^{\text{ACI}}_t(\bm{Y}^{(n+1)}, \hat{\gamma})$ with ACI, using the past of the test trajectory $(Y_{0}^{(n+1)},Y_{1}^{(n+1)},\ldots,Y_{t-1}^{(n+1)})$.
        \STATE Compute a prediction interval $\hat{C}_{t}(\bm{Y}^{(n+1)}, \hat{\gamma} )$ for the next step, using~\eqref{eq:predict_bands} with $\hat{Q}(1-\alpha',\hat{\gamma})$.
        \STATE Observe the next step of the trajectory, $Y_t^{(n+1)}$.
        \ENDFOR
        \STATE \textbf{Output}: An online prediction band $\hat{C}(\bm{Y}^{(n+1)})$.
\end{algorithmic}
\end{algorithm}

\begin{algorithm}[!htb]
    \caption{Model selection component of CAFHT (theory)}
    \label{alg:fixed_CAFHT_theory-model-selection}
    \begin{algorithmic} [1]
        \STATE \textbf{Input}: A pre-trained forecaster $\hat{g}$ producing one-step-ahead predictions;
        calibration trajectories $\mathcal{D}_{\text{cal}}$;
        a grid of candidate learning rates $\{\gamma_1, \dots, \gamma_L\}$.
        \FOR{$\ell \in [L]$}
            \STATE Construct $\hat{C}^{\text{ACI}}(\bm{Y}^{(i)}, \gamma_\ell)$ using ACI, for $i \in \mathcal{D}_{\text{cal}}$.
            \STATE Evaluate $\hat{\epsilon}_i (\gamma_\ell)$ using~\eqref{eq:nonconf_scores}, for $i \in \mathcal{D}_{\text{cal}}$.
            \STATE Compute $\hat{Q}(1 - \alpha', \gamma_\ell)$, the $(1-\alpha')(1+1/|\mathcal{D}_{\text{cal}}|)$-th smallest value of $\{ \hat{\epsilon}_i (\gamma_\ell), i \in \mathcal{D}_{\text{cal}}\}$, where $\alpha'$ is defined in~\eqref{eq:theoretical_correction}.
            \STATE Construct $\hat{C}(\bm{Y}^{(i)}, \gamma_\ell) = ( \hat{C}_1(\bm{Y}^{(i)}, \gamma_\ell), \dots, \hat{C}_T(\bm{Y}^{(i)}, \gamma_\ell) )$ using~\eqref{eq:predict_bands} for $i \in \mathcal{D}_{\text{cal}}$.
        \ENDFOR
        \STATE Pick $\hat{\gamma}$ such that,
        \begin{equation}
            \hat{\gamma} := \argmin_{\ell \in [L]} \text{AvgWidth}(C(\bm{Y}^{(i)}, \gamma_\ell)).
        \end{equation}
        \STATE \textbf{Output}: Selected learning rate parameter $\hat{\gamma}$.
\end{algorithmic}
\end{algorithm}

\begin{algorithm}[!htb]
    \caption{CAFHT (theory) - multiplicative scores}
    \label{alg:adaptive_CAFHT_theory}
    \begin{algorithmic} [1]
        \STATE \textbf{Input}: A pre-trained forecaster $\hat{g}$ producing one-step-ahead predictions;
        calibration trajectories $\mathcal{D}_{\text{cal}}$; the initial position $Y_0^{(n+1)}$ of a test trajectory $\bm{Y}^{(n+1)}$;
        the desired nominal level $\alpha \in (0,1)$;
        a grid of candidate learning rates $\{\gamma_1, \dots, \gamma_L\}$.
        \STATE Select a learning rate $\hat{\gamma} \in \{\gamma_1, \ldots, \gamma_L\}$, applying Algorithm~\ref{alg:adaptive_CAFHT_theory-model-selection} using the trajectory data in $\mathcal{D}_{\text{cal}}$.
        \STATE Construct $\hat{C}^{\text{ACI}}(\bm{Y}^{(i)}, \hat{\gamma})$ using ACI, for $i \in \mathcal{D}_{\text{cal}}$.
        \STATE Evaluate $\hat{\epsilon}_i(\hat{\gamma})$ using the multiplicative version of~\eqref{eq:predict_bands}, for $i \in \mathcal{D}_{\text{cal}}$.
        \STATE Compute the empirical quantile $\hat{Q}(1-\alpha', \hat{\gamma})$, where $\alpha'$ is defined in~\eqref{eq:theoretical_correction}.
        \FOR{$t \in [T]$}
        \STATE Compute $\hat{C}^{\text{ACI}}_t(\bm{Y}^{(n+1)}, \hat{\gamma})$ with ACI, using the past of the test trajectory $(Y_{0}^{(n+1)},Y_{1}^{(n+1)},\ldots,Y_{t-1}^{(n+1)})$.
        \STATE Compute $\hat{C}_{t}(\bm{Y}^{(n+1)}, \hat{\gamma} )$ for the next step, using the multiplicative version of~\eqref{eq:predict_bands} with $\hat{Q}(1-\alpha',\hat{\gamma})$.
        \STATE Observe the next step of the trajectory, $Y_t^{(n+1)}$.
        \ENDFOR
        \STATE \textbf{Output}: An online prediction band $\hat{C}(\bm{Y}^{(n+1)})$.
\end{algorithmic}
\end{algorithm}

\begin{algorithm}[!htb]
    \caption{Model selection component of CAFHT (theory) - multiplicative scores}
    \label{alg:adaptive_CAFHT_theory-model-selection}
    \begin{algorithmic} [1]
        \STATE \textbf{Input}: A pre-trained forecaster $\hat{g}$ producing one-step-ahead predictions;
        calibration trajectories $\mathcal{D}_{\text{cal}}$;
        a grid of candidate learning rates $\{\gamma_1, \dots, \gamma_L\}$.
        \FOR{$\ell \in [L]$}
            \STATE Construct $\hat{C}^{\text{ACI}}(\bm{Y}^{(i)}, \gamma_\ell)$ using ACI, for $i \in \mathcal{D}_{\text{cal}}$.
            \STATE Evaluate $\hat{\epsilon}_i (\gamma_\ell)$ using the multiplicative version of~\eqref{eq:predict_bands}, for $i \in \mathcal{D}_{\text{cal}}$.
            \STATE Compute $\hat{Q}(1 - \alpha', \gamma_\ell)$, the $(1-\alpha')(1+1/|\mathcal{D}_{\text{cal}}|)$-th quantile of $\{ \hat{\epsilon}_i (\gamma_\ell), i \in \mathcal{D}_{\text{cal}}\}$, where $\alpha'$ is defined in~\eqref{eq:theoretical_correction}.
            \STATE Construct $\hat{C}(\bm{Y}^{(i)}, \gamma_\ell) = ( \hat{C}_1(\bm{Y}^{(i)}, \gamma_\ell), \dots, \hat{C}_T(\bm{Y}^{(i)}, \gamma_\ell) )$ for $i \in \mathcal{D}_{\text{cal}}$.
        \ENDFOR
        \STATE Pick $\hat{\gamma}$ such that,
        \begin{equation}
            \hat{\gamma} := \argmin_{\ell \in [L]} \text{AvgWidth}(C(\bm{Y}^{(i)}, \gamma_\ell)).
        \end{equation}
        \STATE \textbf{Output}: Selected learning rate parameter $\hat{\gamma}$.
\end{algorithmic}
\end{algorithm}

\FloatBarrier

\clearpage

\section{Additional Experimental Results}\label{app:more_experiment}

\subsection{Synthetic Data}

\subsubsection{Main Results | Comparing CAFHT to CFRNN and NCTP}\label{app:main_results}

\textbf{AR data with dynamic noise profile.}
Firstly, we investigate the performance of the three methods considered in this paper, namely CAFHT, CFRNN, and NCTP, using synthetic data from an AR model with dynamic noise profile.
The default settings of the experiments are as described in Section~\ref{sec:experiment}, but this appendix contains more detailed results.

Figure~\ref{fig:main_exp_sim_ndata} and Table~\ref{tab:main_exp_sim_ndata} report on the average performance on simulated heterogeneous trajectories of prediction bands constructed by different methods as a function of the total number of training and calibration trajectories.
The number of trajectories is varied between 200 and 10,000. All methods achieve 90\% simultaneous marginal coverage.
As discussed earlier in Section~\ref{sec:experiment}, these results show that our method (CAFHT) leads to more informative bands with lower average width and higher conditional coverage.

Figure~\ref{fig:main_exp_sim_horizon} and Table~\ref{tab:main_exp_sim_horizon} show the performance of prediction bands constructed by different methods, as a function of the prediction horizon, which is varied between 5 and 100. As the prediction horizon increases, the CFRNN method becomes more and more conservative, while the CAFHT method can consistently produce small predicting bands while maintaining relatively high conditional coverage.

Figure~\ref{fig:main_exp_sim_ndim} and Table~\ref{tab:main_exp_sim_ndim} report on the performance of all methods as a function of the dimensionality of the trajectories, which is varied between 1 and 10. Again, the results show that the CAFHT method leads to more informative bands with lower average width and higher conditional coverage.

Figure~\ref{fig:main_exp_sim_delta} and Table~\ref{tab:main_exp_sim_delta} report on the performances of these methods as a function of the proportion $\delta \in [0,1]$ of hard trajectories in the population.
We assess these results at $\delta$ values of 0.1, 0.2, and 0.5. It is observed that when the dataset contains a small number of hard-to-predict trajectories, the CAFHT method achieves superior conditional coverage and yields a narrower prediction band compared to the NCTP method. As the fraction of difficult-to-predict trajectories increases, the performance of NCTP improves (there would be no heterogeneity issue if all trajectories were ``hard to predict"). Nonetheless, the CAFHT method consistently produces the narrowest, and thereby the most informative, prediction bands across the range of $\delta$ values considered.

Finally, Figure~\ref{fig:main_exp_sim_delta_test} and Table~\ref{tab:main_exp_sim_delta_test} investigate the robustness of all methods to distribution shifts.
To this end, we kept the proportion of difficult-to-predict trajectories at $0.1$ in both the training and calibration datasets, but varied this proportion in the test dataset, altering $\delta$ from $0.2$ to $0.9$ in the test set. Under these circumstances, as the calibration set and test set are not exchangeable, no method can ensure marginal coverage at the intended 90\% level.
However, as shown in Figure~\ref{fig:main_exp_sim_delta_test} and Table~\ref{tab:main_exp_sim_delta_test}, CAFHT, in practice, tends to achieve higher marginal coverage compared to NCTP. This is consistent with the fact that CAFHT typically leads to higher conditional coverage in the absence of distribution shifts \citep{einbinder2022training}.
Additionally, the increasing width of the CAFHT prediction bands as the strength of the distribution shift grows demonstrates its enhanced ability to accurately measure predictive uncertainty.

\begin{table}[!htb]
\centering
    \caption{Performance on simulated heterogeneous trajectories of prediction bands constructed by different methods, as a function of the total number of training and calibration trajectories. The red numbers indicate smaller prediction bands or higher conditional coverage. See the corresponding plot in Figure~\ref{fig:main_exp_sim_ndata}. }
  \label{tab:main_exp_sim_ndata}
  \input{tables_new/main_exp_sim_dynamic_ndata}
\end{table}

\begin{table}[!htb]
\centering
    \caption{Performance on simulated heterogeneous trajectories of prediction bands constructed by different methods, as a function of the prediction horizon. The red numbers indicate smaller prediction bands or higher conditional coverage. See corresponding plot in Figure~\ref{fig:main_exp_sim_horizon}.}
  \label{tab:main_exp_sim_horizon}
  \input{tables_new/main_exp_sim_dynamic_horizon}
\end{table}

\begin{figure}[!htb]
    \centering
    \includegraphics[width=\linewidth]{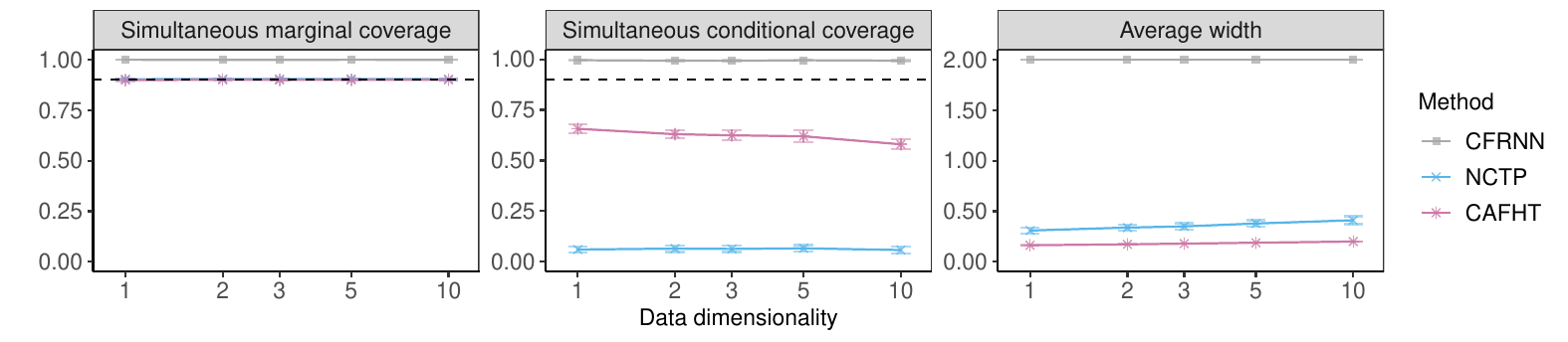}\vspace{-0.5cm}
    \caption{Performance on simulated heterogeneous trajectories of prediction bands constructed by different methods, as a function of the data dimensionality. See Table~\ref{tab:main_exp_sim_ndim} for more detailed results and standard errors.}
    \label{fig:main_exp_sim_ndim}
\end{figure}

\begin{table}[!htb]
\centering
    \caption{Performance on simulated heterogeneous trajectories of prediction bands constructed by different methods, as a function of the data dimensionality. The red numbers indicate smaller prediction bands or higher conditional coverage. See the corresponding plot in Figure~\ref{fig:main_exp_sim_ndim}.}
  \label{tab:main_exp_sim_ndim}
  \input{tables_new/main_exp_sim_dynamic_ndim}
\end{table}

\begin{figure}[!htb]
    \centering
    \includegraphics[width=\linewidth]{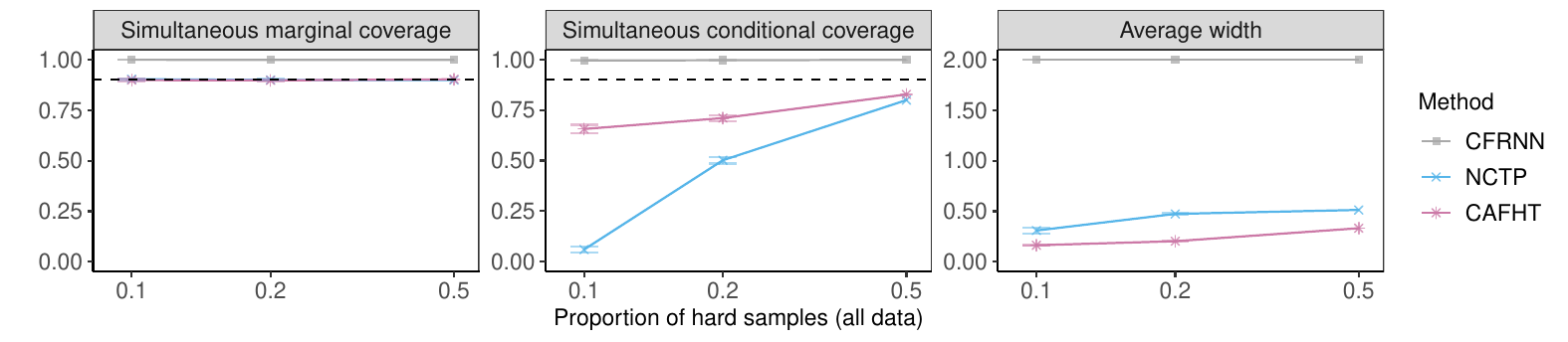}\vspace{-0.5cm}
    \caption{Performance on simulated heterogeneous trajectories of prediction bands constructed by different methods, as a function of the proportion of hard-to-predict trajectories. See Table~\ref{tab:main_exp_sim_delta} for more detailed results and standard errors.}
    \label{fig:main_exp_sim_delta}
\end{figure}

\begin{table}[!htb]
\centering
    \caption{Performance on simulated heterogeneous trajectories of prediction bands constructed by different methods, as a function of the proportion of hard-to-predict trajectories. The red numbers indicate smaller prediction bands or higher conditional coverage. See the corresponding plot in Figure~\ref{fig:main_exp_sim_delta}.}
  \label{tab:main_exp_sim_delta}
  \input{tables_new/main_exp_sim_dynamic_delta}
\end{table}

\begin{figure}[!htb]
    \centering
    \includegraphics[width=0.75\linewidth]{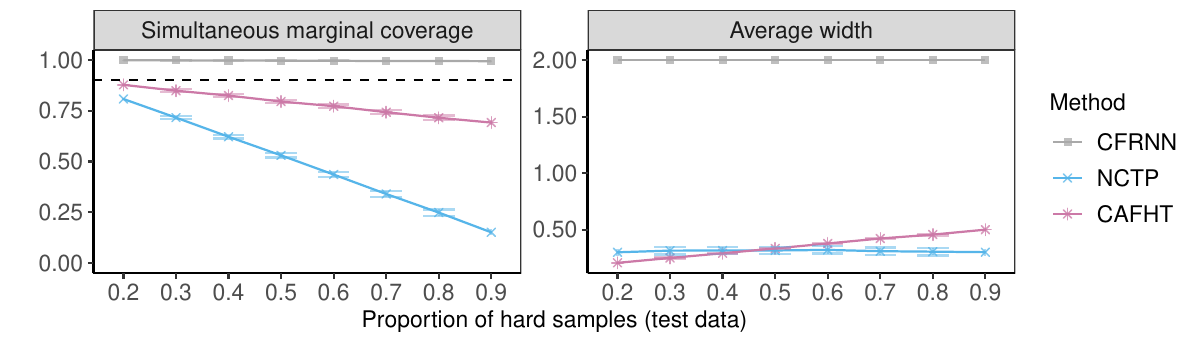}\vspace{-0.5cm}
    \caption{Performance on simulated heterogeneous trajectories of prediction bands constructed by different methods under distributional shift. The results are shown as a function of the proportion of hard-to-predict trajectories in the test data. See Table~\ref{tab:main_exp_sim_delta_test} for more detailed results and standard errors.}
    \label{fig:main_exp_sim_delta_test}
\end{figure}

\begin{table}[!htb]
\centering
    \caption{Performance on simulated heterogeneous trajectories of prediction bands constructed by different methods under distributional shift. The results are shown as a function of the proportion of hard-to-predict trajectories in the test data.  The red numbers indicate higher marginal coverage. See the corresponding plot in Figure~\ref{fig:main_exp_sim_delta_test}.}
  \label{tab:main_exp_sim_delta_test}
  \input{tables_new/main_exp_sim_dynamic_delta_test}
\end{table}

\FloatBarrier

\textbf{AR data with static noise profile.} Next, we present the results based on data generated from the AR model with the static noise profile.

\begin{figure}[!htb]
    \centering
    \includegraphics[width=\linewidth]{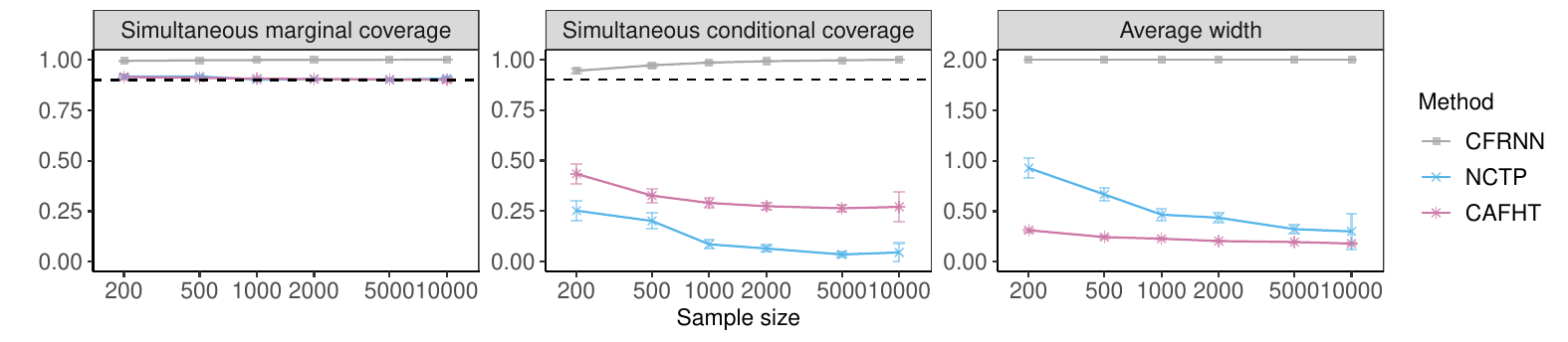}\vspace{-0.5cm}
    \caption{Performance on simulated heterogeneous trajectories of prediction bands constructed by different methods, as a function of the total number of training and calibration trajectories. See Table~\ref{tab:main_exp_sim_static_ndata} for more detailed results and standard errors.}
    \label{fig:main_exp_sim_static_ndata}
\end{figure}

\begin{table}[!htb]
\centering
    \caption{Performance on simulated heterogeneous trajectories of prediction bands constructed by different methods, as a function of the total number of training and calibration trajectories. The red numbers indicate smaller prediction bands or higher conditional coverage. See the corresponding plot in Figure~\ref{fig:main_exp_sim_static_ndata}.}
  \label{tab:main_exp_sim_static_ndata}
  \input{tables_new/main_exp_sim_static_ndata}
\end{table}

\begin{figure}[!htb]
    \centering
    \includegraphics[width=\linewidth]{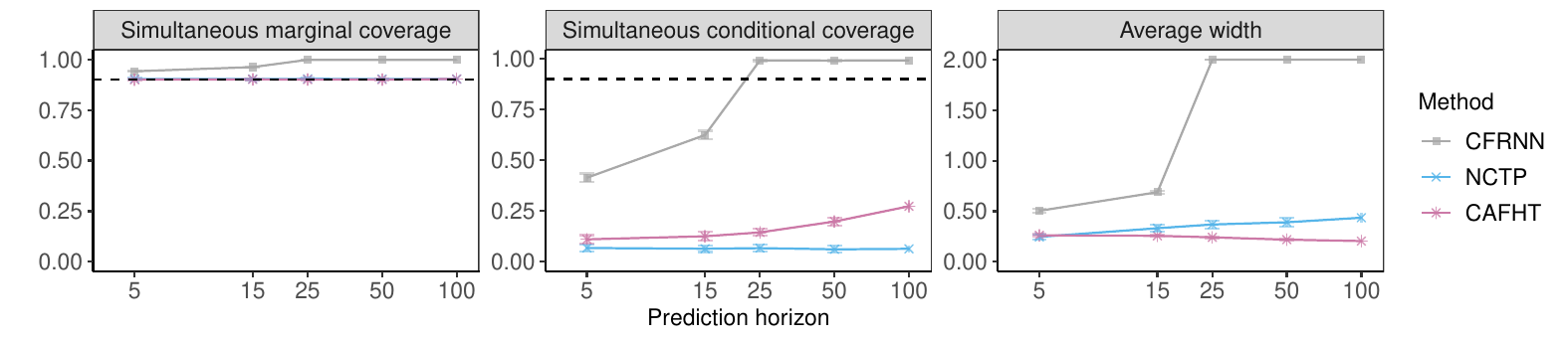}\vspace{-0.5cm}
    \caption{Performance on simulated heterogeneous trajectories of prediction bands constructed by different methods, as a function of the prediction horizon. See Table~\ref{tab:main_exp_sim_static_horizon} for more detailed results and standard errors.}
    \label{fig:main_exp_sim_static_horizon}
\end{figure}

\begin{table}[!htb]
\centering
    \caption{Performance on simulated heterogeneous trajectories of prediction bands constructed by different methods, as a function of the prediction horizon. The red numbers indicate smaller prediction bands or higher conditional coverage. See corresponding plot in Figure~\ref{fig:main_exp_sim_static_horizon}.}
  \label{tab:main_exp_sim_static_horizon}
  \input{tables_new/main_exp_sim_static_horizon}
\end{table}

\begin{figure}[!htb]
    \centering
    \includegraphics[width=\linewidth]{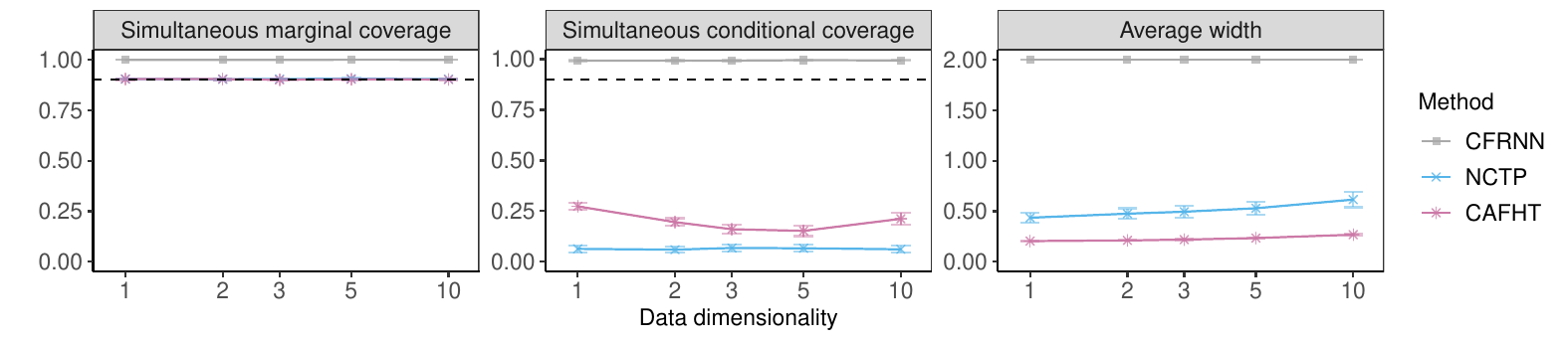}\vspace{-0.5cm}
    \caption{Performance on simulated heterogeneous trajectories of prediction bands constructed by different methods, as a function of the data dimensionality. See Table~\ref{tab:main_exp_sim_static_ndim} for more detailed results and standard errors.}
    \label{fig:main_exp_sim_static_ndim}
\end{figure}

\begin{table}[!htb]
\centering
    \caption{Performance on simulated heterogeneous trajectories of prediction bands constructed by different methods, as a function of the data dimensionality. The red numbers indicate smaller prediction bands or higher conditional coverage. See the corresponding plot in Figure~\ref{fig:main_exp_sim_static_ndim}.}
  \label{tab:main_exp_sim_static_ndim}
  \input{tables_new/main_exp_sim_static_ndim}
\end{table}

\begin{figure}[!htb]
    \centering
    \includegraphics[width=\linewidth]{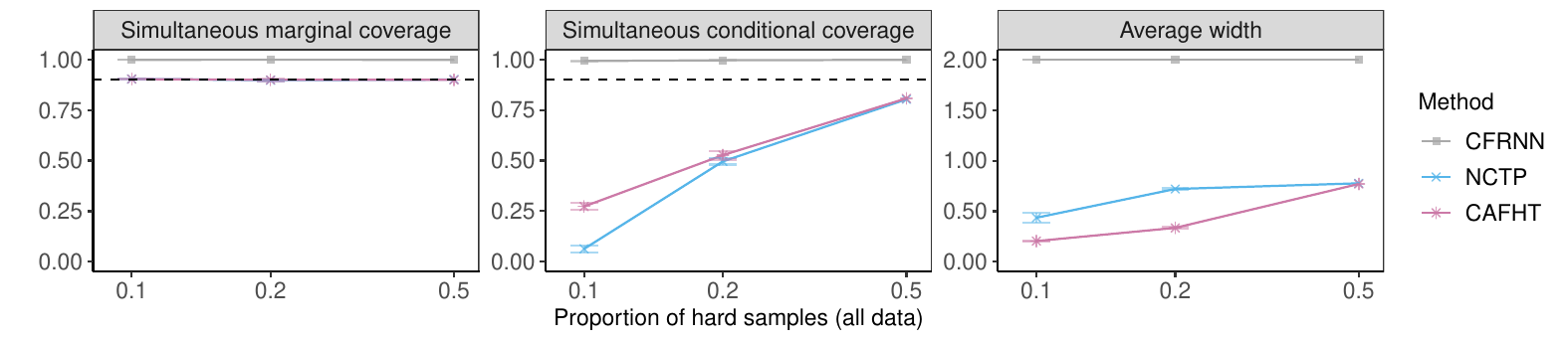}\vspace{-0.5cm}
    \caption{Performance on simulated heterogeneous trajectories of prediction bands constructed by different methods, as a function of the proportion of hard-to-predict trajectories. See Table~\ref{tab:main_exp_sim_static_delta} for more detailed results and standard errors.}
    \label{fig:main_exp_sim_static_delta}
\end{figure}

\begin{table}[!htb]
\centering
    \caption{Performance on simulated heterogeneous trajectories of prediction bands constructed by different methods, as a function of the proportion of hard-to-predict trajectories. The red numbers indicate smaller prediction bands or higher conditional coverage. See the corresponding plot in Figure~\ref{fig:main_exp_sim_static_delta}.}
  \label{tab:main_exp_sim_static_delta}
  \input{tables_new/main_exp_sim_static_delta}
\end{table}

\begin{figure}[!htb]
    \centering
    \includegraphics[width=0.75\linewidth]{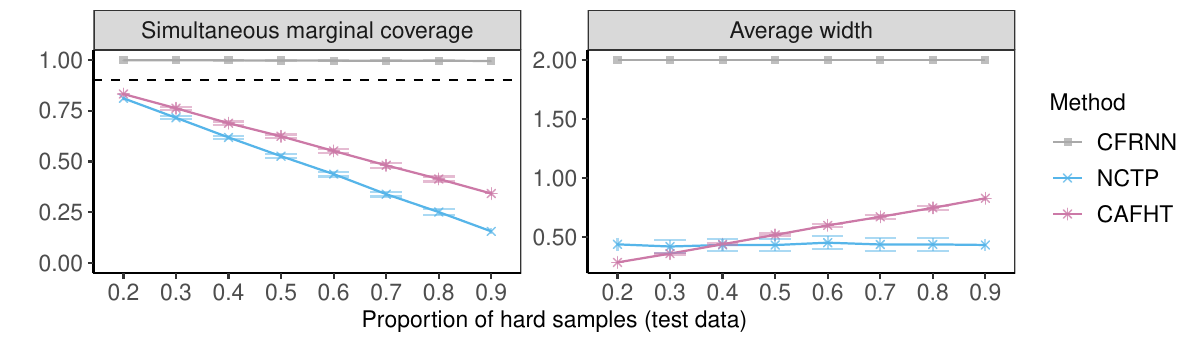}\vspace{-0.5cm}
    \caption{Performance on simulated heterogeneous trajectories of prediction bands constructed by different methods under distributional shift. The results are shown as a function of the proportion of hard-to-predict trajectories in the test data. See Table~\ref{tab:main_exp_sim_static_delta_test} for detailed results and standard errors.}
    \label{fig:main_exp_sim_static_delta_test}
\end{figure}

\begin{table}[!htb]
\centering
    \caption{Performance on simulated heterogeneous trajectories of prediction bands constructed by different methods under distributional shift. The results are shown as a function of the proportion of hard-to-predict trajectories in the test data. The red numbers indicate higher marginal coverage. See the corresponding plot in Figure~\ref{fig:main_exp_sim_static_delta_test}.}
  \label{tab:main_exp_sim_static_delta_test}
  \input{tables_new/main_exp_sim_static_delta_test}
\end{table}

\FloatBarrier

\subsubsection{Supplementary Results | Comparing Different Versions of CAFHT}
In this subsection, we add different versions of CAFHT into comparison. We will separately analyze the CAFHT prediction bands constructed using multiplicative conformity scores~\eqref{eq:nonconf_scores_adap} and those constructed using additive conformity scores~\eqref{eq:nonconf_scores}. The conclusions from the results evaluated using synthetic data with the dynamic profile and with the static profile are very similar. To save space, we only demonstrate the results using data with the static profile.

We consider the following implementations of CAFHT:
\begin{itemize}
    \item CAFHT: the main method. It is the CAFHT method based on the ACI prediction band using the data splitting strategy; see Algorithm~\ref{alg:adaptive_CAFHT_ds}.
    \item CAFHT - PID: the CAFHT method based on the conformal PID prediction band using the data splitting strategy. It can be implemented simply by substituting $\hat{C}^{\text{ACI}}$ to $\hat{C}^{\text{PID}}$ in Algorithm~\ref{alg:adaptive_CAFHT_ds} wherever possible.
    \item CAFHT (theory): the CAFHT method based on the ACI prediction band after correcting the theoretical coverage; see Appendix~\ref{app:theory} and Algorithm~\ref{alg:adaptive_CAFHT_theory}.
    \item CAFHT (theory) - PID: the CAFHT method based on the conformal PID prediction band after correcting the theoretical coverage. It can be implemented simply by substituting $\hat{C}^{\text{ACI}}$ to $\hat{C}^{\text{PID}}$ in Algorithm~\ref{alg:adaptive_CAFHT_theory} wherever possible.
\end{itemize}

\FloatBarrier

\subsubsection*{CAFHT | Multiplicative Scores}
The results of CAFHT with multiplicative conformity scores \eqref{eq:nonconf_scores_adap} are first presented.

Similar to what we have observed from the results in subsection \ref{app:main_results}, CAFHT outperforms the benchmark methods (CFRNN and NCTP) across all configurations we considered. Generally, CAFHT produces narrower, more informative bands with higher conditional coverage. Among the different versions of CAFHT, the prediction bands generated using the theoretical correction approach (outlined in \ref{app:theory}) tend to be more conservative compared to those from the data-splitting approach. Additionally, in our experiments, the performance of prediction bands constructed by CAFHT with ACI is empirically similar to those created using PID.

\begin{figure}[!htb]
    \centering
    \includegraphics[width=\linewidth]{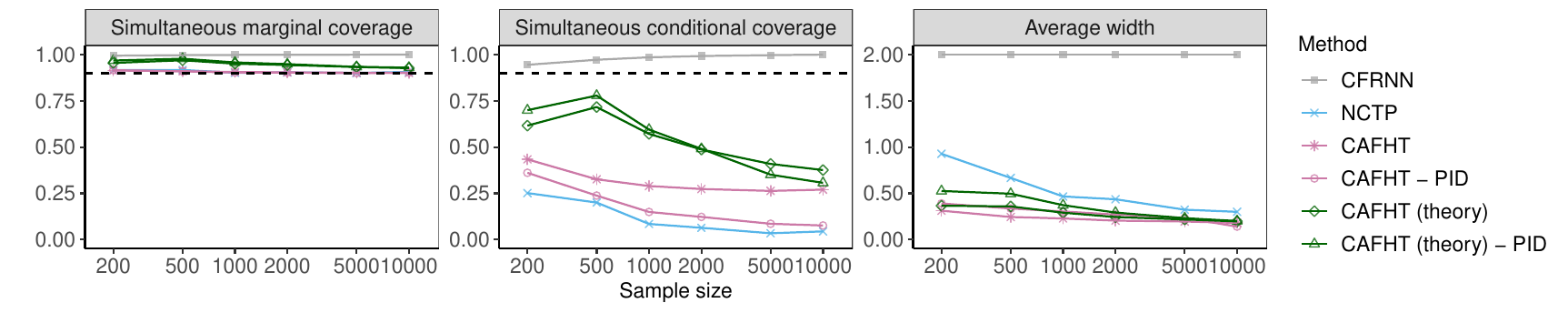}\vspace{-0.5cm}
    \caption{Performance on simulated heterogeneous trajectories of prediction bands constructed by different methods, as a function of the total number of training and calibration trajectories. See Table~\ref{tab:supp_exp_sim_static_ndata} for detailed results and standard errors.}
    \label{fig:supp_exp_sim_static_ndata}
\end{figure}

\begin{table}[!htb]
\centering
    \caption{Performance on simulated heterogeneous trajectories of prediction bands constructed by different methods, as a function of the total number of training and calibration trajectories. The red numbers indicate smaller prediction bands or higher conditional coverage. See corresponding plot in Figure~\ref{fig:supp_exp_sim_static_ndata}.}
  \label{tab:supp_exp_sim_static_ndata}
  \input{tables_new/supp_exp_sim_static_multi_ndata}
\end{table}

\begin{figure}[!htb]
    \centering
    \includegraphics[width=\linewidth]{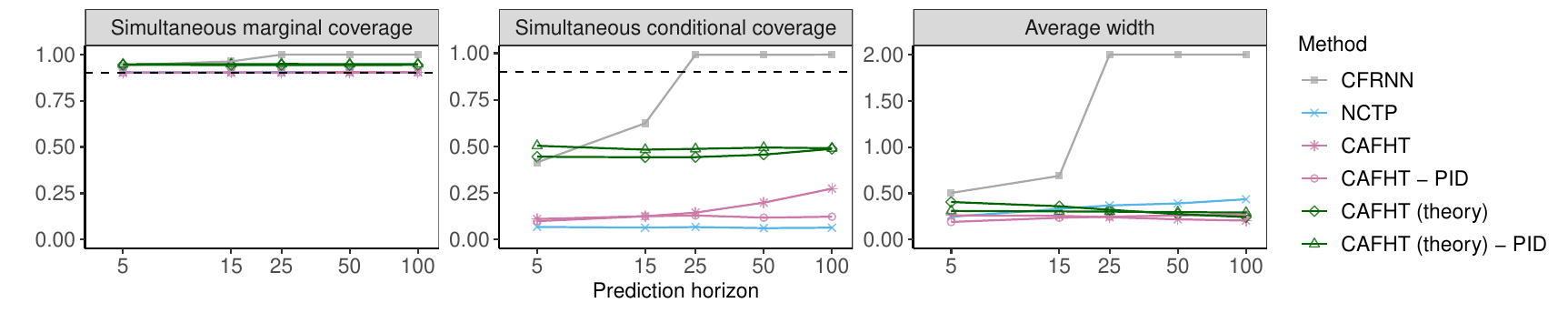}\vspace{-0.5cm}
    \caption{Performance on simulated heterogeneous trajectories of prediction bands constructed by different methods, as a function of the prediction horizon. See Table~\ref{tab:supp_exp_sim_static_horizon} for detailed results and standard errors.}
    \label{fig:supp_exp_sim_static_horizon}
\end{figure}

\begin{table}[!htb]
\centering
    \caption{Performance on simulated heterogeneous trajectories of prediction bands constructed by different methods, as a function of the prediction horizon. The red numbers indicate smaller prediction bands or higher conditional coverage. See the corresponding plot in Figure~\ref{fig:supp_exp_sim_static_horizon}.}
  \label{tab:supp_exp_sim_static_horizon}
  \input{tables_new/supp_exp_sim_static_multi_horizon}
\end{table}

\begin{figure}[!htb]
    \centering
    \includegraphics[width=\linewidth]{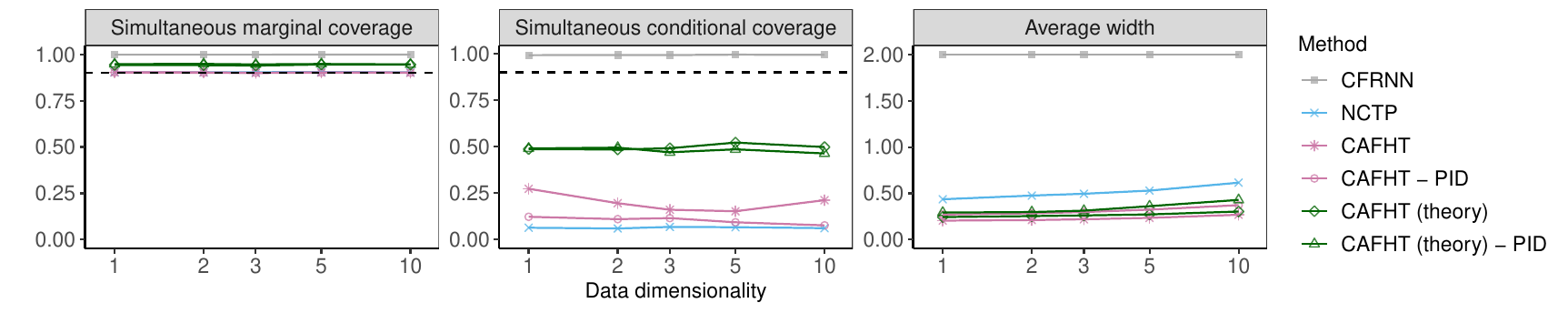}\vspace{-0.5cm}
    \caption{Performance on simulated heterogeneous trajectories of prediction bands constructed by different methods, as a function of the data dimensionality. See Table~\ref{tab:supp_exp_sim_static_ndim} for detailed results and standard errors.}
    \label{fig:supp_exp_sim_static_ndim}
\end{figure}

\begin{table}[!htb]
\centering
    \caption{Performance on simulated heterogeneous trajectories of prediction bands constructed by different methods, as a function of the data dimensionality. The red numbers indicate smaller prediction bands or higher conditional coverage. See the corresponding plot in Figure~\ref{fig:supp_exp_sim_static_ndim}.}
  \label{tab:supp_exp_sim_static_ndim}
  \input{tables_new/supp_exp_sim_static_multi_ndim.tex}
\end{table}

\begin{figure}[!htb]
    \centering
    \includegraphics[width=\linewidth]{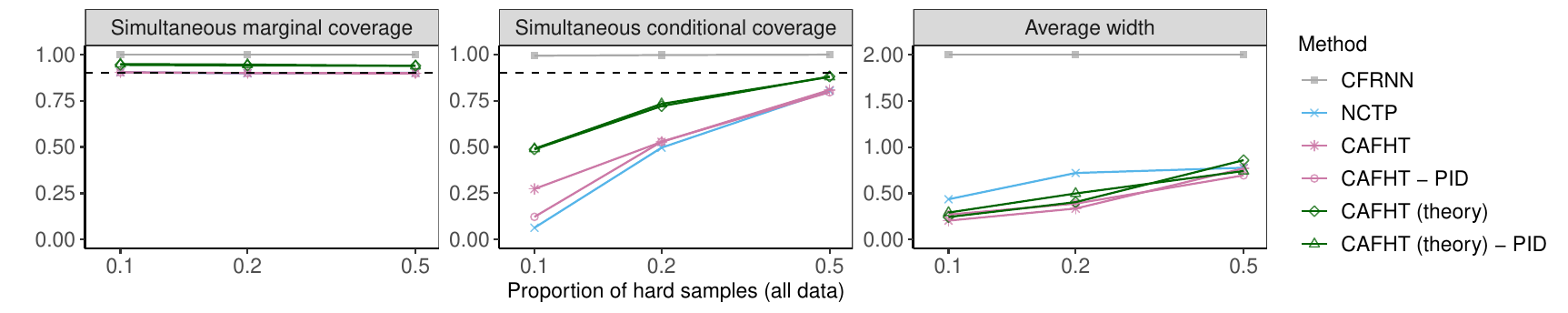}\vspace{-0.5cm}
    \caption{Performance on simulated heterogeneous trajectories of prediction bands constructed by different methods, as a function of the proportion of hard-to-predict trajectories. See Table~\ref{tab:supp_exp_sim_static_delta} for detailed results and standard errors.}
    \label{fig:supp_exp_sim_static_delta}
\end{figure}

\begin{table}[!htb]
\centering
    \caption{Performance on simulated heterogeneous trajectories of prediction bands constructed by different methods, as a function of the proportion of hard-to-predict trajectories. The red numbers indicate smaller prediction bands or higher conditional coverage. See the corresponding plot in Figure~\ref{fig:supp_exp_sim_static_delta}.}
  \label{tab:supp_exp_sim_static_delta}
  \input{tables_new/supp_exp_sim_static_multi_delta.tex}
\end{table}

\begin{figure}[!htb]
    \centering
    \includegraphics[width=0.75\linewidth]{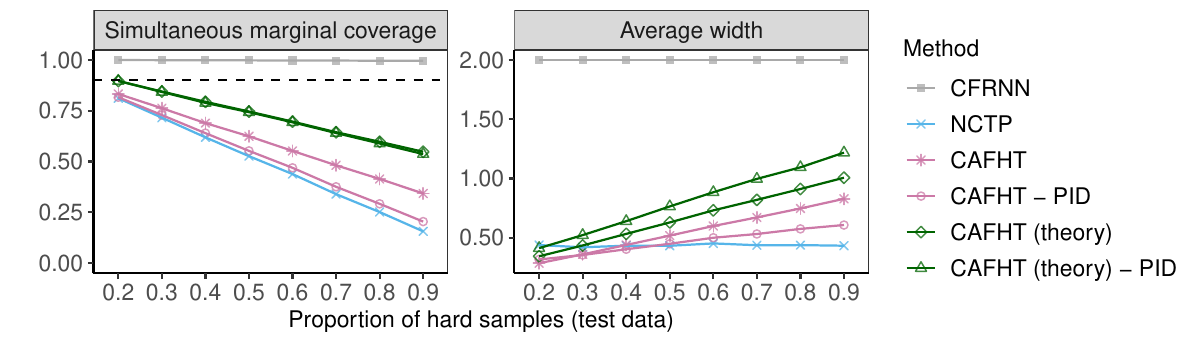}\vspace{-0.5cm}
    \caption{Performance on simulated heterogeneous trajectories of prediction bands constructed by different methods under distributional shift. The results are shown as a function of the proportion of hard-to-predict trajectories in the test data. See Table~\ref{tab:supp_exp_sim_static_delta_test} for detailed results and standard errors.}
    \label{fig:supp_exp_sim_static_delta_test}
\end{figure}

\begin{table}[!tbh]
\centering
    \caption{Performance on simulated heterogeneous trajectories of prediction bands constructed by different methods under distributional shift. The results are shown as a function of the proportion of hard-to-predict trajectories in the test data. The red numbers indicate higher marginal coverage. See the corresponding plot in Figure~\ref{fig:supp_exp_sim_static_delta_test}.}
  \label{tab:supp_exp_sim_static_delta_test}
  \input{tables_new/supp_exp_sim_static_multi_delta_test.tex}
\end{table}

\FloatBarrier

\subsubsection*{CAFHT | Additive Scores}
Finally, the results of CAFHT with additive conformity scores \eqref{eq:nonconf_scores} are presented.

\begin{figure}[!htb]
    \centering
    \includegraphics[width=\linewidth]{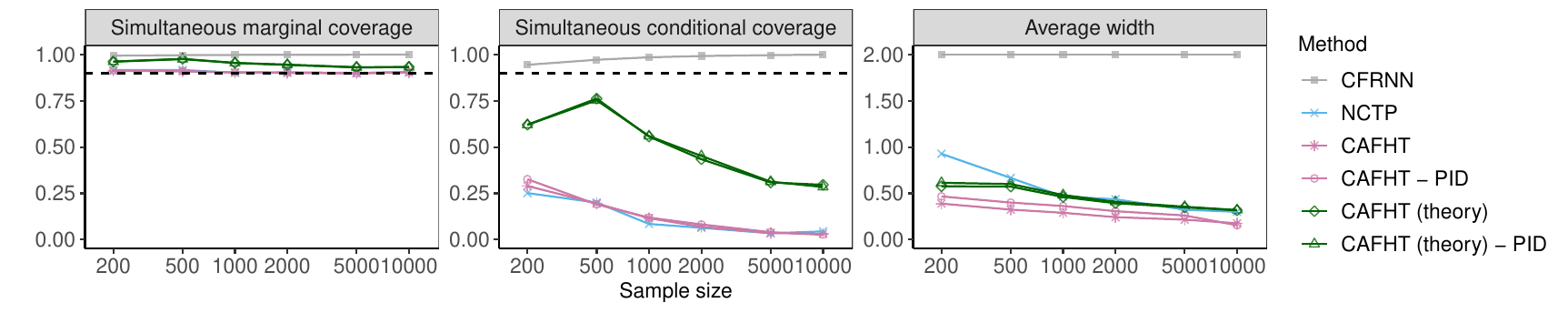}\vspace{-0.5cm}
    \caption{Performance on simulated heterogeneous trajectories of prediction bands constructed by different methods, as a function of the total number of training and calibration trajectories. See Table~\ref{tab:supp_exp_sim_static_ndata_fixed} for detailed results and standard errors.}
    \label{fig:supp_exp_sim_static_ndata_fixed}
\end{figure}

\begin{table}[!htb]
\centering
    \caption{Performance on simulated heterogeneous trajectories of prediction bands constructed by different methods, as a function of the total number of training and calibration trajectories. The red numbers indicate smaller prediction bands or higher conditional coverage. See the corresponding plot in Figure~\ref{fig:supp_exp_sim_static_ndata_fixed}.}
  \label{tab:supp_exp_sim_static_ndata_fixed}
  \input{tables_new/supp_exp_sim_static_fixed_ndata}
\end{table}

\begin{figure}[!htb]
    \centering
    \includegraphics[width=\linewidth]{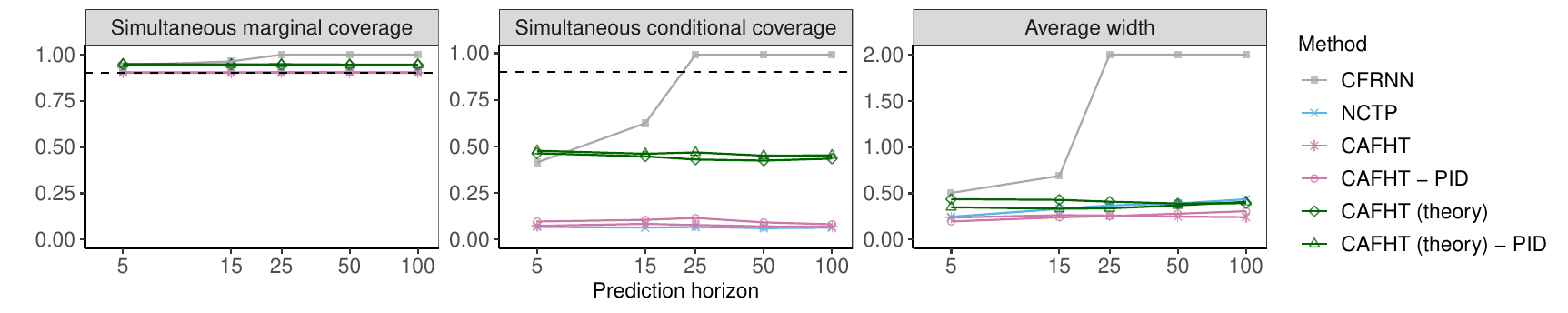}\vspace{-0.5cm}
    \caption{Performance on simulated heterogeneous trajectories of prediction bands constructed by different methods, as a function of the prediction horizon. See Table~\ref{tab:supp_exp_sim_static_horizon_fixed} for detailed results and standard errors.}
    \label{fig:supp_exp_sim_static_horizon_fixed}
\end{figure}

\begin{table}[!htb]
\centering
    \caption{Performance on simulated heterogeneous trajectories of prediction bands constructed by different methods, as a function of the prediction horizon. The red numbers indicate smaller prediction bands or higher conditional coverage. See the corresponding plot in Figure~\ref{fig:supp_exp_sim_static_horizon_fixed}.}
  \label{tab:supp_exp_sim_static_horizon_fixed}
  \input{tables_new/supp_exp_sim_static_fixed_horizon}
\end{table}

\begin{figure}[!htb]
    \centering
    \includegraphics[width=\linewidth]{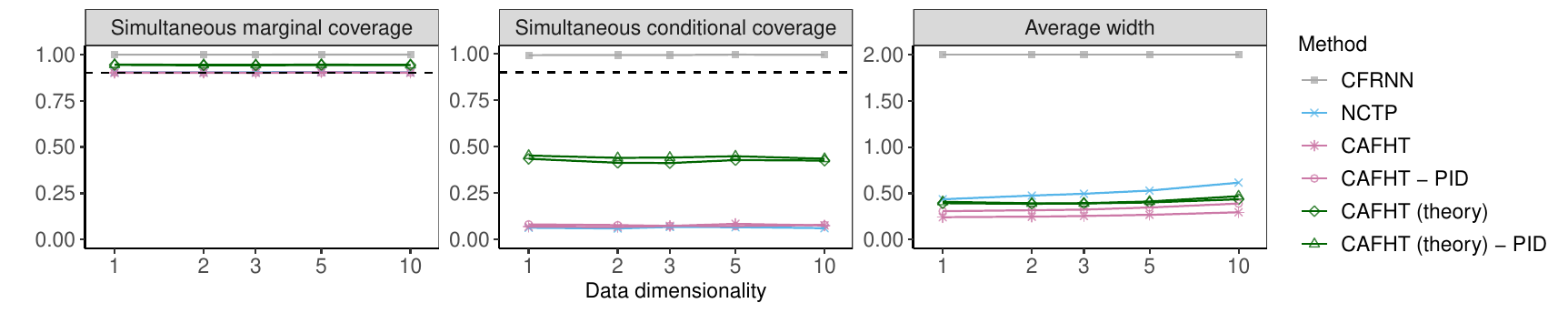}\vspace{-0.5cm}
    \caption{Performance on simulated heterogeneous trajectories of prediction bands constructed by different methods, as a function of the data dimensionality. See Table~\ref{tab:supp_exp_sim_static_ndim_fixed} for detailed results and standard errors.}
    \label{fig:supp_exp_sim_static_ndim_fixed}
\end{figure}

\begin{table}[!htb]
\centering
    \caption{Performance on simulated heterogeneous trajectories of prediction bands constructed by different methods, as a function of the data dimensionality. The red numbers indicate smaller prediction bands or higher conditional coverage. See the corresponding plot in Figure~\ref{fig:supp_exp_sim_static_ndim_fixed}.}
  \label{tab:supp_exp_sim_static_ndim_fixed}
  \input{tables_new/supp_exp_sim_static_fixed_ndim.tex}
\end{table}

\begin{figure}[!htb]
    \centering
    \includegraphics[width=\linewidth]{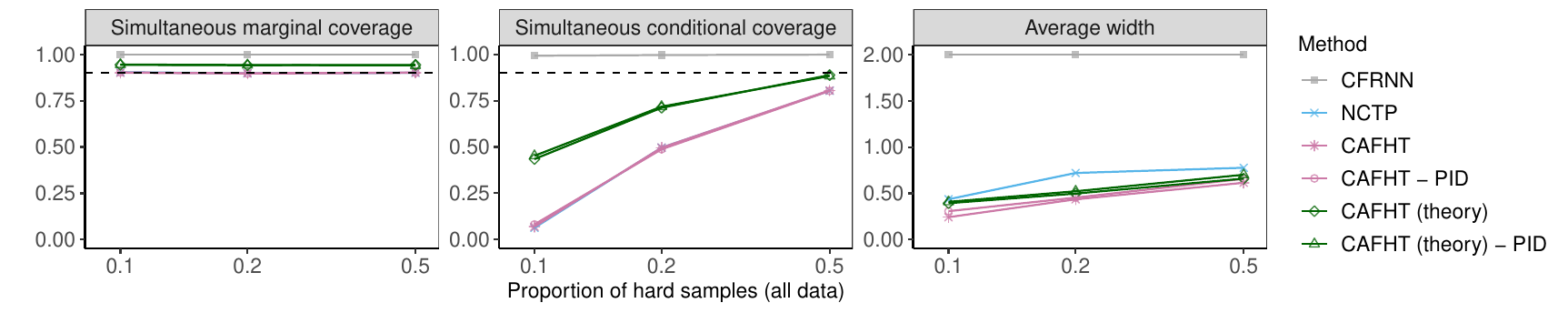}\vspace{-0.5cm}
    \caption{Performance on simulated heterogeneous trajectories of prediction bands constructed by different methods, as a function of the proportion of hard-to-predict trajectories. See Table~\ref{tab:supp_exp_sim_static_delta_fixed} for detailed results and standard errors.}
    \label{fig:supp_exp_sim_static_delta_fixed}
\end{figure}

\begin{table}[!htb]
\centering
    \caption{Performance on simulated heterogeneous trajectories of prediction bands constructed by different methods, as a function of the proportion of hard-to-predict trajectories. The red numbers indicate smaller prediction bands or higher conditional coverage. See the corresponding plot in Figure~\ref{fig:supp_exp_sim_static_delta_fixed}.}
  \label{tab:supp_exp_sim_static_delta_fixed}
  \input{tables_new/supp_exp_sim_static_fixed_delta.tex}
\end{table}

\begin{figure}[!htb]
    \centering
    \includegraphics[width=0.75\linewidth]{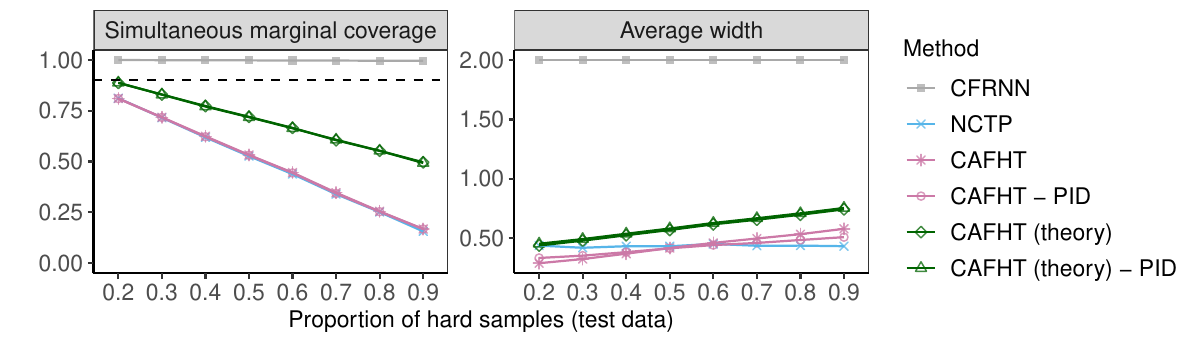}\vspace{-0.5cm}
    \caption{Performance on simulated heterogeneous trajectories of prediction bands constructed by different methods under distributional shift. The results are shown as a function of the proportion of hard-to-predict trajectories in the test data. See Table~\ref{tab:supp_exp_sim_static_delta_test_fixed} for detailed results and standard errors.}
    \label{fig:supp_exp_sim_static_delta_test_fixed}
\end{figure}

\begin{table}[!htb]
    \centering
    \caption{Performance on simulated heterogeneous trajectories of prediction bands constructed by different methods under distributional shift. The results are shown as a function of the proportion of hard-to-predict trajectories in the test data.  The red numbers indicate higher marginal coverage. See the corresponding plot in Figure~\ref{fig:supp_exp_sim_static_delta_test_fixed}.}
  \label{tab:supp_exp_sim_static_delta_test_fixed}
  \input{tables_new/supp_exp_sim_static_fixed_delta_test.tex}
\end{table}

\FloatBarrier

\subsection{Pedestrian Data}
In this subsection, we present the experimental results of the pedestrian data described in~\ref{sec:experiment}. Recall that we preprocess the dataset by adding heteroskedasticity such that $10\%$ of the data are designed to be hard-to-predict by adding a random noise follows $N(0,\sigma_t^2)$, where $\sigma_t^2 \propto t\cdot \text{noise level}$. The easy-to-predict data are added a random noise with $\sigma_t^2 \propto t$. By default, $10\%$ of the trajectories are set to be hard-to-predict.

Similar to the previous section, we first demonstrate the main result by using the CAFHT method with ACI and multiplicative scores as the main method to be compared with the benchmark methods CFRNN and NCTP. The results after adding the dynamic noise profile to the data are presented here for demonstrative purposes.

\subsubsection{Main Results | Comparing CFRNN, NCTP, and the Main Implementation of CAFHT}

Figure~\ref{fig:main_exp_real_noiselevel_dt01} and Table~\ref{tab:main_exp_real_noiselevel_dt01} show the average performance on pedestrian heterogeneous trajectories of prediction bands constructed by different methods, as a function of the noise level. The noise level is varied from 1.5 to 5. All methods achieve 90\% simultaneous marginal coverage. Our method (CAFHT) leads to more informative bands with lower average width and higher conditional coverage.

The results of another experiment in which $20\%$ of the trajectories are hard-to-predict are presented in Figure~\ref{fig:main_exp_real_noise_level} and Table~\ref{tab:main_exp_real_noise_level}. Again, we observe that even though a larger percentage of hard trajectories on the pedestrian data can increase the empirical conditional coverage of all methods, CAFHT maintains clear advantages relative to the baselines.

Additionally, Figure~\ref{fig:main_exp_real_dynamic_multi_ndata} and Table~\ref{tab:main_exp_real_dynamic_multi_ndata} present results for varying numbers of trajectories in the training and calibration sets, from 200 to 1000, with the noise level set at $3$ and the percentage of hard trajectories set to $10\%$. Again, CAFHT outperforms the other benchmarks.

\begin{table}[!htb]
\centering
    \caption{Performance on heterogeneous pedestrian trajectories of conformal prediction bands constructed by different methods, as a function of the noise level. The red numbers indicate smaller prediction bands or higher conditional coverage. $10\%$ of the trajectories are set to be hard-to-predict. See the corresponding plot in Figure~\ref{fig:main_exp_real_noiselevel_dt01}.}
  \label{tab:main_exp_real_noiselevel_dt01}
  \input{tables_new/main_exp_real_dynamic_noise_level_dt.1}
\end{table}

\begin{figure}[!htb]
    \centering
    \includegraphics[width=\linewidth]{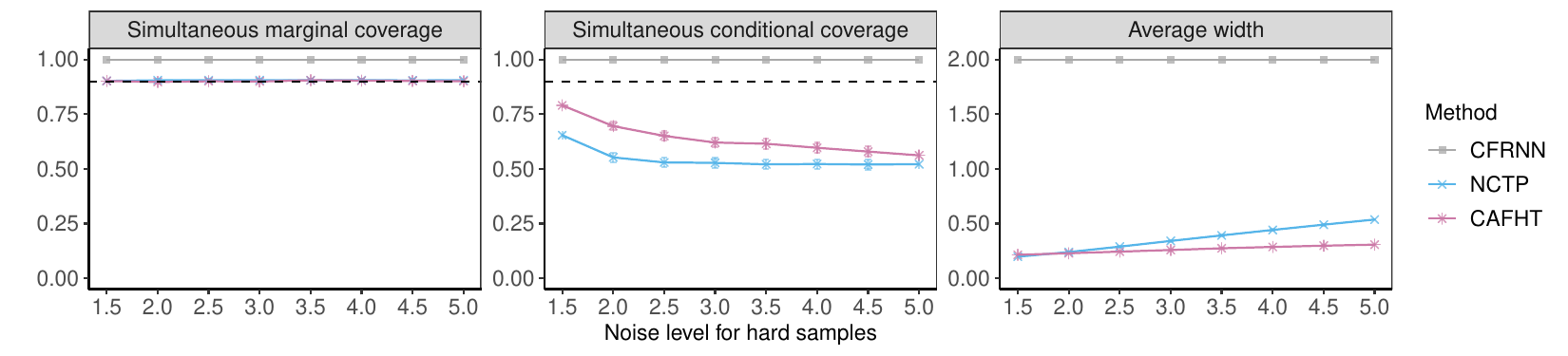}\vspace{-0.5cm}
    \caption{Performance on heterogeneous pedestrian trajectories of conformal prediction bands constructed by different methods, as a function of the noise level. $20\%$ of the trajectories are set to be hard-to-predict.}
    \label{fig:main_exp_real_noise_level}
\end{figure}

\begin{table}[!htb]
\centering
    \caption{Performance on heterogeneous pedestrian trajectories of conformal prediction bands constructed by different methods, as a function of the noise level. The red numbers indicate smaller prediction bands or higher conditional coverage. $20\%$ of the trajectories are set to be hard-to-predict. See the corresponding plot in Figure~\ref{fig:main_exp_real_noise_level}.}
  \label{tab:main_exp_real_noise_level}
  \input{tables_new/main_exp_real_dynamic_noise_level.tex}
\end{table}

\begin{figure}[!htb]
    \centering
    \includegraphics[width=\linewidth]{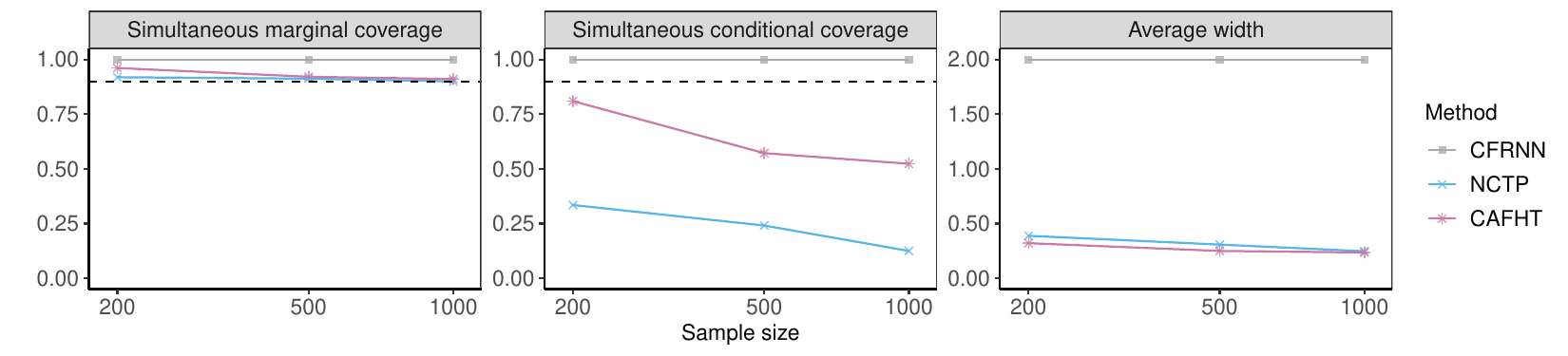}\vspace{-0.5cm}
    \caption{Performance on heterogeneous pedestrian trajectories of conformal prediction bands constructed by different methods, as a function of the total number of training and calibration trajectories.}
    \label{fig:main_exp_real_dynamic_multi_ndata}
\end{figure}

\begin{table}[!htb]
\centering
    \caption{Performance on heterogeneous pedestrian trajectories of conformal prediction bands constructed by different methods, as a function of the total number of training and calibration trajectories. The red numbers indicate smaller prediction bands or higher conditional coverage. See the corresponding plot in Figure~\ref{fig:main_exp_real_dynamic_multi_ndata}.}
  \label{tab:main_exp_real_dynamic_multi_ndata}
  \input{tables_new/main_exp_real_dynamic_multi_ndata.tex}
\end{table}

\FloatBarrier

\subsubsection{Supplementary Results | Comparing Different CAFHT Implementations}
\subsubsection*{CAFHT - multiplicative scores}
The results of CAFHT with multiplicative conformity scores \eqref{eq:nonconf_scores_adap} are first presented in Figures~\ref{fig:supp_exp_real_ndata}--\ref{fig:supp_exp_real_noise_level} and Tables~\ref{tab:supp_exp_real_ndata}--\ref{tab:supp_exp_real_noise_level}.

\begin{figure}[!htb]
    \centering
    \includegraphics[width=\linewidth]{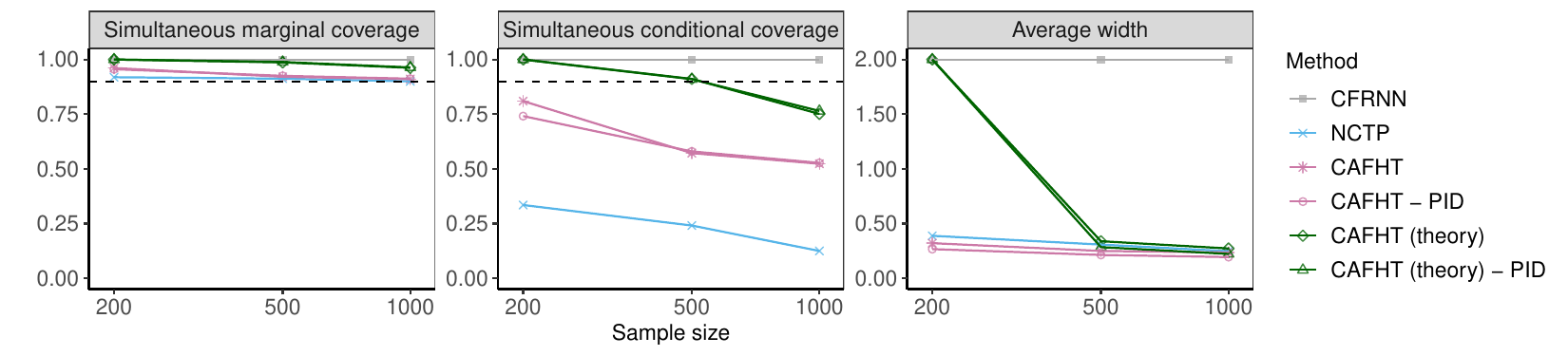}\vspace{-0.5cm}
    \caption{Performance on heterogeneous pedestrian trajectories of conformal prediction bands constructed by different methods, as a function of the total number of training and calibration trajectories.}
    \label{fig:supp_exp_real_ndata}
\end{figure}

\begin{table}[!htb]
\centering
    \caption{Performance on heterogeneous pedestrian trajectories of conformal prediction bands constructed by different methods, as a function of the total number of training and calibration trajectories. The red numbers indicate smaller prediction bands or higher conditional coverage. See the corresponding plot in Figure~\ref{fig:supp_exp_real_ndata}.}
  \label{tab:supp_exp_real_ndata}
  \input{tables_new/supp_exp_real_dynamic_multi_ndata}
\end{table}

\begin{figure}[!htb]
    \centering
    \includegraphics[width=\linewidth]{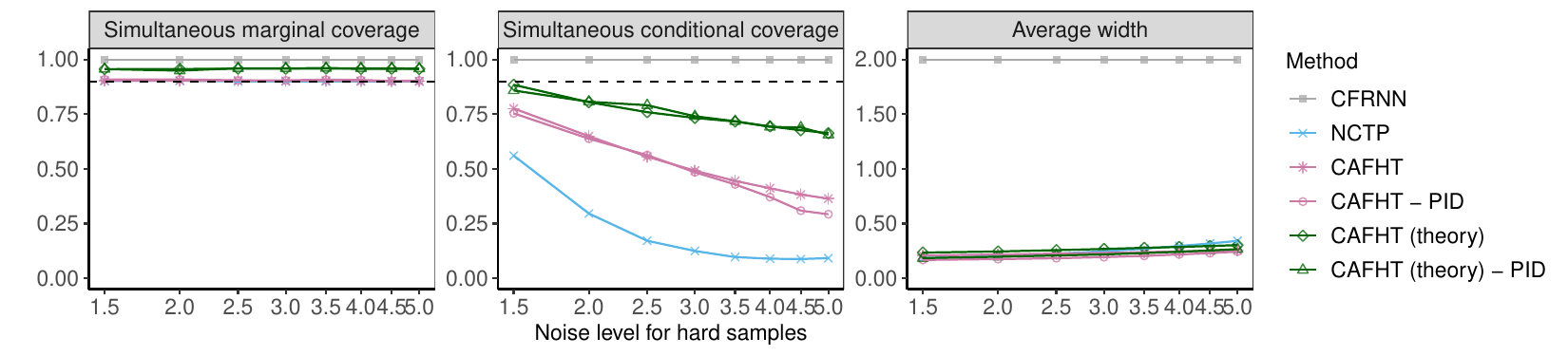}\vspace{-0.5cm}
    \caption{Performance on heterogeneous pedestrian trajectories of conformal prediction bands constructed by different methods, as a function of the noise level.}
    \label{fig:supp_exp_real_noise_level}
\end{figure}

\begin{table}[!htb]
\centering
    \caption{Performance on heterogeneous pedestrian trajectories of conformal prediction bands constructed by different methods, as a function of the noise level. The red numbers indicate smaller prediction bands or higher conditional coverage. See the corresponding plot in Figure~\ref{fig:supp_exp_real_noise_level}.}
  \label{tab:supp_exp_real_noise_level}
  \input{tables_new/supp_exp_real_dynamic_multi_noise_level}
\end{table}

\FloatBarrier

\subsubsection*{CAFHT | Additive Scores}
The results of CAFHT with additive conformity scores \eqref{eq:nonconf_scores} are presented in Figures~\ref{fig:supp_exp_real_ndata_fixed}--\ref{fig:supp_exp_real_noise_level_fixed} and Tables~\ref{tab:supp_exp_real_ndata_fixed}--\ref{tab:supp_exp_real_noise_level_fixed}.

\begin{figure}[!htb]
    \centering
    \includegraphics[width=\linewidth]{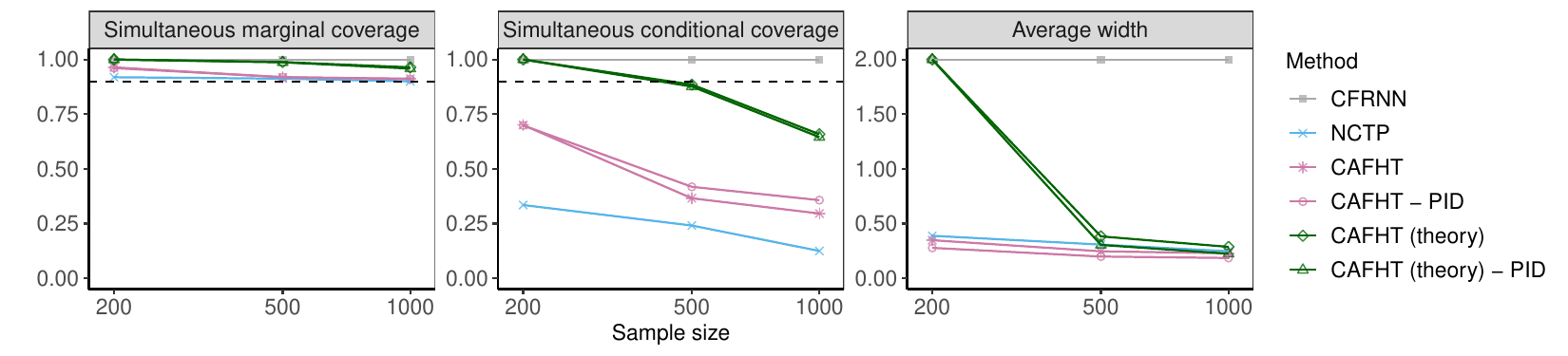}\vspace{-0.5cm}
    \caption{Performance on heterogeneous pedestrian trajectories of conformal prediction bands constructed by different methods, as a function of the total number of training and calibration trajectories.}
    \label{fig:supp_exp_real_ndata_fixed}
\end{figure}

\begin{table}[!htb]
\centering
    \caption{Performance on heterogeneous pedestrian trajectories of conformal prediction bands constructed by different methods, as a function of the total number of training and calibration trajectories. The red numbers indicate smaller prediction bands or higher conditional coverage. See the corresponding plot in Figure~\ref{fig:supp_exp_real_ndata_fixed}.}
  \label{tab:supp_exp_real_ndata_fixed}
  \input{tables_new/supp_exp_real_dynamic_fixed_ndata}
\end{table}

\begin{figure}[!htb]
    \centering
    \includegraphics[width=\linewidth]{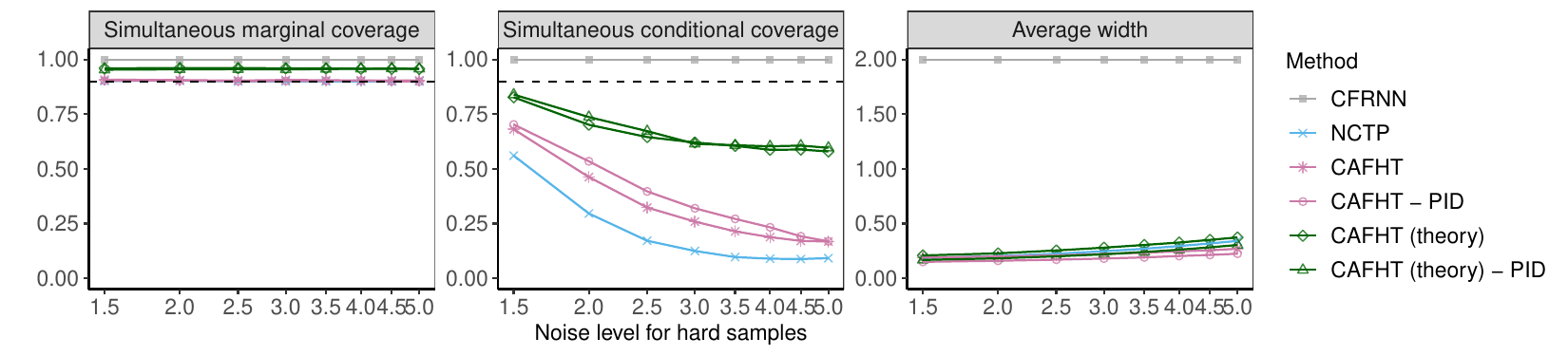}\vspace{-0.5cm}
    \caption{Performance on heterogeneous pedestrian trajectories of conformal prediction bands constructed by different methods, as a function of the noise level.}
    \label{fig:supp_exp_real_noise_level_fixed}
\end{figure}

\begin{table}[!htb]
\centering
    \caption{Performance on heterogeneous pedestrian trajectories of conformal prediction bands constructed by different methods, as a function of the noise level. The red numbers indicate smaller prediction bands or higher conditional coverage. See the corresponding plot in Figure~\ref{fig:supp_exp_real_noise_level_fixed}.}
  \label{tab:supp_exp_real_noise_level_fixed}
  \input{tables_new/supp_exp_real_dynamic_fixed_noise_level}
\end{table}

\FloatBarrier

\subsection{Comparing ACI and CAFHT}\label{app:more_experiments_aci_vs_cafht}

As previously explained, the objectives of CAFHT and ACI are very different. CAFHT leverages information from multiple exchangeable trajectories to construct prediction bands for a trajectories from the same population, ensuring simultaneous coverage as per Equation~\eqref{eq:simu_coverage}.
By contrast, ACI constructs an online prediction band for a single trajectory, aiming to achieve long-term average coverage.

Consider a motion planning scenario: CAFHT's objective is to maintain most vehicles within their predicted zones throughout a specified period, ensuring a high probability of reaching their destinations without incident. On the other hand, ACI aims for asymptotic average coverage, which tolerates frequent, albeit temporary, deviations from the predicted path for each vehicle. In practical terms, this means each vehicle might exit and re-enter the ACI-predicted region numerous times, spending about 90\% of the time within the prediction band on average. If exiting these regions could lead to severe accidents, CAFHT's approach would ensure that 90\% (or any pre-specified percentage) of vehicles safely arrive at their destinations, whereas ACI's approach could potentially result in none of the vehicles reaching their destinations safely.

This concept is demonstrated in Figure~\ref{fig:aci_vs_cafht}, which contrasts the prediction bands created using ACI and CAFHT for two pedestrian trajectories. The figure clearly shows that ACI does not fully encompass the trajectories, thus failing to meet our objective of achieving simultaneous coverage.

\begin{figure}[!htb]
    \centering
    \includegraphics[width=0.7\linewidth]{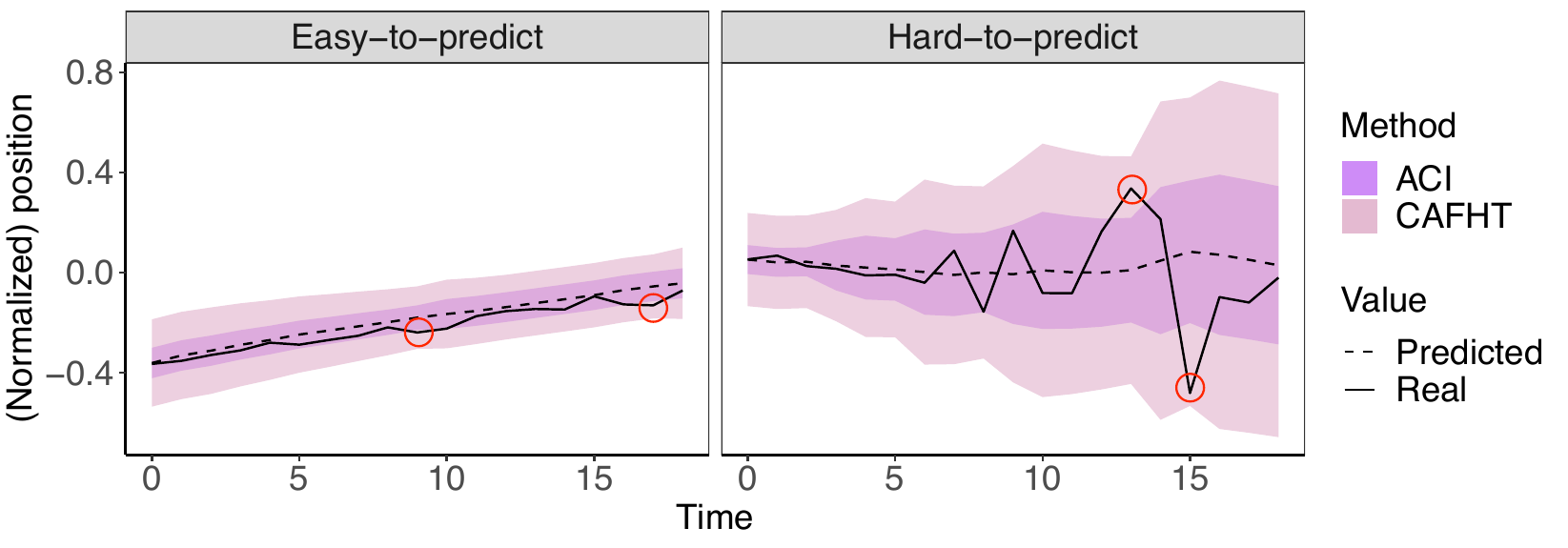}\vspace{-0.5cm}
    \caption{Forecasting bands constructed using ACI and CAFHT, for the heterogeneous pedestrian trajectories. Red circles indicate scenarios where the real values exceed ACI prediction bands.}
    \label{fig:aci_vs_cafht}
\end{figure}

Figure~\ref{fig:main_aci_vs_cafht} and Table~\ref{tab:main_aci_vs_cafht} provide additional insight, reporting on experiments that replicate the analysis from Figure~\ref{fig:main_exp_sim_ndata} but include results from ACI. Unlike CAFHT and the two other benchmark methods, ACI is unable to meet the simultaneous marginal coverage guarantee.

\begin{figure}[!htb]
    \centering
    \includegraphics[width=\linewidth]{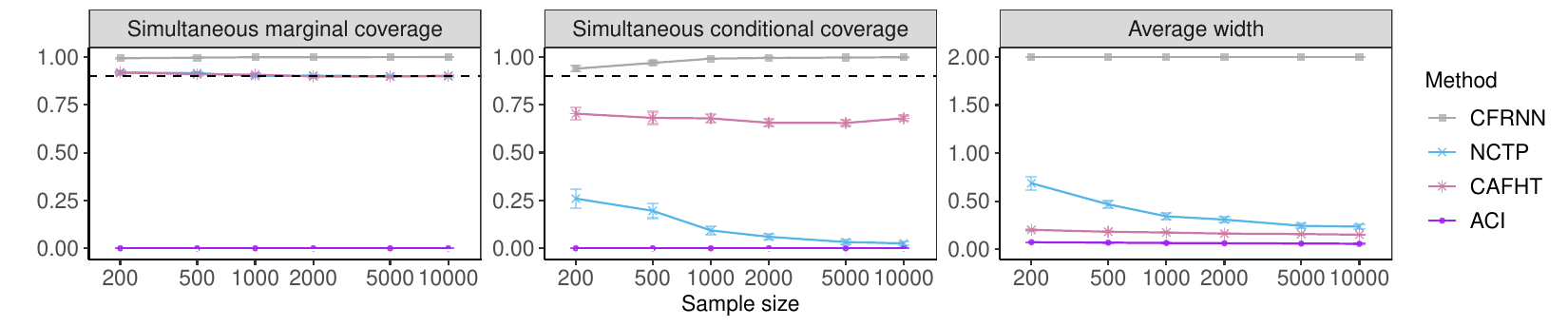}\vspace{-0.5cm}
    \caption{Performance on simulated heterogeneous trajectories of prediction bands constructed by different methods, as a function of the total number of training and calibration trajectories.}
    \label{fig:main_aci_vs_cafht}
\end{figure}

\begin{table}[!htb]
\centering
    \caption{Performance on heterogeneous pedestrian trajectories of conformal prediction bands constructed by different methods, as a function of the noise level. The red numbers indicate smaller prediction bands or higher conditional coverage. See the corresponding plot in Figure~\ref{fig:main_aci_vs_cafht}.}
  \label{tab:main_aci_vs_cafht}
  \input{tables_new/main_exp_sim_dynamic_ndata_aci}
\end{table}

\FloatBarrier
\subsection{Comparing the Multiplicative Scores and the Additive Scores}\label{app:more_experiments_multi_vs_add}

The CAFHT prediction bands are constructed in two stages: initially, the underlying ACI bands are established, followed by adding a conformalized correction term. When corrections employ the additive nonconformity scores specified in Equation~\eqref{eq:nonconf_score_msteps_fixed}, the heterogeneity of the trajectories is managed exclusively via the ACI bands. In contrast, using the multiplicative scores from Equation~\eqref{eq:nonconf_score_msteps_multi} allows both components to adapt to heteroscedasticity, though the primary adjustment is through ACI.

More precisely, multiplicative scores impose proportionally wider margins of error on broader ACI intervals than on narrower ones. Hence, while adjusting for heteroscedasticity is chiefly the responsibility of ACI, the use of multiplicative scores arises from the recognition that ACI residuals might still display heteroscedastic traits. In such instances, multiplicative scores are better suited to capturing this variability than their additive counterparts.

As shown in Figure~\ref{fig:multi_vs_fixed}, additive scores impose a constant correction term (the empirical quantile $\hat{Q}$) on ACI intervals. In comparison, multiplicative scores adjust the ACI bands by a non-constant amount (the empirical quantile $\hat{Q}$ multiplied by the size of the ACI bands).

In line with established conformal inference methodologies, we prefer to delegate the more complex ``adaptability'' functions to the underlying machine learning model (in this case, the forecaster integrated with ACI). The next phase of conformalization simply involves a clear, straightforward adjustment to secure the simultaneous marginal coverage guarantee. Nonetheless, future developments might introduce more intricate scoring designs, potentially enhancing empirical performance but at the expense of simplicity in the methodology.

\begin{figure}[!htb]
    \centering
    \includegraphics[width=0.85\linewidth]{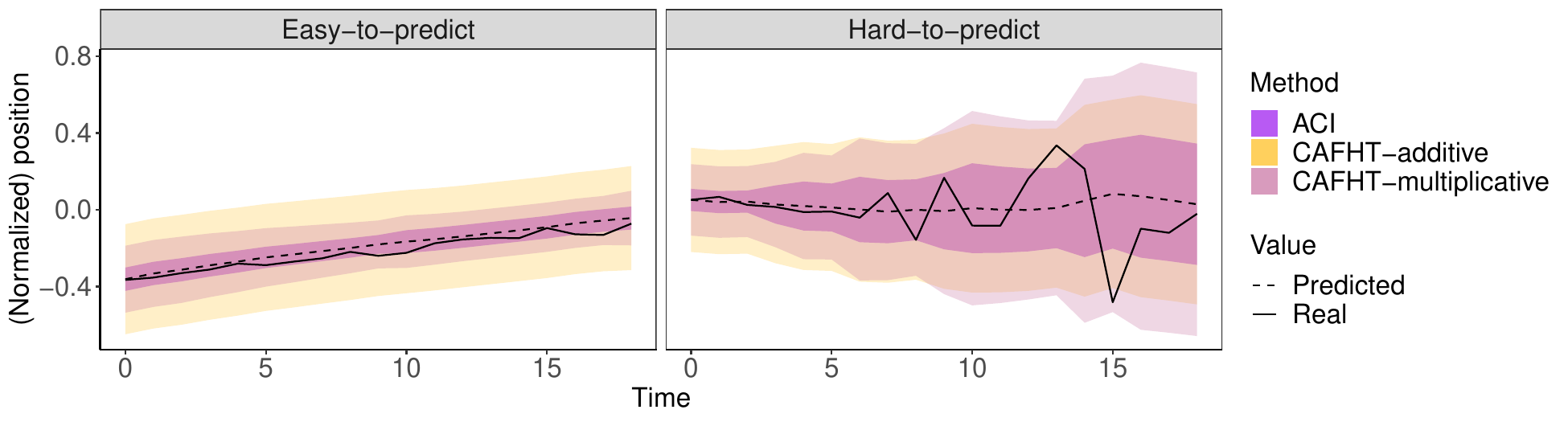}\vspace{-0.5cm}
    \caption{Forecasting bands constructed using ACI and CAFHT, for the heterogeneous pedestrian trajectories. Red circles indicate scenarios where the real values exceed ACI prediction bands.}
    \label{fig:multi_vs_fixed}
\end{figure}

Figure~\ref{fig:multi_vs_fixed_full} presents a side-by-side comparison of CAFHT using multiplicative scores, CAFHT using additive scores, NCTP, and CFRNN for two example heterogeneous pedestrian trajectories. The plot demonstrates CAFHT, with both scoring approaches, effectively manages heterogeneity, though the multiplicative scores offer superior adaptability. In contrast, NCTP and CFRNN do not adjust to heterogeneity. The empirical quantile $\hat{Q}$ for this experiment is recorded in Table~\ref{tab:multi_vs_fixed_full}.

\begin{figure}[!htb]
    \centering
    \includegraphics[width=0.85\linewidth]{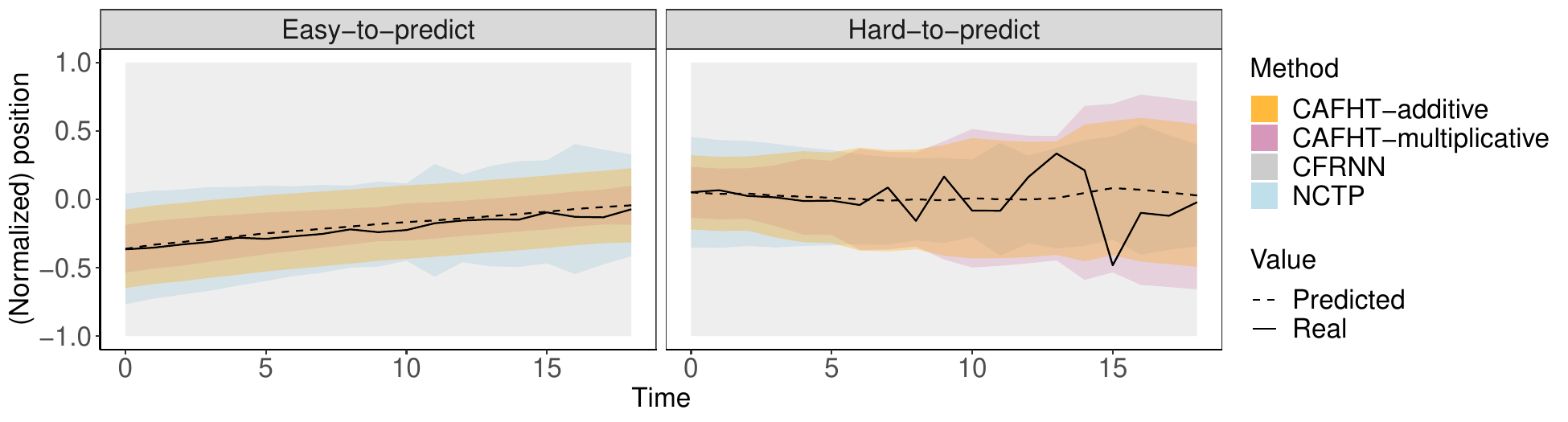}\vspace{-0.5cm}
    \caption{Forecasting bands constructed using different methods for the heterogeneous pedestrian trajectories.}
    \label{fig:multi_vs_fixed_full}
\end{figure}

\begin{table}[!htb]
\centering
    \caption{Empirical quantiles obtained from each method in Figure~\ref{fig:multi_vs_fixed_full}.}
  \label{tab:multi_vs_fixed_full}
  \input{tables_new/multi_vs_fixed_full}
\end{table}

\FloatBarrier

\subsection{Prediction Bands at Higher Coverage Levels}\label{app:more_experiments_higher_cov}

This section presents the results of additional experiments conducted using $\alpha = 0.05$ and $\alpha = 0.01$, seeking simultaneous coverage at the 95\% level and the 99\% level respectively. We continue to use the main implementation of the CAFHT method, which utilizes multiplicative scores based on the ACI algorithm and optimizes the learning rate through data splitting.

When higher coverage levels are employed, it is necessary to increase the number of samples in the calibration data to ensure that the adjusted empirical quantile level $(1-\alpha)(1-1/|\mathcal{D}_{\text{cal}}|)$ remains below 1. In our experiments, we cap the adjusted level at 1 whenever it exceeds this value.

\subsubsection*{Experiments with 95\% coverage level}

Figure~\ref{fig:main_exp_sim_ndata_alpha005} shows that all methods achieve 95\% simultaneous marginal coverage. Our method (CAFHT) leads to more informative bands with lower average width and higher conditional coverage.

\begin{figure}[!htb]
    \centering
    \includegraphics[width=\linewidth]{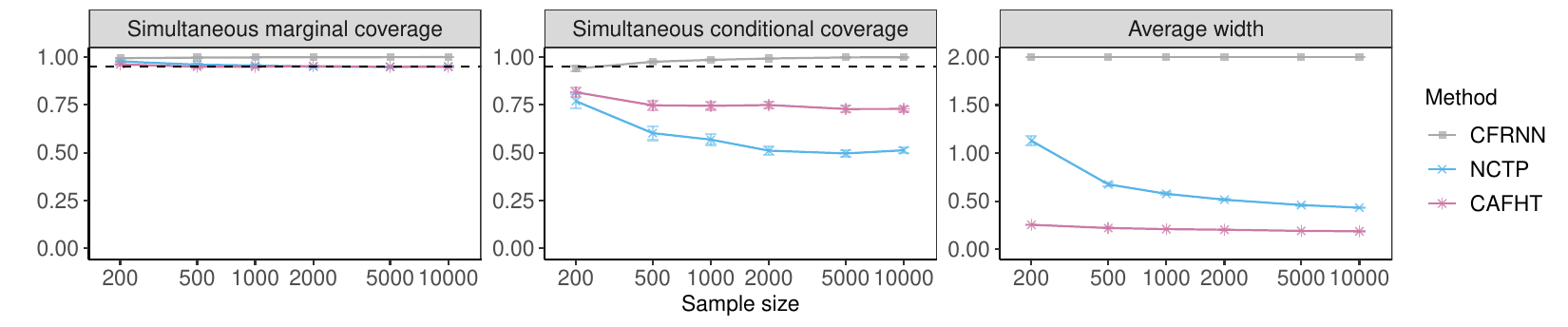}\vspace{-0.5cm}
    \caption{Performance on simulated heterogeneous trajectories of prediction bands constructed by different methods, as a function of the total number of training and calibration trajectories (25\% are randomly assigned to calibration set). The target simultaneous marginal coverage level is 95\%. See Table~\ref{tab:main_exp_sim_ndata_alpha005} for detailed results and standard errors.}
    \label{fig:main_exp_sim_ndata_alpha005}
\end{figure}

\begin{table}[!htb]
\centering
    \caption{Performance on simulated heterogeneous trajectories of prediction bands constructed by different methods, as a function of the total number of training and calibration trajectories. The red numbers indicate smaller prediction bands or higher conditional coverage. Target simultaneous marginal coverage level is 95\%. See corresponding plot in Figure~\ref{fig:main_exp_sim_ndata_alpha005}. }
  \label{tab:main_exp_sim_ndata_alpha005}
  \input{tables_new/main_exp_sim_dynamic_ndata95}
\end{table}

\FloatBarrier
\subsubsection*{Experiments with 99\% coverage level}

When seeking a 99\% coverage level, using a relatively small sample size will result in the adjusted level being very close to, or equal to, 1, mapping the empirical quantile $\hat{Q}$ to infinity. Consequently, as depicted in Figure~\ref{fig:main_exp_sim_ndata_alpha001}, NCTP and CAFHT generate regions that span the entire space $[-1,1]$ when the sample size is small. CAFHT requires slightly more calibration samples than NCTP to produce practically useful prediction regions when employing a data-splitting strategy. When the prediction bands are practically useful, CAFHT tends to produce narrower and thus more informative results compared to NCTP while maintaining similarly high conditional coverage.

\begin{figure}[!htb]
    \centering
    \includegraphics[width=\linewidth]{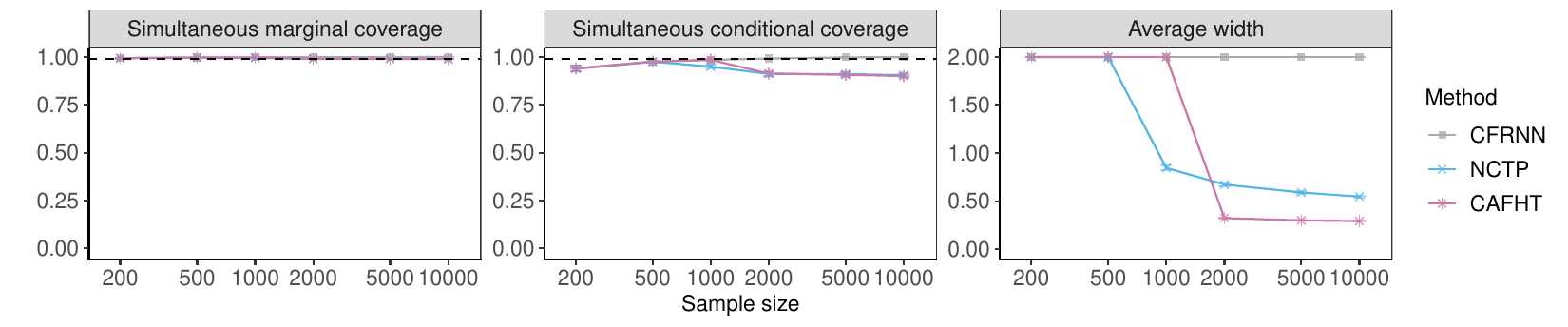}\vspace{-0.5cm}
    \caption{Performance on simulated heterogeneous trajectories of prediction bands constructed by different methods, as a function of the total number of training and calibration trajectories. Target simultaneous marginal coverage level is 99\%. See Table~\ref{tab:main_exp_sim_ndata_alpha001} for detailed results and standard errors.}
    \label{fig:main_exp_sim_ndata_alpha001}
\end{figure}

\begin{table}[!htb]
\centering
    \caption{Performance on simulated heterogeneous trajectories of prediction bands constructed by different methods, as a function of the total number of training and calibration trajectories. The red numbers indicate smaller prediction bands or higher conditional coverage. Target simultaneous marginal coverage level is 99\%. See corresponding plot in Figure~\ref{fig:main_exp_sim_ndata_alpha001}.}
  \label{tab:main_exp_sim_ndata_alpha001}
  \input{tables_new/main_exp_sim_dynamic_ndata99}
\end{table}

\FloatBarrier
\subsection{Comparisons with CopulaCPTS}\label{app:more_experiments_add_benchmark}

For completeness, this subsection presents empirical results that compare our CAFHT method with CopulaCPTS \citep{sun2023copula}, which uses the copula of prediction residuals across the entire horizon. Similar to NCTP, CopulaCPTS struggles with adaptability under heteroscedastic conditions and is thus expected to achieve conditional coverage akin to that of NCTP. We conducted these comparisons using synthetic AR data with dynamic profiles. The CopulaCPTS method is considered suitable only for situations with ample calibration data, as noted by \citet{sun2023copula}. Accordingly, we performed experiments with large datasets of 5,000 and 10,000 trajectories, designating 25\% randomly for calibration and the remainder for training. The findings were validated against an additional 100 independently generated test trajectories.

The results, displayed in Figures~\ref{fig:main_exp_sim_dynamic_horizon_copula}--\ref{fig:main_exp_sim_dynamic_delta_copula} and Tables~\ref{tab:main_exp_sim_dynamic_horizon_copula}--\ref{tab:main_exp_sim_dynamic_delta_copula}, confirm the anticipated outcomes. CopulaCPTS delivers results comparable to NCTP, while CAFHT surpasses the CopulaCPTS baseline by producing narrower prediction bands and achieving higher conditional coverage.

\begin{figure}[!htb]
    \centering
    \includegraphics[width=\linewidth]{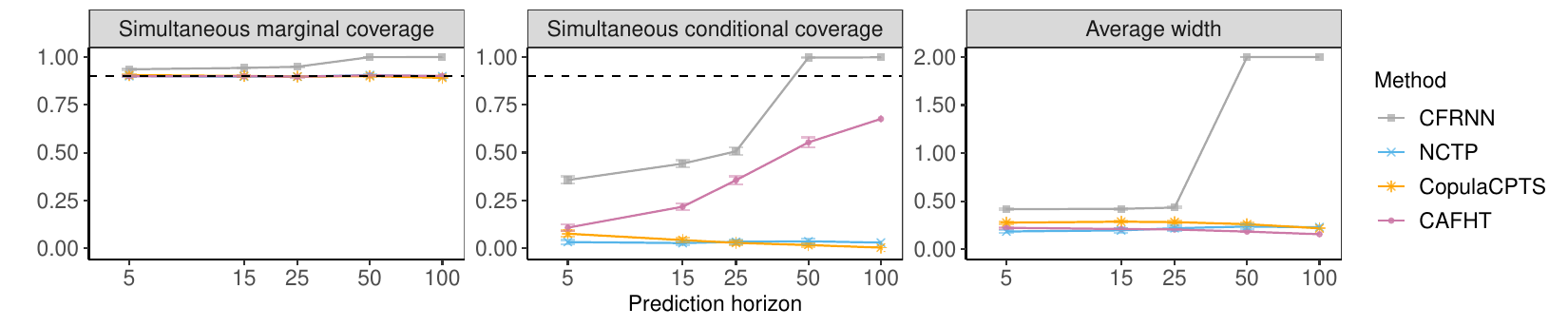}\vspace{-0.5cm}
    \caption{Performance on simulated heterogeneous trajectories of prediction bands constructed by different methods, as a function of the prediction horizon. See Table~\ref{tab:main_exp_sim_dynamic_horizon_copula} for detailed results and standard errors.}
    \label{fig:main_exp_sim_dynamic_horizon_copula}
\end{figure}

\begin{table}[!htb]
\centering
    \caption{Performance on simulated heterogeneous trajectories of prediction bands constructed by different methods, as a function of prediction horizon. The red numbers indicate smaller prediction bands or higher conditional coverage. See corresponding plot in Figure~\ref{fig:main_exp_sim_dynamic_horizon_copula}.}
  \label{tab:main_exp_sim_dynamic_horizon_copula}
  \input{tables_new/main_exp_sim_dynamic_horizon_copula}
\end{table}

\begin{figure}[!htb]
    \centering
    \includegraphics[width=\linewidth]{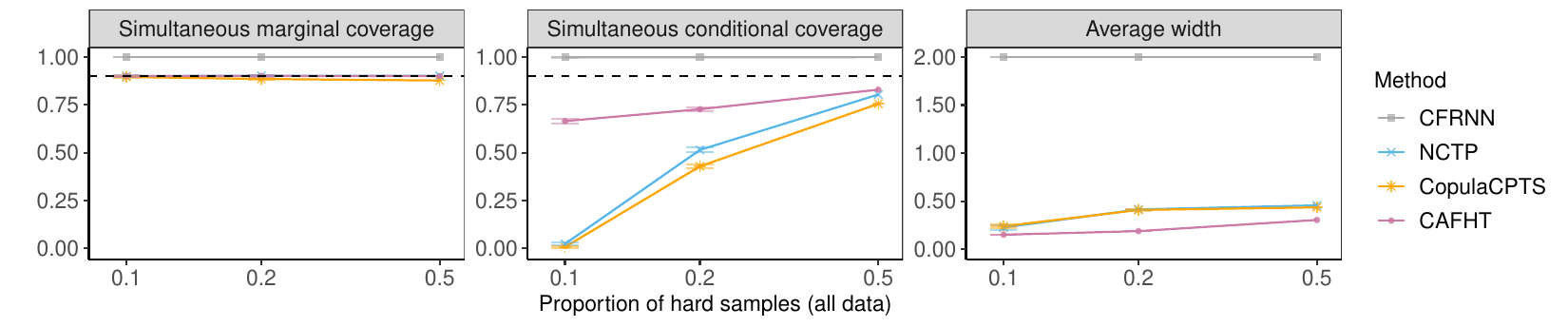}\vspace{-0.5cm}
    \caption{Performance on simulated heterogeneous trajectories of prediction bands constructed by different methods, as a function of the proportion of hard-to-predict trajectories. See Table~\ref{tab:main_exp_sim_dynamic_delta_copula} for detailed results and standard errors.}
    \label{fig:main_exp_sim_dynamic_delta_copula}
\end{figure}

\begin{table}[!htb]
\centering
    \caption{Performance on simulated heterogeneous trajectories of prediction bands constructed by different methods, as a function of the proportion of hard-to-predict trajectories. The red numbers indicate smaller prediction bands or higher conditional coverage. See corresponding plot in Figure~\ref{fig:main_exp_sim_dynamic_delta_copula}.}
  \label{tab:main_exp_sim_dynamic_delta_copula}
  \input{tables_new/main_exp_sim_dynamic_delta_copula}
\end{table}

\FloatBarrier


\section{Extension to Multi-Step Forecasting}\label{app:multi_step}

This section extends CAFHT to the multiple-step-ahead forecasting setting. Similar to section~\ref{sec:notations}, consider a data set containing $n$ observations of trajectories of length $T + 1$, namely $\mathcal{D}:= \{\bm{Y}^{(1)},\hdots,\bm{Y}^{(n)}\}$.
For $i \in [n] := \{1,\dots,n\}$, the array $\bm{Y}^{(i)} = (Y_0^{(i)}, \dots,Y_{T}^{(i)})$ represents $T+1$ observations of some $d$-dimensional vector $Y_{t}^{(i)} = (Y_{t,1}^{(i)}, \ldots, Y_{t,d}^{(i)})\in \mathbb{R}^d$, measured at distinct time steps $t \in \{0,\dots, T+1\}$. Let $g$ denote a trainable trajectory predictor that can make $H$-steps-ahead forecasts.

Consider a new trajectory $\bm{Y}^{(n+1)}$ sampled exchangeably with $\mathcal{D}$. Given the initial position $Y_0^{(n+1)}$, at every time $t$ for $t \in \{1, \dots, T\}$, the real value $Y_t^{(n+1)}$ is revealed, and we aim to construct prediction regions $(\hat{C}_{t}^1(\bm{Y}^{(n+1)}), \dots, \hat{C}_{t}^H(\bm{Y}^{(n+1)}))$ for $(Y_{t+1}^{(n+1)}, \dots, Y_{t+H}^{(n+1)})$ using the predictions $(\hat{Y}_{t+1}^{(n+1)}, \dots, \hat{Y}_{t+H}^{(n+1)})$ made by $\hat{g}$.

Let $\hat{C}_{t}^\tau(\bm{Y}^{(n+1)})$ represent the $\tau$-th-step-ahead prediction band for $Y^{(n+1)}_{t+\tau}$ output at time $t$ from the CAFHT method.
We aim to achieve the marginal simultaneous coverage, similar to Equation~\eqref{eq:simu_coverage}:
\begin{equation}\label{eq:simu_coverage_multi}
    \mathbb{P}\left[ Y_{t+\tau}^{(n+1)} \in \hat{C}_{t}^\tau(\bm{Y}^{(n+1)}), \; \forall t \in [T], \; \forall \tau \in[H] \right] \geq 1-\alpha.
\end{equation}

Similar to the one-step-ahead setting, we first initialize the adaptive prediction bands by extending the original ACI method to leverage the information of multi-step-ahead forecasting and to construct a multi-step-ahead prediction band. After that, we will calibrate the initialized adaptive bands and perform data-driven parameter selection.

\subsection{Multi-Step-Ahead ACI}

In this section, we explain how to extend the original one-step-ahead ACI to produce multi-steps-ahead prediction regions. 
Although this approach is intuitive, it may be possible to improve it in the future.

Consider a similar online setting as in \citet{gibbs2021adaptive}, where one observes covariate-response pairs $\{(X_t, Y_t)\}_{t\in \mathbb{N}} \subset \mathbb{R}^d \times \mathbb{R}$ in the sequential order. Denote the fitted model that can make $H$ steps ahead predictions as $\hat{g}$. At each time step $t$, assume that we observe pairs up until $\{(X_t, Y_t)\}$ and make $H$ steps ahead forecasts $(\hat{Y}_{t+1}, \dots, \hat{Y}_{t+H})$ for the future values $(Y_{t+1}, \dots, Y_{t+H})$ using $\hat{g}$. To construct the prediction regions for $(Y_{t+1}, \dots, Y_{t+H})$, consider running $H$ many ACI in parallel using the lagged nonconformity scores proposed by \citet{Dixit2023adaptive}.

First, to construct the prediction region for a single time step $Y_{t+\tau}$ in the future for any $\tau \in [H]$, we compute the lagged nonconformity score, defined as:
\begin{equation}\label{eq:lagged_nonconf_scores}
    S_t^\tau(X_t, y) = \| y - \hat{g}(X_t)\| = \| y - \hat{Y}^\tau_{t}\|.
\end{equation}
Intuitively, this measures the distance between $y$ and the prediction for $Y_{t+\tau}$ made at the current time. Then, the standard split conformal prediction approach to construct the prediction region for $Y_{t+\tau}$ at miscoverage level $\alpha$ would become $\hat{C}_t^\tau(\alpha) = \{y : S_t^\tau(X_t, y) \leq \hat{Q}(1-\alpha) \}$, where $\hat{Q}(1-\alpha) = \inf \{ s: (|\mathcal{D}_{\text{cal}}|^{-1} \sum_{(X_r, Y_r)\in \mathcal{D}_{\text{cal}}}\mathbbm{1}_{\{ S_{r-\tau}^\tau(X_{r-\tau}, Y_r)\leq s\} } ) \geq 1-\alpha \}$. To incorporate the core idea of ACI to continuously adapt the potential distribution changes within the time series, we run the following modified $\alpha$-update rule:
\begin{equation}\label{eq:alpha_multi}
    \alpha_{t+1}^\tau = \alpha^\tau_t + \gamma^\tau (\alpha - \text{err}_t^\tau),
\end{equation}
where
\begin{equation}\label{eq:err_multi}
    \text{err}_t^\tau= \begin{cases}
    1, & \text{ if } Y_t \notin \hat{C}^{\text{ACI},\tau}_{t-\tau}(\alpha_{t-1}^\tau), \\
    0, & \text{ otherwise}.
\end{cases}
\end{equation}
Above, $\gamma^\tau$ denotes the step size, which can be different for each $\tau$, and $\hat{C}^{\text{ACI}, \tau}_{t-\tau}(\alpha_{t-1}^\tau)$ is the prediction region constructed for $Y_t$ at $\tau$ steps ago as $\hat{C}^{\text{ACI}, \tau}_{t-\tau}(\alpha_{t-1}^\tau)= \{y : S_{t-\tau}^\tau(X_{t-\tau}, y) \leq \hat{Q}_{t-\tau}(1-\alpha_{t-1})\}$.
Equivalently,
$$\hat{C}^{\text{ACI},\tau}_{t-\tau}(\alpha_{t-1}) = [\hat{\ell}^{\text{ACI},\tau}_{t-\tau}, \hat{u}^{\text{ACI},\tau}_{t-\tau}] = [\hat{Y}_{t-\tau}^\tau - \hat{Q}_{t-\tau}(1-\alpha_{t-1}), \hat{Y}_{t-\tau}^\tau+\hat{Q}_{t-\tau}(1-\alpha_{t-1})].$$

The prediction region of $Y_{t+\tau}$ is then formed by:
\begin{equation}\label{eq:pred_region_multi}
    \hat{C}^{\text{ACI}, \tau}_{t}(\alpha_{t+1}^\tau) = [\hat{Y}^\tau_t - \hat{Q}_t(1-\alpha_{t+1}), \hat{Y}^\tau_t + \hat{Q}_t(1-\alpha_{t+1})].
\end{equation}

To construct multiple steps ahead prediction regions of $(Y_{t+1},\dots, Y_{t+H})$ at time $t$, we run the above procedure for every $\tau \in [H]$, and form the prediction regions $(\hat{C}^{\text{ACI}, 1}_{t}(\alpha_{t+1}^1), \dots, \hat{C}^{\text{ACI}, H}_{t}(\alpha_{t+1}^H))$; see Algorithm~\ref{alg:ACI_multi}.

\begin{algorithm}[!htb]
    \caption{Multi-step-ahead ACI}
    \label{alg:ACI_multi}
    \begin{algorithmic} [1]
        \STATE \textbf{Input}: A pre-trained forecaster $\hat{g}$ producing $H$-step-ahead predictions;
        current time $t$;
        time trajectory with observed past values $(Y_{1}, \dots, Y_{t-1})$.
        \STATE Observe the true value at current time $Y_t$.
        \STATE Make $H$-step-ahead predictions $(\hat{Y}_{t+1}, \dots, \hat{Y}_{t+H})$ for $(Y_{t+1},\dots, Y_{t+H})$.
        \FOR{$\tau \in [H]$}
        \STATE Evaluate $\text{err}_t^\tau$ using Equation~\eqref{eq:err_multi}.
        \STATE Update $\alpha^\tau_{t+1}$ using Equation~\eqref{eq:alpha_multi}.
        \STATE Construct prediction region $\hat{C}^{\text{ACI}, \tau}_{t}(\alpha_{t+1}^\tau)$ for $Y_{t+\tau}$ using Equation~\eqref{eq:pred_region_multi}.
        \ENDFOR
        \STATE \textbf{Output}: Online multi-steps-ahead prediction regions $(\hat{C}^{\text{ACI}, 1}_{t}(\alpha_{t+1}^1), \dots, \hat{C}^{\text{ACI}, H}_{t}(\alpha_{t+1}^H))$.
\end{algorithmic}
\end{algorithm}

\subsection{Calibrating the Adaptive Prediction Bands}

In the previous section, we discussed how to form multiple steps ahead prediction regions using ACI at every time $t$. We now proceed to calibrate these regions to achieve simultaneous coverage guarantee~\eqref{eq:simu_coverage_multi}. For simplicity, we start by taking the learning rate $\gamma^\tau$ as fixed and constant for all $\tau \in [H]$.

Different from the one-step-ahead setting, with multi-step-ahead ACI, at every time $t$ we can construct $H$ prediction regions for the following $H$ values. As we move on to observe the next trajectory value, we can update the future prediction regions using the more recent information. In fact, at every $t$, we will have $H-1$ different prediction regions, separately constructed from $H-1, H-2, \dots, 1$ steps ago, denoted as $\hat{C}^{\text{ACI}, H}_{t-H}, \dots, \hat{C}^{\text{ACI}, 1}_{t-1}$. To perform calibration, we need to summarize the information obtained from those into a single region, which we will explain next.

For any $\tau \in [H]$, let $\hat{C}^{\text{ACI}, \tau}_{t-\tau}(\bm{Y}^{(i)}, \gamma) = [\hat{\ell}^{\text{ACI}, \tau}_{t-\tau}(\bm{Y}^{(i)}, \gamma), \hat{u}^{\text{ACI}, \tau}_{t-\tau}(\bm{Y}^{(i)}, \gamma)]$ denote the prediction band for $Y_{t}$ constructed at $\tau$ steps ago with learning rate $\gamma$. For each calibration trajectory $i \in \mathcal{D}_{\text{cal}}$, CAFHT evaluates the nonconformity score $\hat{\epsilon}_i(\gamma)$ using the following equation:
\begin{align}\label{eq:nonconf_score_msteps_fixed}
    \begin{split}
      \hat{\epsilon}_i (\gamma) := \max_{t \in \{1,\dots, T\}} \Biggr\{ \max \Biggr\{
      & \left[  \max_{\tau \in [H]} \left\{ \hat{\ell}^{\text{ACI},\tau}_{t-\tau}(\bm{Y}^{(i)}, \gamma) \right\} - Y_t^{(i)}\right]_{+}  ,  \left[  Y_t^{(i)} -  \min_{\tau \in [H]} \left\{  \hat{u}^{\text{ACI},\tau}_{t-\tau}(\bm{Y}^{(i)}, \gamma) \right\} \right]_{+} \Biggr\} \Biggr\},
    \end{split}
\end{align}
Intuitively, $\hat{\epsilon}_i (\gamma)$ measures the maximum absolute distance of $Y_t$ from the prediction regions constructed at different historical time steps $\hat{C}^{\text{ACI}, H}_{t-H}, \dots, \hat{C}^{\text{ACI}, 1}_{t-1}$. 

The remaining components of our method then follow the same logic as the one-step-ahead CAFHT. Let $\hat{Q}(1-\alpha, \gamma)$ denote the $\lceil (1-\alpha)(1+|\mathcal{D}_{\text{cal}}|) \rceil$-th smallest value of $\hat{\epsilon}_i (\gamma)$ among $i \in \mathcal{D}_{\text{cal}}$.
At every time step $t\in[T]$, CAFHT constructs $H$-steps ahead prediction bands $\hat{C}^\tau_t(\bm{Y}^{(n+1)},\gamma),\forall\tau \in [H]$ using the following equation:
\begin{equation}\label{eq:predict_bands_msteps_fixed}
\begin{split}
    \hat{C}_t^\tau(\bm{Y}^{(n+1)} ,\gamma)
        & = \bigg[
        \hat{\ell}^{\text{ACI},\tau}_{t}(\bm{Y}^{(n+1)}, \gamma)  - \hat{Q}(1-\alpha, \gamma),
        \hat{u}^{\text{ACI},\tau}_{t}(\bm{Y}^{(n+1)}, \gamma) + \hat{Q}(1-\alpha, \gamma) \bigg].
\end{split}
\end{equation}
The next result establishes finite-sample simultaneous coverage guarantees for this method.

\begin{theorem} \label{theorem:coverage_msteps}
Assume that the calibration trajectories in $\mathcal{D}_{\text{cal}}$ are exchangeable with $\bm{Y}^{(n+1)}$.
Then, for any $\alpha \in (0,1)$, the prediction band output by the multi-step-ahead CAFHT, applied with fixed parameters $\alpha$, $\alpha_{\mathrm{ACI}}$, and $\gamma$, satisfies~\eqref{eq:simu_coverage_multi}.
\end{theorem}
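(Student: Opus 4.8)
The plan is to reduce Theorem~\ref{theorem:coverage_msteps} to the same exchangeability argument already used for Theorem~\ref{theorem:coverage}, by exhibiting the multi-step event in~\eqref{eq:simu_coverage_multi} as a single threshold event on a conformity score that is exchangeable across the calibration trajectories and the test trajectory. Concretely, I would first define the test score $\hat{\epsilon}_{n+1}(\gamma)$ via the same formula~\eqref{eq:nonconf_score_msteps_fixed}, evaluated on $\bm{Y}^{(n+1)}$ using the multi-step-ahead ACI bands $\hat{C}^{\text{ACI},\tau}_{t-\tau}(\bm{Y}^{(n+1)},\gamma)$. The essential observation is that each of these ACI bands, and hence $\hat{\epsilon}_{n+1}(\gamma)$, is a fixed measurable function of the single trajectory $\bm{Y}^{(n+1)}$ together with the pre-trained forecaster $\hat g$ and the fixed parameters $\alpha_{\mathrm{ACI}},\gamma$; it does not depend on the other trajectories. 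Since $\bm{Y}^{(1)},\dots,\bm{Y}^{(n+1)}$ (restricted to $\mathcal{D}_{\text{cal}}\cup\{n+1\}$) are exchangeable and the score is a common function applied trajectory-by-trajectory, the scores $\{\hat{\epsilon}_i(\gamma):i\in\mathcal{D}_{\text{cal}}\}\cup\{\hat{\epsilon}_{n+1}(\gamma)\}$ are exchangeable.

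Next I would verify the key algebraic equivalence: the event $\{Y^{(n+1)}_{t+\tau}\in\hat{C}^\tau_t(\bm{Y}^{(n+1)},\gamma)\ \forall t\in[T],\ \forall\tau\in[H]\}$ holds if and only if $\hat{\epsilon}_{n+1}(\gamma)\le \hat{Q}(1-\alpha,\gamma)$. This is a direct unwinding of the definitions. Reindexing, covering $Y^{(n+1)}_{t+\tau}$ by the band constructed $\tau$ steps before time $t+\tau$ is the same as asking, for every $t'$ and every $\tau$ with $t'-\tau\ge 1$ (say) and $t'\le T$, that $\hat{\ell}^{\text{ACI},\tau}_{t'-\tau}(\bm{Y}^{(n+1)},\gamma)-\hat Q\le Y^{(n+1)}_{t'}\le \hat{u}^{\text{ACI},\tau}_{t'-\tau}(\bm{Y}^{(n+1)},\gamma)+\hat Q$. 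Taking the max over $\tau$ of the lower endpoints and the min over $\tau$ of the upper endpoints on the left/right respectively, and then the max over $t'$, shows the worst violation across all $(t,\tau)$ equals precisely the quantity defined in~\eqref{eq:nonconf_score_msteps_fixed}; requiring all these constraints simultaneously is thus equivalent to $\hat{\epsilon}_{n+1}(\gamma)\le\hat Q(1-\alpha,\gamma)$. Note the non-negativity of $[\cdot]_+$ makes this a clean ``one-sided'' threshold, as in the one-step case.

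Finally, with the score exchangeability established and the event rewritten as a single rank event, I would invoke the standard split-conformal quantile lemma (Lemma~1 in \citet{romano2019conformalized}, exactly as in the proof of Theorem~\ref{theorem:coverage}): since $\hat Q(1-\alpha,\gamma)$ is the $\lceil(1-\alpha)(1+|\mathcal{D}_{\text{cal}}|)\rceil$-th smallest of the calibration scores, $\mathbb{P}[\hat{\epsilon}_{n+1}(\gamma)\le\hat Q(1-\alpha,\gamma)]\ge 1-\alpha$, which yields~\eqref{eq:simu_coverage_multi}. The main obstacle — really the only nontrivial point — is the bookkeeping in the second step: one must be careful about the index ranges (the first few time steps where fewer than $H$ historical predictions exist, and the warm-start scores discussed in Appendix~\ref{sec:app-warmstart}) and confirm that the $\max/\min$-over-$\tau$ aggregation in~\eqref{eq:nonconf_score_msteps_fixed} exactly matches the conjunction of the per-$(t,\tau)$ coverage constraints induced by~\eqref{eq:predict_bands_msteps_fixed}. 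Once that identity is checked, the probabilistic content is identical to the one-step proof and requires no new ideas.
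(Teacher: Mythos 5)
Your proposal is correct and follows essentially the same route as the paper's own proof: reduce the simultaneous multi-step coverage event to the single threshold event $\hat{\epsilon}_{n+1}(\gamma)\le\hat{Q}(1-\alpha,\gamma)$, note that the scores are exchangeable because each is a common function applied trajectory-by-trajectory, and invoke Lemma~1 of \citet{romano2019conformalized}. Your extra attention to the index bookkeeping in the $\max/\min$-over-$\tau$ aggregation is a welcome addition that the paper's proof glosses over, but it introduces no new probabilistic content.
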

\begin{proof}
The proof is very similar to the proof of Theorem~\ref{theorem:coverage}, and it follows directly from the exchangeability of the conformity scores.
Denote $\hat{\epsilon}_{n+1}(\gamma)$ the conformity score of the test trajectory $\bm{Y}^{(t+1)}$ evaluated using Equation~\eqref{eq:nonconf_score_msteps_fixed}. For any fixed $\alpha$ and $\gamma>0$, we have that  $Y_{t+\tau}^{(n+1)} \in \hat{C}^{\tau}_{t}(\bm{Y}^{(n+1)},\gamma) \; \forall \tau \in [H] \; \forall t \in [T]$ if and only if $\hat{\epsilon}_{n+1}(\gamma) \leq \hat{Q}(1-\alpha, \gamma)$, where $\hat{Q}(1-\alpha, \gamma)$ is the $\lceil (1-\alpha)(1+|\mathcal{D}_{\text{cal}}|)\rceil$-th smallest value of $\hat{\epsilon}_i (\gamma)$ for all $i \in \mathcal{D}_{\text{cal}}$. Since the test trajectory is exchangeable with $\mathcal{D}_{\text{cal}}$, its score $\hat{\epsilon}_{n+1}(\gamma)$ is also exchangeable with $\{\hat{\epsilon}_{i}(\gamma), i\in\mathcal{D}_{\text{cal}}\}$. Then by Lemma 1 in \citet{romano2019conformalized}, it follows that $\mathbb{P}(Y_{t+\tau}^{(n+1)} \in \hat{C}^{\tau}_{t}(\bm{Y}^{(n+1)},\gamma) \; \forall \tau \in [H] \; \forall t \in [T] ) = \mathbb{P}(  \hat{\epsilon}_{n+1}(\gamma) \leq \hat{Q}(1-\alpha, \gamma)  )\geq 1-\alpha$.
\end{proof}

\subsection{Data-Driven Parameter Selection}
Similar to the one-step-ahead CAFHT, we can choose the step size parameter $\gamma$ in a data-driven way. For simplicity, we start by selecting among a grid of candidate $\{ \gamma_1, \dots, \gamma_L\}$, but assuming that the step size stays the same for every time step $\tau \in [H]$. Later in the experiments, we discuss using alternative options, such as setting $\gamma$ decaying as $\tau$ increases, which is more intuitive in practice as predictions made longer steps ahead are usually less reliable than the predictions made more recently.

\begin{algorithm}[!htb]
    \caption{Model selection component of multi-steps-ahead CAFHT}
    \label{alg:fixed_msteps_CAFHT_ds-model-selection}
    \begin{algorithmic} [1]
        \STATE \textbf{Input}: A pre-trained forecaster $\hat{g}$ producing H-step-ahead predictions;
        calibration trajectories $\mathcal{D}_{\text{cal}}^1$;
        a grid of candidate learning rates $\{\gamma_1, \dots, \gamma_L\}$.
        \FOR{$\ell \in [L]$}
            \STATE Construct $\hat{C}^{\text{ACI},\tau}_t(\bm{Y}^{(i)}, \gamma_\ell) \;\forall t \in [T], \forall \tau \in [H]$ using Algorithm~\ref{alg:ACI_multi}, for $i \in \mathcal{D}_{\text{cal}}^1$.
            \STATE Evaluate $\hat{\epsilon}_i (\gamma_\ell)$ using~\eqref{eq:nonconf_score_msteps_fixed}, for $i \in \mathcal{D}_{\text{cal}}^1$.
            \STATE Compute $\hat{Q}(1 - \alpha, \gamma_\ell)$, the $(1-\alpha)(1+1/|\mathcal{D}_{\text{cal}}^1|)$-th quantile of $\{ \hat{\epsilon}_i (\gamma_\ell), i \in \mathcal{D}_{\text{cal}}^1\}$.
            \STATE Construct $\hat{C}^\tau_t(\bm{Y}^{(i)}, \gamma_\ell) \; \forall \tau \in [H] \forall t \in [H]$ using~\eqref{eq:predict_bands_msteps_fixed} for $i \in \mathcal{D}_{\text{cal}}^1$.
        \ENDFOR
        \STATE Pick $\hat{\gamma}$ such that,
        \begin{equation}
            \hat{\gamma} := \argmin_{\ell \in [L]} \text{AvgWidth}(\{ C^\tau_t(\bm{Y}^{(i)}, \gamma_\ell)\}_{t\in [T], \tau \in [H]} ).
        \end{equation}
        \STATE \textbf{Output}: Selected learning rate parameter $\hat{\gamma}$.
\end{algorithmic}
\end{algorithm}

\begin{algorithm}[!htb]
    \caption{Multi-step-ahead CAFHT}
    \label{alg:fixed_msteps_CAFHT_ds}
    \begin{algorithmic} [1]
        \STATE \textbf{Input}: A pre-trained forecaster $\hat{g}$ producing multi-step-ahead predictions;
        calibration trajectories $\mathcal{D}_{\text{cal}}$; the initial position $Y_0^{(n+1)}$ of a test trajectory $\bm{Y}^{(n+1)}$;
        the desired nominal level $\alpha \in (0,1)$;
        a grid of candidate learning rates $\{\gamma_1, \dots, \gamma_L\}$.
        \STATE Randomly split $\mathcal{D}_{\text{cal}}$ into $\mathcal{D}_{\text{cal}}^1$ and $\mathcal{D}_{\text{cal}}^2$.
        \STATE Select a learning rate $\hat{\gamma} \in \{\gamma_1, \ldots, \gamma_L\}$, applying Algorithm~\ref{alg:fixed_CAFHT_ds-model-selection} using the trajectory data in $\mathcal{D}_{\text{cal}}^1$.
        \STATE Construct $\hat{C}^{\text{ACI}}(\bm{Y}^{(i)}, \hat{\gamma})$ using ACI, for $i \in \mathcal{D}_{\text{cal}}^2$.
        \STATE Evaluate $\hat{\epsilon}_i(\hat{\gamma})$ using~\eqref{eq:nonconf_score_msteps_fixed}, for $i \in \mathcal{D}_{\text{cal}}^2$.
        \STATE Compute the empirical quantile $\hat{Q}(1-\alpha, \hat{\gamma})$.
        \FOR{$t \in [T]$}
        \STATE Observe the current step $Y_t^{(n+1)}$.
        \STATE Compute $\hat{C}^{\text{ACI},\tau}_{t}(\bm{Y}^{(n+1)}, \hat{\gamma}) \; \forall \tau \in [H]$ with the multi-step-ahead ACI stated in Algorithm~\ref{alg:ACI_multi}, using the past of the test trajectory $(Y_{1}^{(n+1)},\ldots,Y_{t}^{(n+1)})$.
        \STATE Compute prediction bands $\hat{C}_{t}^\tau (\bm{Y}^{(n+1)}, \hat{\gamma} ), \forall \tau \in [H]$ for the next $H$ steps, using~\eqref{eq:predict_bands_msteps_fixed}.
        \ENDFOR
        \STATE \textbf{Output}: Online prediction bands $\hat{C}(\bm{Y}^{(n+1)})$.
\end{algorithmic}
\end{algorithm}

\subsection{Multi-step-ahead CAFHT using Multiplicative Scores}
Similar to the one-step-ahead cases, we can utilize a multiplicative score for the multi-step-ahead settings. This can be simply accomplished by replacing the nonconformity scores defined in~\eqref{eq:nonconf_score_msteps_fixed} with these:
\begin{align}\label{eq:nonconf_score_msteps_multi}
    \begin{split}
      \tilde{\epsilon}_i (\gamma) := \max_{t \in \{1,\dots, T\}} \Biggr\{ \max \Biggr\{
      & \max_{\tau \in [H]} \left\{ \frac{\left[  \hat{\ell}^{\text{ACI},\tau}_{t-\tau}(\bm{Y}^{(i)}, \gamma) - Y_t^{(i)}\right]_{+} }{| \hat{C}_{t-\tau}^{\text{ACI},\tau}(\bm{Y}^{(i)}, \gamma)|} \right \}  ,  \max_{\tau \in [H]} \left\{ \frac{\left[ Y_t^{(i)}- \hat{u}^{\text{ACI},\tau}_{t-\tau}(\bm{Y}^{(i)}, \gamma)\right]_{+} }{| \hat{C}_{t-\tau}^{\text{ACI},\tau}(\bm{Y}^{(i)}, \gamma)|} \right \}  \Biggr\} \Biggr\},
    \end{split}
\end{align}
and the counterpart of Equation~\eqref{eq:predict_bands_msteps_fixed} becomes
\begin{equation}\label{eq:predict_bands_msteps_multi}
\begin{split}
    \tilde{C}_t^\tau(\bm{Y}^{(n+1)} ,\gamma)
         = \bigg[
        & \hat{\ell}^{\text{ACI},\tau}_{t}(\bm{Y}^{(n+1)}, \gamma)  - \hat{Q}(1-\alpha, \gamma) \cdot | \hat{C}_{t}^{\text{ACI},\tau}(\bm{Y}^{(i)}, \gamma)|, \\
        & \hat{u}^{\text{ACI},\tau}_{t}(\bm{Y}^{(n+1)}, \gamma) + \hat{Q}(1-\alpha, \gamma) \cdot | \hat{C}_{t}^{\text{ACI},\tau}(\bm{Y}^{(i)}, \gamma)| \bigg].
\end{split}
\end{equation}

\subsection{Numerical Experiments}
We utilize the same synthetic settings as in Section~\ref{sec:experiment}, but modify the LSTM models so that they can make multiple steps ahead of predictions. Again, we choose the ACI-based multiplicative scores as the main CAFHT method.

\begin{figure*}[!t]
    \centering
    \includegraphics[width=\linewidth]{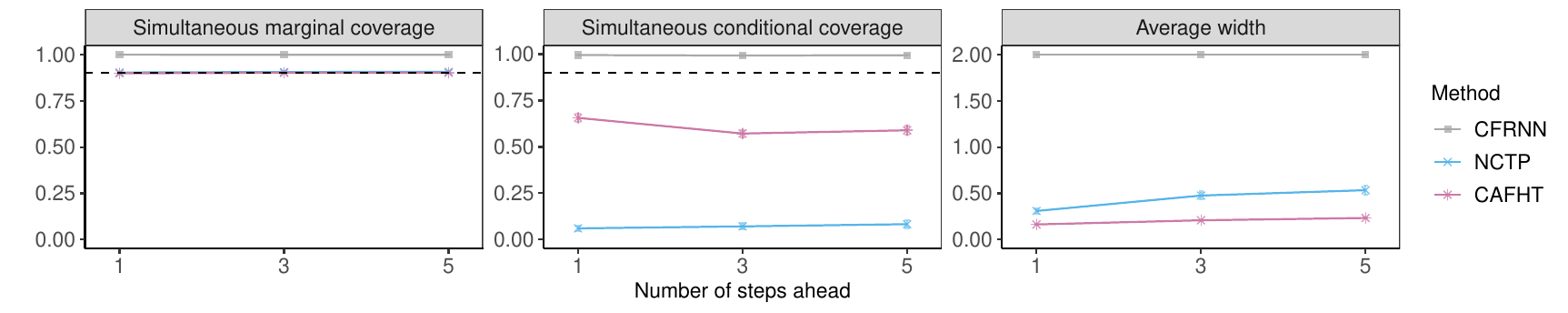}\vspace{-0.5cm}
    \caption{Performance on simulated heterogeneous trajectories of prediction bands constructed by different methods, as a function of the steps-ahead parameter $H$ utilized by the forecaster. Other details are as in Table~\ref{tab:main_exp_sim_msteps_ahead}.}
    \label{fig:main_exp_sim_msteps_ahead}
\end{figure*}

Figure~\ref{fig:main_exp_sim_msteps_ahead} summarizes the performance of the three methods as a function of the number of steps ahead predictions made by the forecaster, which is varied from $1$ to $5$. When number of steps is equal to $1$, we recover the one-step-ahead CAFHT results. In each case, $75\%$ of the trajectories are used for training and the remaining $25\%$ for calibration. Our method utilizes 50\% of the calibration trajectories to select the ACI learning rate $\gamma$. The results are averaged over 500 test trajectories and 100 independent experiments.

As we can see, all methods attain $90\%$ simultaneous coverage as defined in~\eqref{eq:simu_coverage_multi}. However, our method yields the most efficient results in terms of obtaining the smallest size of the prediction band and higher conditional coverage than the NCTP benchmark. See Table~\ref{tab:main_exp_sim_msteps_ahead} for standard errors.

\begin{table}[!htb]
\centering
    \caption{Performance on simulated heterogeneous trajectories of prediction bands constructed by different methods, as a function of the total number of training and calibration trajectories. The red numbers indicate smaller prediction bands or higher conditional coverage. See the corresponding plot in Figure~\ref{fig:main_exp_sim_msteps_ahead}.}
  \label{tab:main_exp_sim_msteps_ahead}
\input{tables_new/main_exp_sim_dynamic_multi_msteps_ahead}
\end{table}

In another experiment, the steps-ahead parameter is fixed as $H=3$, and the total number of trajectories in the training and calibration sets are varied from 200 to 2000. Again, our method yields the most informative bands.

\begin{figure*}[!t]
    \centering
    \includegraphics[width=\linewidth]{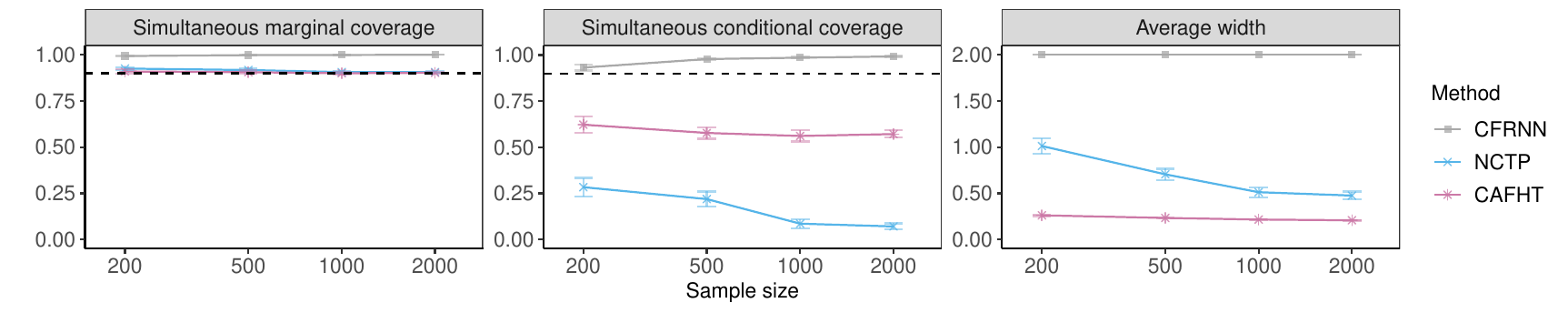}\vspace{-0.5cm}
    \caption{Performance on simulated heterogeneous trajectories of prediction bands constructed by different methods, as a function of the number of trajectories in the training and calibration sets, made by the 3-steps-ahead forecaster. Other details are as in Table~\ref{tab:main_exp_sim_msteps_ndata}.}
    \label{fig:main_exp_sim_msteps_ndata}
\end{figure*}

\begin{table}[!htb]
\centering
    \caption{Performance on simulated heterogeneous trajectories of prediction bands constructed by different methods, as a function of the total number of training and calibration trajectories. The red numbers indicate smaller prediction bands or higher conditional coverage. See the corresponding plot in Figure~\ref{fig:main_exp_sim_msteps_ndata}.}
  \label{tab:main_exp_sim_msteps_ndata}
\input{tables_new/main_exp_sim_dynamic_multi_msteps_ndata}
\end{table}






\end{document}